\newcommand{\cA}{\mathcal{A}}
\newcommand{\cB}{\mathcal{B}}
\newcommand{\cF}{\mathcal{F}}
\newcommand{\cM}{\mathcal{M}}
\newcommand{\cP}{\mathbb{P}}
\newcommand{\cR}{\mathcal{R}}
\newcommand{\cS}{\mathcal{S}}
\newcommand{\cT}{\mathcal{T}}
\newcommand{\E}{\mathbb{E}}
\newcommand{\N}{\mathbb{N}}
\newcommand{\R}{\mathbb{R}}
\newcommand{\V}{\mathbb{V}}
\newcommand{\Y}{\mathbb{Y}}
\DeclareMathOperator{\argmax}{arg\,max}
\theoremstyle{plain}
\newtheorem{theorem}{Theorem}[section]
\newtheorem{proposition}[theorem]{Proposition}
\newtheorem{lemma}[theorem]{Lemma}
\theoremstyle{definition}
\theoremstyle{remark}
\newtheorem{remark}[theorem]{Remark}
\icmltitlerunning{ADDQ: Adaptive Distributional Double $Q$-Learning}
\begin{document}

\twocolumn[
\icmltitle{ADDQ: Adaptive Distributional Double $Q$-Learning}



\icmlsetsymbol{equal}{*}

\begin{icmlauthorlist}
\icmlauthor{Leif D\"oring \orcidlink{0000-0002-4569-5083}}{yyy}
\icmlauthor{Benedikt Wille}{yyy}
\icmlauthor{Maximilian Birr}{yyy}
\icmlauthor{Mihail B\^{\i}rsan}{sch}
\icmlauthor{Martin Slowik \orcidlink{0000-0001-5373-5754}}{yyy}
\end{icmlauthorlist}

\icmlaffiliation{yyy}{Institute of Mathematics, University of Mannheim, Germany}
\icmlaffiliation{sch}{Department of Mathematics and Computer Science, Freie Universität Berlin, Germany}

\icmlcorrespondingauthor{Leif D\"oring}{doering@uni-mannheim.de}

\icmlkeywords{Machine Learning, ICML}

\vskip 0.3in
]



\printAffiliationsAndNotice{}  

\begin{abstract}
  Bias problems in the estimation of $Q$-values are a well-known obstacle that slows down convergence of $Q$-learning and actor-critic methods. One of the reasons of the success of modern RL algorithms is partially a direct or indirect overestimation reduction mechanism.   We propose an easy to implement method built on top of distributional reinforcement learning (DRL) algorithms to deal with the overestimation in a locally adaptive way. Our framework is simple to implement, existing distributional algorithms can be improved with a few lines of code. We provide theoretical evidence and use double $Q$-learning to show how to include locally adaptive overestimation control in existing algorithms. Experiments are provided for tabular, Atari, and MuJoCo environments.
\end{abstract}

\section{Introduction}

A fundamental building block of many modern reinforcement learning (RL) algorithms is Watkins' $Q$-learning (QL) \cite{Watkins1992}. In each round the agent observes a new reward signal and updates the currently estimated state-action function by combining the new reward signal with the best currently estimated action in the next step. Unfortunately, the update rule involves a maximum and maxima suffer both from overestimation bias and function approximation uncertainty. Thus, estimated $Q$ values are initially way too large. Although convergence of $Q$-learning and its variants for tabular cases can be proved rigorously, the convergence can often be seen only after millions of iterations. In the context of $Q$-learning we refer to the seminal paper \cite{Thrun+Schwartz:1993}. The overestimation effect is harmful not only for simple QL and variants with function approximation such as DQN \cite{dqn}, but also for critic estimation in actor-critic methods such as soft actor-critic (SAC) of \cite{sac}. Motivated by statistical approaches to the estimation of the expectation of maxima of random variables the concept of double $Q$-learning (DQL) was introduced in \cite{hasselt2010}. Instead of using one set of random variables two independent sets are used. One is used to detect the maximal index, the other to evaluate the random variable corresponding to the maximal index. For $Q$-learning this translates to keeping track of two copies of the $Q$-matrix that are alternated in order to detect the best action and evaluate the corresponding $Q$-value. DQL and its actor-critic variants reduce the overestimation (see for instance Figure 2 in \cite{fujimoto2018addressing}) and sometimes even underestimate $Q$-values. This, for example, can be seen in a simple chain MDP (Example 6.7 in \cite{Sutton1998} and also \cite{Maxmin} for the underestimation effect in the same example). \cite{fujimoto2018addressing} argue that overestimation should be addressed particularly in state-action regions with high uncertainty. Indirectly, this is taken into account by ensemble methods such as Maxmin QL \cite{Maxmin}. These methods use ensembles of more than two $Q$-estimators. Different ensemble estimators take the minimum, full ensemble averages \cite{averagedQ}, or random ensemble averages \cite{redq}. A related line of research uses uncertainty-based RL, also with the goal of reducing overestimation, see for instance \cite{Wu}, \cite{Ghas}.

There is no rule whether QL, DQL, or any ensemble variant works well for a given environment. Algorithms sometimes perform well and sometimes fail. This article proposes a novel approach. We propose to take QL and (at least) one other method and try to control if the QL update should be replaced or mixed with another underestimating method. The control must be locally adaptive, and the need to manage the bias depends on the local uncertainty (aleatoric and epistemic randomness, function approximation). For an approach similar in spirit to ours, see \cite{Dorka}.
\begin{itemize}
    \item We show theoretically how distributional RL helps the agent identify the need for overestimation control.
    \item As a test case, we combine QL and DQL using a local weighting that we call ADDQ.
    \item Convergence of ADDQ is proved, experiments are performed on tabular, Atari, and MuJoCo environments.
\end{itemize}

\section{$Q$-learning and the overestimation problem}
\subsection{Tabular $Q$-learning}
Let us fix a (discrete) Markov decision model $(\mathcal S,\mathcal A,\mathcal R,p)$, where $\mathcal S$ is a finite state-space, $\mathcal A$ a finite space of allowed actions, $\mathcal R$ the reward space, and $p$ a transition kernel describing the distribution of the reward $r$ and the new state $s'$ when action $a$ is played in state $s$. Given a time-stationary policy $\pi$, a Markov kernel on $\mathcal S\times \mathcal A$, there is a Markov reward process $(S_t,A_t,R_t)$ with transitions 
\begin{align*}
    &\quad \mathbb P^\pi(R_t=r, S_{t+1}=s', A_{t+1}=a'|S_t=s, A_t=a)\\
&=\pi(a'\colon s')p(r,s'\colon s,a).
\end{align*}
The goal of the agent in reinforcement learning is to use  rollouts of the MDP to find a policy that maximizes $Q^\pi(s,a)=\mathbb E^\pi[\sum_{t=0}^\infty \gamma^t R_t|S_0=s,A_0=a]$, the expected discounted reward. The discounting factor $\gamma\in (0,1)$ is fixed. In the discrete setting with $\mathcal S$ and $\mathcal A$ finite it is well-known that optimal stationary policies exist and can be found as greedy policy obtained by the unique solution matrix $Q^*$ to $T^*Q=Q$. The non-linear operator $(T^* Q)(s,a)= r(s,a)+\sum_{s'\in \mathcal S}p(\mathcal R\times \{s'\}:s,a) \gamma\max_{a'\in \mathcal A} Q(s,a')$ is called Bellman's optimality operator. Bellman's optimality operator is a max-norm contraction on the $\mathcal S\times \mathcal A$ matrices. Using Banach's fixed point theorem, the solution can in principle be found by iteratively applying $T^*$ to some initial matrix $Q_0$. The drawback of this approach is the need to know the operator $T^*$, thus having knowledge about the transitions $p$. Using standard stochastic approximation algorithms the fixed point $Q^*$ can be approximated by 
\begin{align*}
     Q(s,a)
    \leftarrow (1-\alpha)Q(s,a)+\alpha\big(r+\gamma \max_{a'}Q(s',a')\big),
\end{align*}
called $Q$-learning (QL). The state-action pairs can be chosen synchronously or asynchronously using rollouts. Typically, to update at $(s,a$) a one-step sample $s',r$ is obtained from $p(\cdot: s,a)$ and the step-sizes $\alpha$ are assumed to satisfy the Robbins-Monro conditions. The exploration (choices of $(s,a)$ to be updated) can be on-policy (using the $Q$-estimates) or off-policy (using a behavior policy). The only requirement is infinite visits for all state-action pairs. The recursively defined matrix-sequence $(Q_t)$ was proved to converge in the tabular setting, see e.g. \cite{Tsitsiklis1994}.

\subsection{The overestimation problem}\label{sec:overestimationproblem}
Even though QL converges to $Q^*$ for the number of updates going to infinity, the convergence is very slow. One of the known sources is the so-called overestimation problem of QL. The algorithm does not provide unbiased estimates of $Q^*(s,a)$. Instead, the estimates $Q_t(s,a)$ tend to overestimate $Q^*(s,a)$. A statistical explanation is based on the simple fact that the point estimator $\max\{\hat X_1,...,\hat X_n\}$ is not an unbiased estimator of $\max\{\mathbb E[X_1],...,\mathbb E[X_n]\}$ but the estimator is positively biased. Thus, the update targets $r+\gamma \max_{a'} Q_{t}(s',a')$ can be seen as overestimating the true Bellman optimality operator at each step. The consequences of overestimation are less obvious than it seems at first sight. Shifting $Q$-values the same amount globally has no effect, neither for $Q$-based exploration purposes nor for best action selection. It is the local difference in overestimation that must be avoided to not confuse the agent. Thus, it is crucial to understand the root causes of overestimation to then mitigate by taking alternative updates to QL if needed.

There is little quantitative understanding of the overestimation; for some rough bounds see \cite{VanHasselt:2011}. We used tools from probability theory to compute estimation bounds for a simple explanatory example. 
%
%
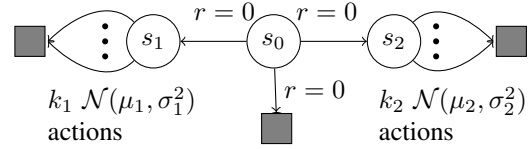
\begin{figure}[h]
\begin{center}
\begin{tikzpicture}[scale=0.8]
    \node[circle, draw] (A) at (6,0) {$s_0$};
    \node[text width=1cm] at (7,0.5) {$r=0$};
    \node[text width=1cm] at (5.3,0.5) {$r=0$};
    \node[text width=2cm] at (3.5,-1.2) {$k_1$ $\mathcal N(\mu_1,\sigma_1^2)$  actions};
    \node[circle, draw] (B) at (4,0) {$s_1$};
    \draw[->] (A) -- (B);
    \filldraw[fill=gray, draw=black] (1.7,-0.25) rectangle +(0.5,0.5);
    \draw[->] (B) to[out=135, in=45]  (2.25,0);
    \draw[->] (B) to[out=-135, in=-45]  (2.25,0);
    \fill[black] (3.2,0.25) circle (0.05);
    \fill[black] (3.2,0) circle (0.05);
    \fill[black] (3.2,-0.25) circle (0.05);
    \node[circle, draw] (C) at (8,0) {$s_2$};
    \draw[->] (C) to[out=45, in=135]  (9.65,0);
    \draw[->] (C) to[out=-45, in=-135]  (9.65,0);
    \draw[->] (A) -- (C);

    \filldraw[fill=gray, draw=black] (9.7,-0.25) rectangle +(0.5,0.5);
  \fill[black] (8.7,0.25) circle (0.05);
    \fill[black] (8.7,0) circle (0.05);
    \fill[black] (8.7,-0.25) circle (0.05);
        \node[text width=2cm] at (9,-1.2) {$k_2$ $\mathcal N(\mu_2,\sigma_2^2)$ actions};
    \node[circle, draw] (D) at (6.05,-1.39) {};
    \filldraw[fill=gray, draw=black] (5.8,-1.7) rectangle +(0.5,0.5);
    \draw[->] (A) -- (D);
    \node[text width=1cm] at (6.8,-0.8) {$r=0$};

\end{tikzpicture}\label{figure71}
\end{center}
\caption{Two-sided bandit MDP, start in $s_0$, gray boxes  terminal}
\end{figure}
The example is intriguing, as it is simple but hard to learn - even more so if one side is replaced by a chain of decisions. Each side gives a reward (for simplicity $0$) followed by a Gaussian reward from one of $k$ actions. QL and DQL can both fail badly (each on one side) for the same parameter configuration. If $\mu_1>0>\mu_2$, then the optimal action in $s_0$ is "left". If $\sigma_2$ (and/or $k_2$) is large compared to $\sigma_1$ (and/or $k_1$) then the overestimated $Q$-values of QL will confuse the agent and lead him to believe that "right" is optimal. Similarly, underestimating can make the agent believe "down" is optimal. Our approach of learning locally to use QL or DQL (or another variant) can mitigate that problem by learning to use QL for one side and DQL for the other. The method is motivated by two theoretical results, a lower bound on overestimation (Proposition \ref{prop2}) and a computation with DRL to connect estimated sample variances and overestimation (Proposition \ref{prop3}). 
%
If step-sizes are chosen in the common way as $\alpha_t(s,a)=\frac{1}{T_{s,a}(t)}$, with $T_{s,a}(t)$ the number of visits at $(s,a)$ up to time $t$, then results on sums and maxima of Gaussian random variables can be used to prove a lower bound on the expected overestimation of the true value $\gamma\mu$.
\begin{proposition}\label{prop2}
If the left side has been explored $Nk_1$ times and the exploration was sufficiently exploratory (see Theorem \ref{thm:1}), then the $Q$-estimate at $(s_0,\text{"left"})$ has bias at least $\frac{\gamma}{\sqrt{\pi\log(2) }}\frac{\sigma_1 \sqrt{\log(k_1)}}{\sqrt{N}}$ and analogously at $(s_0,\text{"right"})$.
\end{proposition}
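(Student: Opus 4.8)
The first step is to unwind what $Q$-learning actually stores on the left branch. Since every action $a$ at $s_1$ leads deterministically into a terminal state with reward $R\sim\mathcal N(\mu_1,\sigma_1^2)$, the update at $(s_1,a)$ reduces to $Q(s_1,a)\leftarrow(1-\alpha)Q(s_1,a)+\alpha R$, and with the harmonic step sizes $\alpha_t(s,a)=1/T_{s,a}(t)$ this is exactly the running empirical mean of the rewards collected at $(s_1,a)$. Hence after $N$ visits to each of the $k_1$ actions, $\widehat Q(s_1,a)=\tfrac1N\sum_{i=1}^N R_i^{(a)}\sim\mathcal N(\mu_1,\sigma_1^2/N)$, and the $k_1$ estimates are mutually independent. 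The update at $(s_0,\text{"left"})$ is again a harmonic running mean, now of its successive targets $\gamma\max_{a}Q(s_1,a)$. Invoking the exploration hypothesis of Theorem~\ref{thm:1} (and, if the $s_0$-visits are interleaved with the $s_1$-visits, the monotonicity of $(\sigma_1,\dots,\sigma_{k_1})\mapsto\mathbb E[\max_a\sigma_a Z_a]$ in each coordinate, which lets one replace partial sample counts $n_a\le N$ by $N$), we may take the relevant estimate to equal $\gamma\max_a\widehat Q(s_1,a)$.

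Given this, the bias is purely a Gaussian-maximum computation. Writing $\widehat Q(s_1,a)-\mu_1=\tfrac{\sigma_1}{\sqrt N}Z_a$ with $Z_1,\dots,Z_{k_1}$ i.i.d.\ standard normal and using $\max_a Q^*(s_1,a)=\mu_1$, $Q^*(s_0,\text{"left"})=\gamma\mu_1$,
\begin{align*}
\mathbb E\big[\widehat Q(s_0,\text{"left"})\big]-\gamma\mu_1
&=\gamma\,\mathbb E\Big[\max_{a}\big(\widehat Q(s_1,a)-\mu_1\big)\Big]\\
&=\frac{\gamma\,\sigma_1}{\sqrt N}\,\mathbb E\Big[\max_{1\le i\le k_1}Z_i\Big].
\end{align*}
So everything reduces to the inequality $\mathbb E[\max_{i\le k}Z_i]\ge \sqrt{\log k}/\sqrt{\pi\log 2}$, which is precisely what produces the stated constant; the $\mathcal N(\mu_2,\sigma_2^2)$ branch is identical with every subscript $1$ replaced by $2$.

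For the Gaussian-maximum bound I would proceed dyadically. For $k=2^m$, run a binary tournament: if $Y_j$ denotes the maximum of $2^j$ i.i.d.\ standard normals and $Y_j'$ an independent copy, then $Y_{j+1}\stackrel{d}{=}\tfrac12(Y_j+Y_j')+\tfrac12|Y_j-Y_j'|$, so $\mathbb E[Y_{j+1}]=\mathbb E[Y_j]+\tfrac12\mathbb E|Y_j-Y_j'|$, and telescoping from $\mathbb E[Y_0]=0$ gives $\mathbb E[\max_{i\le 2^m}Z_i]=\sum_{j=0}^{m-1}\tfrac12\mathbb E|Y_j-Y_j'|$. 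The term $j=0$ equals $\tfrac12\mathbb E|Z-Z'|=1/\sqrt\pi$, which already matches the target at $k=2$ with equality; for every $j$ one needs $\tfrac12\mathbb E|Y_j-Y_j'|\ge \tfrac1{\sqrt\pi}\big(\sqrt{j+1}-\sqrt j\big)$, which follows from the fact that the maximum of $2^j$ Gaussians has fluctuations of order $1/\sqrt j$ — concretely, anchoring thresholds $c_1<c_2$ around the median $m_{2^j}$ of $Y_j$ at distance $\asymp 1/\sqrt{j\log 2}$ gives $\mathbb E[(Y_j'-Y_j)^+]\ge(c_2-c_1)\,\mathbb P(Y_j\le c_1)\,\mathbb P(Y_j\ge c_2)$, and the two probabilities are bounded below by absolute constants via $\Phi(m_{2^j})^{2^j}=\tfrac12$ and Gaussian tail estimates. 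Summing $\sum_{j=0}^{m-1}(\sqrt{j+1}-\sqrt j)=\sqrt m$ yields $\mathbb E[\max_{i\le 2^m}Z_i]\ge\sqrt m/\sqrt\pi=\sqrt{\log k}/\sqrt{\pi\log 2}$, and for general $k$ one uses monotonicity of $k\mapsto\mathbb E[\max_{i\le k}Z_i]$ together with a slightly more careful split of $k$ into two unequal halves. (A one-shot variant also works: fix $t$ with $k\,\mathbb P(Z_1>t)=\log 2$, so $\mathbb P(\max_i Z_i>t)\ge 1-e^{-\log 2}=\tfrac12$, whence $\mathbb E[\max_i Z_i]\ge\tfrac t2-\mathbb E[(\max_i Z_i)^-]$; bound the exponentially small negative part and estimate $t$ from below via $1-\Phi(t)\le\tfrac12 e^{-t^2/2}$.)

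The main obstacle is this last step: extracting precisely the constant $1/\sqrt{\pi\log 2}$ rather than merely $c\sqrt{\log k}$ for some unspecified $c$. The dyadic route makes the base case exact but forces one to control, uniformly in $j$, how tightly the maximum of $2^j$ Gaussians concentrates (lower bounds on $\mathbb E|Y_j-Y_j'|$ of the exact order), and to absorb the non-power-of-two case without degrading the constant. The reduction in the first paragraph — verifying that the schedule really leaves $\gamma\max_a\widehat Q(s_1,a)$ at $(s_0,\text{"left"})$ — is conceptually the only other delicate point, and this is exactly what the "sufficiently exploratory" clause of Theorem~\ref{thm:1} is there to supply.
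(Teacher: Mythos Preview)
Your high-level structure matches the paper's: first argue that the estimate at $(s_0,\text{``left''})$ is (in expectation, bounded below by) $\gamma\,\mathbb E[\max_a \widehat Q(s_1,a)]$ with each $\widehat Q(s_1,a)\sim\mathcal N(\mu_1,\sigma_1^2/N)$ independent, then invoke a lower bound on the expected maximum of $k_1$ i.i.d.\ Gaussians. The two proofs diverge in how each half is executed.

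\textbf{Reduction step.} The paper does not use coordinate-wise monotonicity of $(\sigma_a)\mapsto\mathbb E[\max_a\sigma_aZ_a]$. It writes $Q_{Nk}(s_0,\text{``left''})$ as the running average $\frac{1}{Nk}\sum_{t}\gamma\max_aQ_t(s_1,a)$ and then applies the Sudakov--Fernique comparison inequality (their Lemma~A.1) term by term: with $\gamma^Z_{a,a'}=\sigma_1^2(1/n_a(t)+1/n_{a'}(t))$ and $\gamma^Y_{a,a'}=2\sigma_1^2/N$, the ``sufficiently exploratory'' hypothesis of Theorem~\ref{thm:1} is \emph{exactly} $\gamma^Z_{a,a'}\ge\gamma^Y_{a,a'}$, so Sudakov--Fernique gives $\mathbb E[\max_aQ_t(s_1,a)]\ge\mathbb E[\max_a\tfrac1N\sum_j X_j^a]$. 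Your monotonicity argument is correct and pleasantly elementary, but it needs $n_a(t)\le N$ for every arm, whereas the paper's pairwise condition $1/n_a+1/n_{a'}\ge 2/N$ allows some arms to have $n_a(t)>N$ provided others compensate. So your route proves the proposition under a strictly stronger exploration hypothesis than the one actually assumed; Sudakov--Fernique is what buys the stated generality.

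\textbf{Gaussian maximum bound.} Here the paper does no work at all: it quotes the inequality $\mathbb E[\max_{i\le k}Z_i]\ge\sqrt{\log k}/\sqrt{\pi\log 2}$ directly from van Handel's probability notes. Your dyadic tournament with the telescoping identity $\mathbb E[Y_{j+1}]=\mathbb E[Y_j]+\tfrac12\mathbb E|Y_j-Y_j'|$ and the exact base case $\tfrac12\mathbb E|Z-Z'|=1/\sqrt\pi$ is a nice attempt at a self-contained argument, but (as you acknowledge) pinning down the constant requires uniform lower bounds on $\mathbb E|Y_j-Y_j'|$ that your sketch does not supply. The paper simply sidesteps this by citation.
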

 The lower bound quantifies the idea that uncertainty forces overestimation and the bias only decreases slowly over time. A proof is given in Appendix \ref{appendix:theory}. Since the estimate is relatively tight one gets a feeling for how many reward samples are needed to get sufficiently precise estimates of the $Q$-values so the agent makes the right decision.

There has been considerable interest for the past years to understand the sources of uncertainty in RL. While many sources of uncertainty exist, they are often categorized as either aleatoric or epistemic. Aleatoric uncertainty is model given and cannot be improved by more data or learning. In RL this is uncertainty implied by random variables governing rewards and transitions. Epistemic uncertainty refers to the uncertainty that could potentially be reduced using more data and better algorithms (including better function approximation). Epistemic uncertainty in RL is induced by all random variables to run the learning procedure (exploration, replay buffer, etc.) and function approximation in deep RL. Keeping in mind the different sources of uncertainty is useful in order to identify algorithmic potential for improvement but also theoretical limitations. It is also important to realize that aleatoric and epistemic randomness strongly influence each other. If a reward has large variance (aleatoric uncertainty) then  the estimation with samples creates more epistemic uncertainty. For estimating expectations, this is due to the central limit theorem. 

In deep RL one of the major sources of epistemic uncertainty is function approximation. As in \cite{Thrun+Schwartz:1993} we could add to our analysis independent error-noise modeling the function approximation. If the error noise is assumed Gaussian then our results readily extend, by adding additional noise-variance to our results. Since the modeling assumption as independent noise is rather special we restrain from studying the effect of function approximation on epistemic uncertainty. Instead, we will now show how to use additional information from distributional QL to deal with overestimation in a local way.

\subsection{Tabular distributional $Q$-learning}
To make this article as self-contained as possible we give a minimal  overview on DRL. For a concise treatment we refer to the recent book \cite{drl}. Given a Markov decision model and a stationary policy $\pi$, \cite{rowland18} define the return distribution function as
\[ \eta^{\pi}(s,a)(B):=\mathbb P^{\pi}\Big(\sum_{t=0}^{\infty}\gamma^{t}R_{t}\in B \Big|S_0=s, A_0=a\Big)\]
for $B\in \mathcal B(\mathbb R)$. The expectation over the measure $\eta^\pi$ is the classical state-action value function $Q^\pi$. In contrast to ordinary RL, the target in DRL  is to learn return distributions instead of only return expectations. There have been a lot of theoretical articles on DRL \cite{bellemare2017,dabney2018,rowland18,lyle19,drl,rowland2023,rowland2023analysis} establishing distributional Bellman operators, contractivity, convergence proofs of dynamic programming, etc. It was shown in \cite{bellemare2017,drl} that the return distribution function is the unique solution to $\eta^\pi=T^\pi\eta^\pi,$ where $T^\pi:\mathcal P(\mathbb R)^{\mathcal S\times\mathcal A}\to\mathcal P(\mathbb R)^{\mathcal S\times\mathcal A}$ is the distributional Bellman operator defined as $(T^\pi\eta)(s,a)=\sum_{r,s',a'\in\mathcal R\times\mathcal S\times\mathcal A}b_{r,\gamma}\#\eta(s',a')\pi(a';s')p(s',r;s,a)$ with bootstrap function $b_{r,\gamma}(z)= r+\gamma z$ and  push-forward of measures $f\#\nu(B):=\nu(f^{-1}(B))$. In essence, sample based dynamic programming can also be carried out in a distributional sense replacing the classical Bellman operator by its analogue in the distributional sense. Distributional QL proceeds similarly to classical expectation QL:
\begin{align*}
    \eta(s,a)\leftarrow (1-\alpha) \eta(s,a)+\alpha\big(b_{r,\gamma}\#\eta(s',a^*)\big),
\end{align*}
with $a^*=\operatorname{argmax}_{a'}Q(s',a')$, where $Q(s',a')$ are the expectations of the probability measures $\eta(s',a')$. In order to work algorithmically with DRL parametrizations $\mathcal F$ of measures need to be used. Distributional learning algorithms in practice then work similarly to deep learning algorithms, alternating Bellman operators and function class projection, see Algorithm \ref{alg:CDRL}. 
\begin{algorithm}[h]
\caption{Distributional $Q$-learning update step}
\label{alg:CDRL}
\begin{algorithmic}
\REQUIRE Proxy $\eta$ for $\eta^*$ and pair $(s,a)$ to be updated
\STATE Determine step-size $\alpha$
\STATE Sample reward/next state $(r, s')$
\STATE {\color{gray} \# Compute target}
\STATE $a^* \gets \operatorname{argmax}_a \mathbb{E}_{Z \sim \eta(s',a)} [Z]$
\STATE $\hat{\eta}_* \gets b_{r, \gamma} \#\eta(s',a^*)$\\
\STATE {\color{gray} \# Project target back onto support}
\STATE $\hat{\eta} \gets \Pi_{\mathcal F} (\hat{\eta}_*)$
\STATE {\color{gray} \# Move $\eta(s,a)$ towards the target, for tabular RL e.g.}
\STATE $\eta(s,a) \leftarrow (1 - \alpha)\eta(s,a) + \alpha\hat{\eta}$
\end{algorithmic}
\end{algorithm}
 There are two simple parametrization that have been used frequently. The categorical parametrization (fixing number of atoms with variable weights at fixed locations) and the quantile parametrization (fixing number of atoms with fixed weights but variable locations). For the categorical parametrization a set of \( m \) evenly spaced locations \( \theta_1 < \cdots < \theta_m \) needs to be fixed, the categorical measures are then defined by $\mathcal{F}_{C,m} = \big\{ \sum_{i=1}^{m} p_i\delta_{\theta_i}\, \big|\, p_i \geq 0, \sum_{i=1}^{m} p_i = 1 \big\}.$ That is, measures in $\mathcal F_{C,m}$ are parametrized by an $m$dim probability vector with weights for the $m$ fixed atoms. In contrast, the quantile parametrization $\mathcal{F}_{Q,m} = \Big\{  \sum_{i=1}^{m}\frac{1}{m} \delta_{\theta_i} : \theta_i \in \mathbb{R} \Big\}$ fixes the weights to $\frac 1 m$ with variable atom locations.

 \subsection{Overestimation mitigation using distributional RL}\label{sec:overestimation}
We follow an insight from Proposition \ref{prop2} that large uncertainty, aleatoric (large $\sigma$) as well as epistemic (small $N$ - additional $\sigma$ if function approximation is modeled as Gaussian error), implies large overestimation of $Q$-values. While we skip function approximation for theoretical considerations we are as precise as possible in the simplest tabular settings. The estimate from Proposition \ref{prop2} suggests to replace QL-updates in $(s,a)$ if uncertainty is large (compared to other actions). Unfortunately, in standard QL the agent has no direct access to such information in order to adjust the update rule at $(s,a)$. This is where our idea comes into play. DRL gives the agent exactly the needed information using distribution insight into the return estimate $\eta_t(s,a)$. It is crucial to note that DRL learns random measures as $\eta(s,a)$ is a probability measure that depends on the random samples used in the updates so it has an expectation and a variance that are both random in terms of the random samples. The situation becomes tricky as we are going to take variances of the variances. To avoid confusion an analogy to statistics is used. We speak of sample averages $M$ (resp. sample variances $S^2$) of $\eta(s,a)$ and expectation $\E$ (resp. variance $\mathbb V$) for the integrals against the randomness induced by the probability space behind all random variables.

We now use the bandit MDP from above to explain why the DRL agent has access to uncertainty estimates during the learning process. To allow concrete computations with distributions, we use a particularly simple QL update mechanism (the one also used for estimates in \cite{VanHasselt:2011}). First explore all actions $N$ times, then propagate to $(s_0,\text{"left"})$. In fact, this is nothing but distributional QL with cyclic exploration and target-matrix trick \cite{dqn} as we explain in Appendix \ref{appendix:theory}. The obtained estimate of $\eta^*(s_0,\text{"left"})$ will be denoted by $\hat \eta(s_0,\text{"left"})$.
\begin{proposition}\label{prop3}
    The sample variance of $\hat \eta(s_0,\text{"left"})$, analogously for "right", after $Nk_1$ steps is $\frac{\sigma_1^2 }{N-1}\chi_{N-1}^2$-distributed. The expectation is $\sigma_1^2$, the variance is $\frac{2\sigma_1^4}{N-1}$.
\end{proposition}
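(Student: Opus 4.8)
The plan is to first make the random measure $\hat\eta(s_0,\text{"left"})$ fully explicit, then dispose of the selection bias introduced by the $\operatorname{argmax}$ by invoking the independence of sample mean and sample variance for Gaussian samples, and finally read off the two moments from the $\chi^2$ distribution.

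\emph{Step 1 (unwinding the update).} With no projection ($\mathcal F=\mathcal P(\R)$), step-sizes $\alpha_t=1/T_{s,a}(t)$, and the cyclic-exploration/target-matrix scheme, each left action $j\in\{1,\dots,k_1\}$ is visited exactly $N$ times; since the transition out of $s_1$ is terminal, the distributional update at $(s_1,j)$ telescopes to the empirical measure $\hat\eta(s_1,j)=\frac1N\sum_{i=1}^N\delta_{X_i^{(j)}}$ of the i.i.d.\ rewards $X_1^{(j)},\dots,X_N^{(j)}\sim\mathcal N(\mu_1,\sigma_1^2)$, and the $k_1$ blocks are mutually independent. Propagating through the deterministic zero reward on the edge $s_0\to s_1$ gives $\hat\eta(s_0,\text{"left"})=b_{0,\gamma}\#\hat\eta(s_1,a^*)$ with $a^*=\operatorname{argmax}_j\E_{Z\sim\hat\eta(s_1,j)}[Z]=\operatorname{argmax}_j\bar X^{(j)}$, where $\bar X^{(j)}=\frac1N\sum_i X_i^{(j)}$. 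Consequently the (Bessel-corrected) sample variance of $\hat\eta(s_0,\text{"left"})$ equals, up to the deterministic factor $\gamma^2$ coming from the bootstrap map $z\mapsto\gamma z$ (which the statement absorbs into $\sigma_1^2$), the quantity $S_{(a^*)}^2$, where $S_{(j)}^2:=\frac1{N-1}\sum_{i=1}^N(X_i^{(j)}-\bar X^{(j)})^2$.

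\emph{Step 2 (removing the selection bias).} Within each block $j$, the classical orthogonal (Helmert) decomposition of a Gaussian sample gives $\bar X^{(j)}\perp S_{(j)}^2$ with $\bar X^{(j)}\sim\mathcal N(\mu_1,\sigma_1^2/N)$ and $(N-1)S_{(j)}^2/\sigma_1^2\sim\chi_{N-1}^2$; together with the independence of the blocks, the index $a^*$ — a measurable function of $(\bar X^{(1)},\dots,\bar X^{(k_1)})$ alone, ties being null — is independent of the whole vector $(S_{(1)}^2,\dots,S_{(k_1)}^2)$. Hence, writing $S_{(a^*)}^2=\sum_{j}\mathbf 1\{a^*=j\}\,S_{(j)}^2$ and conditioning on $\{a^*=j\}$, the conditional law of $S_{(j)}^2$ is its unconditional law $\frac{\sigma_1^2}{N-1}\chi_{N-1}^2$, which does not depend on $j$; mixing over $j$ leaves the law unchanged, so $S_{(a^*)}^2\sim\frac{\sigma_1^2}{N-1}\chi_{N-1}^2$. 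I expect this to be the only real obstacle: one has to see that selecting the block with the largest sample mean does not distort that block's sample variance, which is exactly where Gaussianity of the rewards is used and where the argument would break for a general reward law.

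\emph{Step 3 (moments and the right side).} From $\E[\chi_{N-1}^2]=N-1$ and $\V[\chi_{N-1}^2]=2(N-1)$ one gets $\E[S_{(a^*)}^2]=\sigma_1^2$ and $\V[S_{(a^*)}^2]=\frac{\sigma_1^4}{(N-1)^2}\cdot 2(N-1)=\frac{2\sigma_1^4}{N-1}$. The statement for $(s_0,\text{"right"})$ follows verbatim after replacing $(\mu_1,\sigma_1,k_1)$ by $(\mu_2,\sigma_2,k_2)$, using that the right block is independent of the left one. Steps 1 and 3 are routine bookkeeping; all the content sits in Step 2.
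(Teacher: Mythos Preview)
Your proof is correct and follows essentially the same approach as the paper: unwind the distributional update with $1/n$ step-sizes to the empirical measure at $(s_1,j)$, push forward through $b_{0,\gamma}$, and then use the independence of Gaussian sample mean and sample variance (within and across blocks) so that the $\operatorname{argmax}$ selection over means does not alter the law of the selected sample variance. Your Step~2 spells out the conditioning argument a bit more explicitly than the paper does, and you also flag the stray $\gamma^2$ factor from the bootstrap map that both the statement and the paper's proof silently absorb.
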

A proof is given in Appendix \ref{appendix:theory}. It is surprising that the sample variance distribution can be identified explicitly as chi-squared, unlike the sample expectations (maxima of independent Gaussians). This is a consequence of the well-known fact in statistics that sample variances of Gaussians are independent of sample expectations. Thus, if return estimates are Gaussian the max-operation of QL (with respect to sample averages) is only delicate for sample expectations, not for sample variances. We emphasize that in contrast to QL the agent in distributional QL does have access to the $\frac{\sigma^2 \chi_{N}^2}{N}$-distributed sample variance by computing sums of the atoms! Hence, the agent can make use of an unbiased estimate for the aleatoric uncertainty $\sigma^2$ which is conflicted by epistemic uncertainty that decreases with $N$. 

\textbf{Implications from of our theoeretical considerations:} We propose the following locally adaptive overestimation mitigation method. (i) Use distributional QL. (ii) At every update compute the sample variance of the current return estimate $\eta_t(s,a)$. (iii) If the sample variance is large replace the QL update by another update (e.g. DQL or an ensemble update or a mixture). Since "large" has no absolute meaning in RL we will compare variances among all actions in $s$ and reduce according to the relative sample variance. 

\textbf{Exploration and overestimation control with ensembles and double $Q$-learning:} 
Known algorithms that directly try to better estimate the $Q$-values require to set up a particular algorithmic architecture. Most use an ensemble of $Q$-copies which are then combined by taking minima \cite{Maxmin}, averages \cite{EnsembleQ}, or averages with random choices \cite{redq}. Ensemble methods are promising in theory (assuming independent ensembles) but more problematic for deep RL as storage problems force ensembles to be parametrized by the same neural network. The optimal number of copies (a hyperparameter) varies for different environments, some choices work well, others fail. 

Our approach is different. We suggest to use QL when it works well (small sample variance) and another algorithm where QL fails (large sample variance). We could combine QL with ensemble methods but for this article decided to combine QL with DQL a bit in the spirit of weighted DQL \cite{weightedDoubleQ} but with completely different weights - we compare to weighted DQL in Appendix \ref{app:ablation}. 
For DQL \cite{hasselt2010} two copies $Q^A$ and $Q^B$ are stored. The update mechanism is similar to QL where the matrix to be updated is chosen randomly in every step. The main difference is the target used, matrices $Q^A$ and $Q^B$ are flipped:
\begin{align*}
     &\quad Q^{A/B}(s,a)\\
   & \leftarrow(1-\alpha)Q^{A/B}(s,a)+\alpha\big(r+\gamma  Q^{B/A}(s',z^*)\big),
\end{align*}
with $z^*=\max_{a'} Q^{A/B}(s',a')$. Here and in the following we use  $Q^{A/B}$ to allow either the choice of $Q^A$ or $Q^B$. Double $Q$-learning reduces the overestimation strongly but sometimes leads to severe underestimation.

The use of sample variance of estimated return distributions is not new to RL. For bandits the simplest example is the UCB exploration bonus for unknown variances that uses this for exploration. In the context of exploration in RL uncertainty dependent exploration has been applied using distributional RL (see for instance \cite{DRL_exploration}, \cite{DRL_exploration2}). The use of sample variances in distributional RL to locally mitigate the overestimation is new to the best of our knowledge. In order to not mix up effects we stick to the standard exploration choices.

\section{ADDQ: Tabular setting}
Based on the theoretical insight we will now introduce a concrete method. We use DQL-updates as an alternative to QL-updates when the agent expects QL-updates to be harmful (large estimated uncertainty by means of sample variance). The weighted DQL-approach we suggest is readily implemented into existing DRL implementations.

\subsection{Weighted DQL: From SARSA-trick to ADDQ}\label{sec:SARSA}

To derive the algorithm let us recall the SARSA convergence proof of \cite{singh2000} that was also used to prove convergence for DQL \cite{hasselt2010} and variants such as clipped $Q$-learning \cite{fujimoto2018addressing} or Maxmin \cite{Maxmin}. The idea is to add a clever $0$, adding and subtracting what is missing to the QL update, thus, writing the algorithm as QL with a bias. If the bias can be proved to disappear, a comparison to QL implies convergence to $Q^*$:
 \begin{align*}
 Q^{A/B} (s,a) 
  & \leftarrow \overbrace{(1-\alpha) Q^{A/B}(s,a) + \alpha \big( r + \gamma Q^{A/B}(s^\prime,z^\ast)}^{\text{$Q$-learning update}}\\
  &\quad+  \overbrace{\alpha  ( \gamma Q^{B/A}(s^\prime,z^\ast) - \gamma Q^{A/B}(s^\prime,z^\ast) )}^{=:b^{A/B}(s,a)},
 \end{align*}
 with $z^*=\operatorname{argmax} Q^{A/B}(s,a)$. Taking this point of view there are plenty of possibilities to modify DQL to achieve more or less over-/underestimation. As an example, choosing the negative bias-terms 
$b^{A/B}_{\text{clip}}=\alpha \min\{\gamma Q^{B/A}(s^\prime,z^\ast)- \gamma Q^{A/B}(s^\prime,z^\ast),0\}$ yields so-called clipped $Q$-learning introduced as part of TD3 in \cite{fujimoto2018addressing}. As motivated in Section \ref{sec:overestimation} we suggest a locally adaptive overestimation control. The main insight is as follows. One can locally interpolate between QL and DQL by multiplying bias terms with local adaptive weights:
\begin{align*}   
\bar{b}^{A/B}(s,a):=\underbrace{\beta^{A/B}(s,a)}_{\text{new}}\,b^{A/B}(s,a).
\end{align*}
Replacing bias terms $b$ by $\bar b$ generalizes the update. The aggressive choice $\beta=1$ results in QL updates (overestimation), $\beta=0$ results in DQL updates (tendency of underestimation). Choices of $\beta$ suggested in the present article are motivated by the propositions of Sections \ref{sec:overestimationproblem} and \ref{sec:overestimation}. If a lot of uncertainty is present, the algorithm uses large $\beta$, otherwise small $\beta$. The aggressive choices are not necessarily the best, in our experiments below softer choices were more effective. Since overestimation is a priori not problematic for the learning process (if all estimates are equally overestimated the best action does not change, only skewed overestimation slows down the learning) we suggest a choice of $\beta$ that takes into account the local structure, normalizing variances over possible actions.
 
\subsection{New algorithm: locally adaptive distributional double $Q$-learning}\label{sec:tabular}
Following the ideas above we introduce ADDQ, integrating locally adaptive overestimation mitigation in DQL. The pseudo-code given in Algorithm \ref{alg:ADDQ} extends distributional RL pseudo-code to include the double algorithm and adaptive weights. For the tabular target update we follow the measure-mixture approach of \cite{rowland18}, Section 8. Changes to the code for other target updates (for instance gradient steps to minimize KL loss) are straight forward.
\begin{algorithm}[h]
\caption{ADDQ update step}
\label{alg:ADDQ}
\begin{algorithmic}
\REQUIRE Proxies $\eta^A$, $\eta^B$ for $\eta^*$, pair $(s,a)$ to be updated
\STATE Determine step-size $\alpha$
\STATE Sample reward/next state $(r, s')$
\STATE Randomly choose Update(A) or Update(B)
\IF{Update(A)}
\STATE {\color{gray} \# Compute target with locally adapted weight}
\STATE $a^* \gets \operatorname{argmax}_a \mathbb{E}_{Z \sim \eta^A(s',a)} [Z]$
\STATE Determine weight $\beta\in[0,1]$ based on $\eta^A_{old}, \eta^B_{old}$
\STATE $\nu\gets (1-\beta)\eta^B(s',a^*)+\beta\eta^A(s',a^*)$
\STATE $\hat{\eta}_*^A \gets b_{r, \gamma} \#\nu$\\
\STATE {\color{gray} \# Project target back onto support}
\STATE $\hat{\eta}^A \gets \Pi_\mathcal F (\hat{\eta}_*^A)$
\STATE {\color{gray} \# Move $\eta(s,a)$ towards the target, for tabular RL e.g.}
\STATE $\eta^A(s,a) \leftarrow (1 - \alpha)\eta^A(s,a) + \alpha\hat{\eta}^A$
 \ENDIF
\IF{Update(B)}
\STATE Proceed analogously with $A$ and $B$ exchanged\\
\ENDIF
\end{algorithmic}
\end{algorithm}
The algorithm is seen to be a combination of distributional QL and distributional DQL. Keeping $\beta$ constant to $1$ is distributional QL, constant $0$ distributional DQL. The key is to chose $\beta$ dependent on the uncertainty that drives the skewed QL overestimation for different actions. 
Extending arguments from the literature, notably the convergence proof of \cite{rowland18} for categorical $Q$-learning with stochastic approximation target upate and the SARSA trick of \cite{singh2000}, we prove convergence of Algorithm \ref{alg:ADDQ} for categorical measure parametrizations:
\begin{theorem}\label{thm:double categorical control1}
    Given some initial return distribution functions $\eta_0^{A},\eta^{B}_0$ supported within $[\theta_1,\theta_m]$. If
\begin{itemize}
	\item rewards are bounded in $[R_{min},R_{max}]$ and it holds $[\frac{R_{min}}{1-\gamma},\frac{R_{max}}{1-\gamma}]\subseteq[\theta_1,\theta_m],$
    \item step-sizes fulfill the Robbins-Monro conditions and $\eta^A$ or $\eta^B$ are updated randomly,
    \item the sequences $(\beta^A_t)_{t\in\N},(\beta^B_t)_{t\in\N}$ only depend on the past and fulfill $\lim_{t\to\infty} |\beta^A_t-\beta^B_t|=0$ almost surely,
\end{itemize}
then the induced $Q$-values converge almost surely towards $Q^*$. 
If additionally the MDP has a unique optimal policy $\pi^*,$ then $(\eta_t^{A}),(\eta^{B}_t)$ converge almost surely in $\bar\ell_2$ to some  $\eta^*_C\in\cF_{C,m}$ and the greedy policy with respect to $\eta^{*}_C$ is $\pi^*$.
\end{theorem}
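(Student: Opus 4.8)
The overall strategy is to decouple the statement into a scalar stochastic-approximation problem for the induced means and a measure-valued one for the categorical distributions, combining the ``add a clever zero''/SARSA trick of \cite{singh2000,hasselt2010} with the categorical convergence analysis of \cite{rowland18}. As a preliminary step I would prove by induction on $t$ that every $\eta^A_t(s,a),\eta^B_t(s,a)$ is supported in $[\theta_1,\theta_m]$: the bootstrap map $b_{r,\gamma}$ sends $[\theta_1,\theta_m]$ into itself whenever $r\in[R_{min},R_{max}]$ thanks to the inclusion $[\tfrac{R_{min}}{1-\gamma},\tfrac{R_{max}}{1-\gamma}]\subseteq[\theta_1,\theta_m]$, and convex combinations as well as the Cramér projection $\Pi_{\cF}$ onto $\cF_{C,m}$ preserve this support. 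Two consequences are recorded: all induced $Q$-values $Q^{A/B}_t(s,a):=\E_{Z\sim\eta^{A/B}_t(s,a)}[Z]$ stay in a fixed bounded interval (bound $C$), and on measures supported in $[\theta_1,\theta_m]$ the map $\Pi_{\cF}$ is linear and mean-preserving, so $\E[\Pi_{\cF}(b_{r,\gamma}\#\mu)]=r+\gamma\,\E[\mu]$.

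Taking expectations in the ADDQ update and using linearity and mean-preservation of $\Pi_{\cF}\circ(b_{r,\gamma}\#\,\cdot\,)$, the means obey exactly weighted double $Q$-learning, $Q^A_{t+1}(s,a)=(1-\alpha_t)Q^A_t(s,a)+\alpha_t\big(r+\gamma[(1-\beta^A_t)Q^B_t(s',a^*)+\beta^A_t Q^A_t(s',a^*)]\big)$ with $a^*=\operatorname{argmax}_{a'}Q^A_t(s',a')$, and symmetrically for $B$. I would then run the two-stage scheme of \cite{hasselt2010}. First, for $\Delta_t:=Q^A_t-Q^B_t$ write the recursion for a touched pair $(s,a)$, symmetrise over the conditionally fair choice of updating the $A$- or the $B$-copy, and use $\operatorname{argmax}$-optimality to control cross terms (e.g.\ $|Q^B_t(s',\operatorname{argmax}_{a'}Q^A_t(s',a'))-Q^A_t(s',\operatorname{argmax}_{a'}Q^B_t(s',a'))|\le\|\Delta_t\|_\infty$); after a short algebraic cancellation this yields $\|\E[F_t\mid\mathcal P_t]\|_\infty\le\tfrac{1+\gamma}{2}\|\Delta_t\|_\infty+\tfrac{\gamma}{2}\,|\beta^A_t-\beta^B_t|\,\|\Delta_t\|_\infty$. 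Since $\|\Delta_t\|_\infty\le C$, the last summand is a past-measurable sequence $c_t:=\tfrac{\gamma C}{2}|\beta^A_t-\beta^B_t|$ with $c_t\to0$ a.s.\ by hypothesis, the noise is bounded, and the step-sizes are Robbins--Monro with every pair updated infinitely often, so the lemma of \cite{singh2000} gives $\Delta_t\to0$ a.s. Second, the SARSA trick rewrites $Q^A_{t+1}(s,a)$ as an ordinary $Q$-learning update of $Q^A$ plus the bias $b^A_t(s,a)=\alpha_t\gamma(1-\beta^A_t)(Q^B_t(s',a^*)-Q^A_t(s',a^*))$ with $|b^A_t(s,a)|\le\alpha_t\gamma\|\Delta_t\|_\infty$; the recursion for $Q^A_t-Q^*$ then has conditional drift at most $\gamma\|Q^A_t-Q^*\|_\infty+\gamma\|\Delta_t\|_\infty$ with perturbation $\gamma\|\Delta_t\|_\infty\to0$, so the same lemma yields $Q^A_t\to Q^*$ a.s., hence also $Q^B_t\to Q^*$. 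This proves the first assertion.

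Assume now that $\pi^*$ is the unique optimal policy. Finiteness of $\cS\times\cA$ gives an action gap $\delta>0$ for $Q^*$, and since $Q^A_t,Q^B_t\to Q^*$ uniformly a.s.\ there is an a.s.\ finite time $\tau$ past which the greedy action of both $Q^A_t$ and $Q^B_t$ at every state is $\pi^*$. For $t\ge\tau$ the update reads $\eta^A_{t+1}(s,a)=(1-\alpha_t)\eta^A_t(s,a)+\alpha_t\,\Pi_{\cF}\!\big(b_{r,\gamma}\#[(1-\beta^A_t)\eta^B_t(s',\pi^*(s'))+\beta^A_t\eta^A_t(s',\pi^*(s'))]\big)$ and symmetrically, a stochastic approximation recursion whose expected target is the categorical policy-evaluation operator $\Pi_{\cF}T^{\pi^*}$ applied to a $\beta$-weighted convex combination of $\eta^A_t,\eta^B_t$. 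Let $\eta^*_C\in\cF_{C,m}^{\cS\times\cA}$ be the unique fixed point of $\Pi_{\cF}T^{\pi^*}$ (existence, uniqueness and $\sqrt{\gamma}$-contractivity in $\bar\ell_2$ are from \cite{rowland18}); its mean solves the ordinary policy-evaluation equation for $\pi^*$, hence equals $Q^{\pi^*}=Q^*$, so the greedy policy of $\eta^*_C$ is $\pi^*$ by the action gap. Setting $\Theta_t:=\max_{s,a}\max\{\|\eta^A_t(s,a)-\eta^*_C(s,a)\|_{\ell_2},\|\eta^B_t(s,a)-\eta^*_C(s,a)\|_{\ell_2}\}$, convexity of $\|\cdot\|_{\ell_2}$ makes the $\beta$-combination harmless ($\|\nu-\eta^*_C(s',\pi^*(s'))\|_{\ell_2}\le\Theta_t$ for any $\beta$), and since $\mu\mapsto\Pi_{\cF}(b_{r,\gamma}\#\mu)$ contracts the Cramér distance by $\sqrt{\gamma}$, the conditional drift of $\Theta_t$ is at most $\sqrt{\gamma}\,\Theta_t$; the bounded-noise categorical stochastic approximation lemma of \cite{rowland18} then gives $\Theta_t\to0$ a.s., i.e.\ $\eta^A_t,\eta^B_t\to\eta^*_C$ in $\bar\ell_2$.

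The technical heart, and the place where the hypothesis $|\beta^A_t-\beta^B_t|\to0$ is genuinely used, is the estimate for $\Delta_t$: its contraction constant is only $\tfrac{1+\gamma}{2}<1$ after a delicate cancellation between the two bias-modified targets (this is where $\operatorname{argmax}$-optimality enters), and the leftover $O(\gamma|\beta^A_t-\beta^B_t|)$ term would otherwise push the constant above $1$; treating it as a vanishing, past-measurable perturbation $c_t$ and then cascading two applications of the asynchronous stochastic approximation lemma (first for $\Delta_t$, then for $Q^A_t-Q^*$) is what makes the argument close. The support/mean bookkeeping and the distributional step are comparatively routine, riding on \cite{rowland18}.
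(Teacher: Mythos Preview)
Your proposal is correct and mirrors the paper's proof: reduce to weighted double $Q$-learning on the induced means via mean-preservation of $\Pi_C$, cascade two applications of the Singh--Jaakkola--Littman--Szepesv\'ari lemma (first $Q^A_t-Q^B_t\to 0$, where the hypothesis $|\beta^A_t-\beta^B_t|\to 0$ supplies the vanishing perturbation, then $Q^A_t\to Q^*$ via the SARSA/biased-$Q$-learning decomposition), and finish with an action-gap argument plus categorical policy evaluation for the distributional limit. The one organisational difference is in the second step: the paper introduces an auxiliary double policy-evaluation process $\tilde\eta^{A/B}_t$, proves its convergence separately, and then shows $\bar\ell_2(\eta^{A/B}_t,\tilde\eta^{A/B}_t)\to 0$ by coupling, whereas you propose to show $\bar\ell_2(\eta^{A/B}_t,\eta^*_C)\to 0$ directly once the greedy action has stabilised; both routes ultimately rest on the $\ell_2^2$/Hilbert-space expansion (rewriting the measure-valued recursion with modified step-sizes $\zeta_t=2\alpha_t-\alpha_t^2$ so that the scalar lemma applies), a technical manoeuvre you leave implicit in your appeal to \cite{rowland18}.
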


\begin{figure*}[h]\label{fig:tabular2}
\begin{center}
\label{g6}
  \includegraphics[height=5cm]{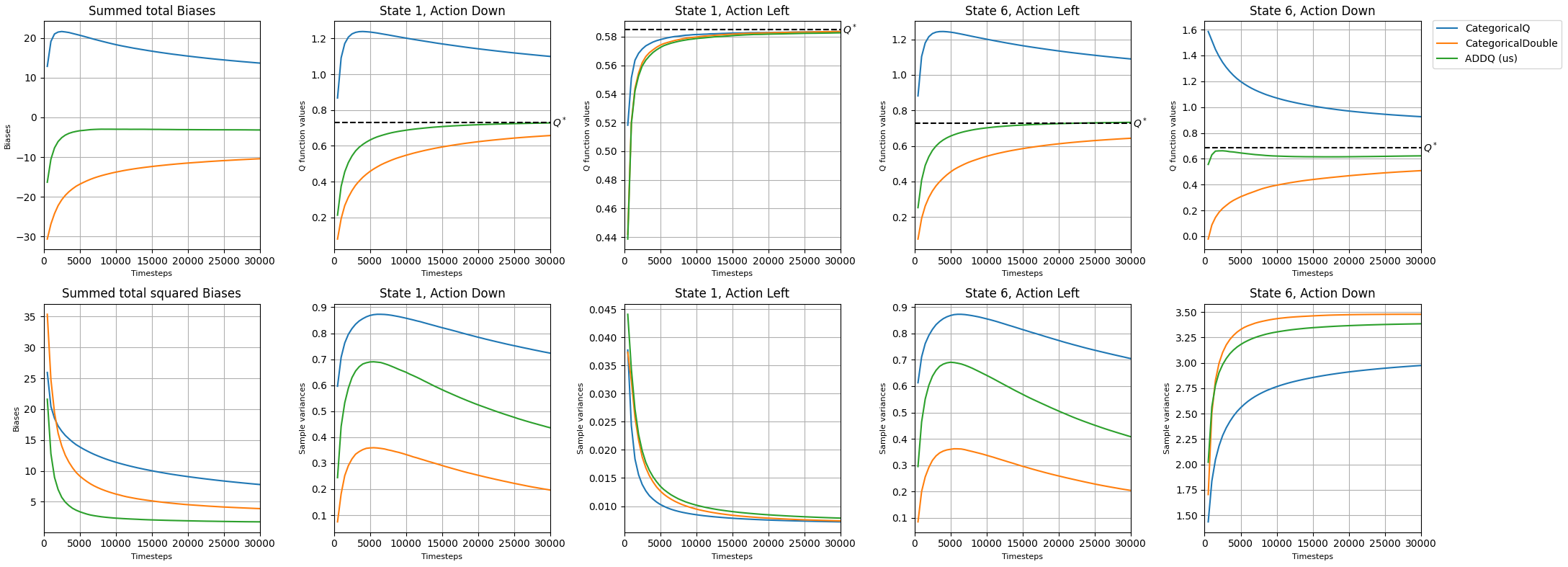}
\end{center}
\vspace{-5mm}
\caption{Grid world. First column: Biases summed over all state-action pairs. Other columns highlight the relation of sample variance (bottom) and the effect on $Q$-value (top) estimation. For more experiments and comparisons to Maxmin, EBQL, REDQ see Appendix \ref{app:gred}.}
\vspace{-4mm}
\end{figure*}

According to Theorem \ref{thm:double categorical control1} symmetric sequences $\beta^A=\beta^B$ that can depend on past distributions $\eta^A, \eta^B$ yield convergence. A particularly simple choice compares the deviations in $\eta^{A/B}$ locally as a local measure for uncertainty.

\textbf{An exemplary choice of adaptive $\beta$:} As motivated earlier QL suffers from overestimation in the presence of uncertainty (function approximation, aleatoric randomness, epistemic randomness) which is reflected in the sample variance (or other measures of the spread of the distribution) of the discrete (random) distributions $\eta^{A/B}_t$. As uniform overestimation is not particularly troubling (e.g. adding a constant to all $Q$-values does not harm at all) the learning process is slowed down by differences in sample variances for allowed actions. There are many other possibilities to choose $\beta$ but we decided to fix this example for all our experiments. For a finite atomic measures $\nu=\sum_{i=1}^n p_i \delta_{a_i}$ define the sample mean $M(\nu)=\frac{1}{n}\sum_{i=1}^n a_ip_i$ and the sample variance $S^2(\nu)=\frac{1}{n-1}\sum_{i=1}^n p_i (a_i- M(\nu))^2$. Now define
\begin{align*}
    S^2_{s,a}&:=\frac{1}{2}\big(S^2(\eta_t^A(s,a)+S^2(\eta_t^B(s,a)\big),\\
    S^2_s&:=\frac{1}{|\mathcal A|}\sum_{a\in \mathcal A}S^2_{s,a},\quad\text{and}\quad
    S^2_{rel}(s,a) := \frac{S^2_{s,a}}{S^2_{s}}.
\end{align*}
According to our computations in the two-sided bandit model, evenly distributed relative sample variances (i.e. all values around $1$) correspond to balanced overestimation effects. Otherwise, overestimation is unbalanced. We define
%
locally adaptive weights for the next update at $(s,a)$:
\begin{align}\label{adap_rule}
         \beta:= 
        \begin{cases}
        0.75&: S^2_{rel}(s,a) < 0.75 \\
        0.5&: S^2_{rel}(s,a) \in [0.75,1.25] \\
        0.25&: S^2_{rel}(s,a) > 1.25
        \end{cases}.
    \end{align}
    For algorithmic simplicity one could also use squared deviations to the median instead of the mean as the median can be red off directly for measures in the $\mathcal F_{C,m}$ and $\mathcal F_{Q,m}$. The choice combines Q and DQ rather softly, more aggressive choices reduce the length of the interval and increase (resp. decrease) towards $\beta=1$ and $\beta=0$. In practice, (tabular, all Atari, all MuJoCo) the choice worked well without any tuning. The thresholds in the definition of $\beta$ can be seen as hyperparameters to the algorithm. We leave the development of adaptive thresholds for future research.

\subsection{A grid world example}
To show the advantages of ADDQ we present a grid world that highlights, in a more complicated and more realistic way, the main effects of the bandit MDP. For more details, see Appendix \ref{app:gred}. In particular, in Appendix \ref{sec:comparison} we compare ADDQ to other algorithms for overestimation reduction.
\begin{floatingfigure}[r]{2cm}\label{gridword}
\vspace{-3mm}
\begin{center}
\begin{tikzpicture}
\fill[gray!15] (-1.5,-0.5) rectangle (-1,-1);
\fill[gray!15] (-2,1) rectangle (-1.5,0.5);
\fill[gray!15] (-0.5,1) rectangle (0,0.5);
\fill[gray!50] (-1,-1) rectangle (0,0);
\draw[step=0.5cm,color=gray] (-2,-1) grid (0,1);
\node at (-1.75,+0.75) {F};
\node at (-1.25,+0.75) {1};
\node at (-0.75,+0.75) {2};
\node at (-0.25,+0.75) {S};
\node at (-1.75,+0.25) {4};
\node at (-1.25,+0.25) {5};
\node at (-0.75,+0.25) {6};
\node at (-0.25,+0.25) {7};
\node at (-1.75,-0.25) {8};
\node at (-1.25,-0.25) {9};
\node at (-0.75,-0.25) {10};
\node at (-0.25,-0.25) {11};
\node at (-1.75,-0.75) {12};
\node at (-1.25,-0.75) {G};
\node at (-0.75,-0.75) {14};
\node at (-0.25,-0.75) {15};
\end{tikzpicture}
\end{center}
\vspace{-3mm}
\caption{Start in S, goal in G, fake goal in F, high stochasticity and low reward area in gray}
\end{floatingfigure}
Deep RL experiments presented below use essentially  deterministic environments with uncertainty mainly from function approximations. We thus provide a grid world with complicated stochasticity that, depending on parameters, is hard for QL and DQL. The high stochasticity region with low rewards confuses the QL agent. The agent strongly overestimates suboptimal $Q$-values that lead to the gray area, in the gray area $Q$-values are overestimated for actions that stay in the gray area. Hence, all $\varepsilon$-greedy exploration mechanisms spend much time in the gray area. On the other hand, DQL is motivated by estimators of iid random variables and underestimates strongly if action-values for different actions are unevenly distributed. Thus, the DQL agent gets  confused by the local deviations caused by the fake goal and the stochastic region. What results in overestimation for QL, results in underestimation for DQL. In contrast, ADDQ locally combines update-rules of QL and DQL to reduce the estimation biases a lot. In Appendix \ref{app:gred} we show experimentally that the choice of thresholds in $\beta$ is relatively harmless, more or less  aggressive updates still perform well. In contrast, constant $\beta$ and the non-distributional choice from \cite{weightedDoubleQ} with $c=10$ have larger biases.

Plots in the figure above show a few key insights, more in Appendix \ref{app:gred}. First, the estimation bias of ADDQ is much smaller than those of QL and DQL. Most importantly in the complicated states 1, 4, 6, 7. The reason ADDQ better estimates the $Q$-values is the adaptive choice to prefer QL or DQL, depending on (relatively) large variances.

\newpage
\subsection{ADDQ adaptation for distributional DQN}\label{sec:DQN}
Our DRL based local adaptive overestimation mitigation can be integrated into existing code for DRL with a few extra lines. In the following we describe the integration into C51 \cite{bellemare2017}. We run experiments on Atari environments from the Arcade Learning Environment \cite{ale} using the Gymnasium API \cite{gymnasium}. Algorithms are few line modifications based on the RL Baselines3 Zoo \cite{rl-zoo3} training framework, without any further tuning of hyperparameters.

The C51 algorithm obtained its name from using a categorical representation of return distributions with $m=51$ atoms. The weights of the parametrization are parametrized via feedforward neural networks following the DQN architecture \cite{dqn}. The state $s$ serves as input and the last layer outputs $m=51$ logits for each action followed by a softmax to return probability weights. 
We write $\eta_\omega(s,a)=\sum_{i=1}^mp_i(s,a;\omega)\delta_{\theta_i}$, 
where $\omega$ comprises the online networks weights. We add a bar $\bar \eta$ to denote a delayed target network which is kept constant and is overwritten from $\eta$ every e.g. 10000 steps with the parameters from the online network. The corresponding expectation is denoted by $Q_\omega(s,a)=\sum_{i=1}^mp_i(s,a;\omega)\theta_i.$ Given a transition $(s,a,r,s')$  the projected target is
\begin{align*}
\hat\eta
=\Pi_C(b_{r,\gamma}\#\eta_{\bar\omega}(s',a^*))=:\sum_{i=1}^m\hat p_i\delta_{\theta_i},
\end{align*} 
where $a^*=\operatorname{argmax}_{a'}Q_{\bar\omega}(s',a')$. Gradient descent on the weights with respect to some loss (here: cross-entropy) is used to move the distribution $\eta_\omega(s,a)$ towards the target distribution $\hat \eta$. As for the tabular algorithm variants with overestimation reduction intervene in the target definition. We keep track of two independently initialized online networks denoted by $\omega^A, \omega^B$ and a pair of respective target networks $\bar\omega^A, \bar\omega^B$. For each gradient step we simulate a vector of random variables with the same size as the batch size with each element determining which of the two estimators is being updated based on the respective transition with the same position in the batch. Accordingly, we use twice the batch size for these methods, so that on average per gradient step, the same number of transitions is used for each estimator, compared to the single-estimator case. We can now describe how to modify the targets for a given transition $(s,a,r,s')$. With the placeholder $\Gamma$ in 
\begin{align*}
    \bar\eta^{A/B}=\Pi_C(b_{r,\gamma}\# \Gamma^{A/B})
\end{align*}
only the place holder is modified for different algorithms:

\textbf{Double C51:} Set $\Gamma^{A/B} = \eta_{\bar\omega^{B/A}}(s',z^*)$ with  $z^*=\operatorname{argmax}_{a'}Q_{\bar\omega^{A/B}}(s',a')$. 


   \textbf{Clipped C51}: Inspired by \cite{fujimoto2018addressing}. Set $\Gamma^{A/B}=\eta_{\bar\omega^X}(s',z^*)$ with    $z^*=\operatorname{argmax}_{a'}Q_{\bar\omega^{A/B}}(s',a')$ and $X=\operatorname{argmin}_{c\in\{A,B\}}Q_{\bar\omega^c}(s',z^*)$.\footnote{TD3 [\cite{fujimoto2018addressing}] introduced clipping in an actor-critic setting, where the action is given by the actor. In our C51 adaptation, we select the greedy action based on the target network.}:  

    \textbf{ADDQ (us):} The ADDQ target uses
    \begin{align*}      
    \Gamma^{A/B}=\beta\eta_{\bar\omega^{A/B}}(s',z^*)+(1-\beta)\eta_{\bar\omega^{B/A}}(s',z^*),
    \end{align*}
    where $z^*=\operatorname{argmax}_{a'}Q_{\bar\omega^{A/B}}(s',a').$ The locally adaptive weights $\beta$ may depend on entire state-action return distributions. For the experiments we used the choice from Equation \eqref{adap_rule} based on the online networks $\eta_{\omega^A}, \eta_{\omega^B}$.


Results for two Atari environments and a RLiable comparison \cite{RLiable} are presented in Figure \ref{g2} and more extensively in Appendix \ref{app:experiments}. The experiments show that ADDQ is more stable than QL and DQL (it never fails completely) and achieves higher scores in different metrics.

\subsection{ADDQ adaptation for QRDQN} Modifications and results for quantile DRL \cite{dabney2018} are similar to the categorical setting. The setup is explained in Appendix \ref{appendix:QRDQN}; for a quick view two experiments and a RLiable plot for probability of improvements can be found in Figure \ref{g2}. More experimental results are provided in Appendix \ref{app:experiments}. ADDQ is more stable (it never fails completely) and achieves higher scores in different metrics.

\subsection{ADDQ adaptation for quantile SAC}\label{sec:AC}
Compared to Atari environments it is known that overestimation in MuJoCo environments is more severe.  Using the double estimator in critic estimation is not enough, the clipping trick of TD3 \cite{fujimoto2018addressing} greatly improves performance. Recall that, in the formulation of biased $Q$-learning of Section \ref{sec:SARSA}, clipping uses bias $\min\{ Q^{B/A}-  Q^{A/B},0\}$ instead of $Q^{B/A}-Q^{A/B}$. Thus, using our distributional overestimation identification to combine QL and DQL does not hint towards an algorithm competitive with SAC/TD3 or algorithms with more refined overestimation control such as REDQ \cite {redq} or TQC \cite{tqc}. For completeness of the article we still consider the ADDQ effect with standard single and double estimator. We used the base implementations from Stable-Baselines3 to study SAC with quantile regression. With a single critic estimator (we call this QRSAC), with double estimator, clipped double estimator (QB-SAC in the terminology of \cite{tqc}), and ADDQ estimator (ADQRSAC). Experiments are shown in Figure \ref{g3} and more extensively in Appendix \ref{app:experiments}.

As expected ADDQ improves QRSAC and double QRSAC but is not competitive with clipping. We leave for future work to experiment distributional RL based overestimation identification for REDQ and TQC.

%
%
\begin{figure*}[h]\label{fig:experiments}
\begin{center}
  \includegraphics[height=4cm]{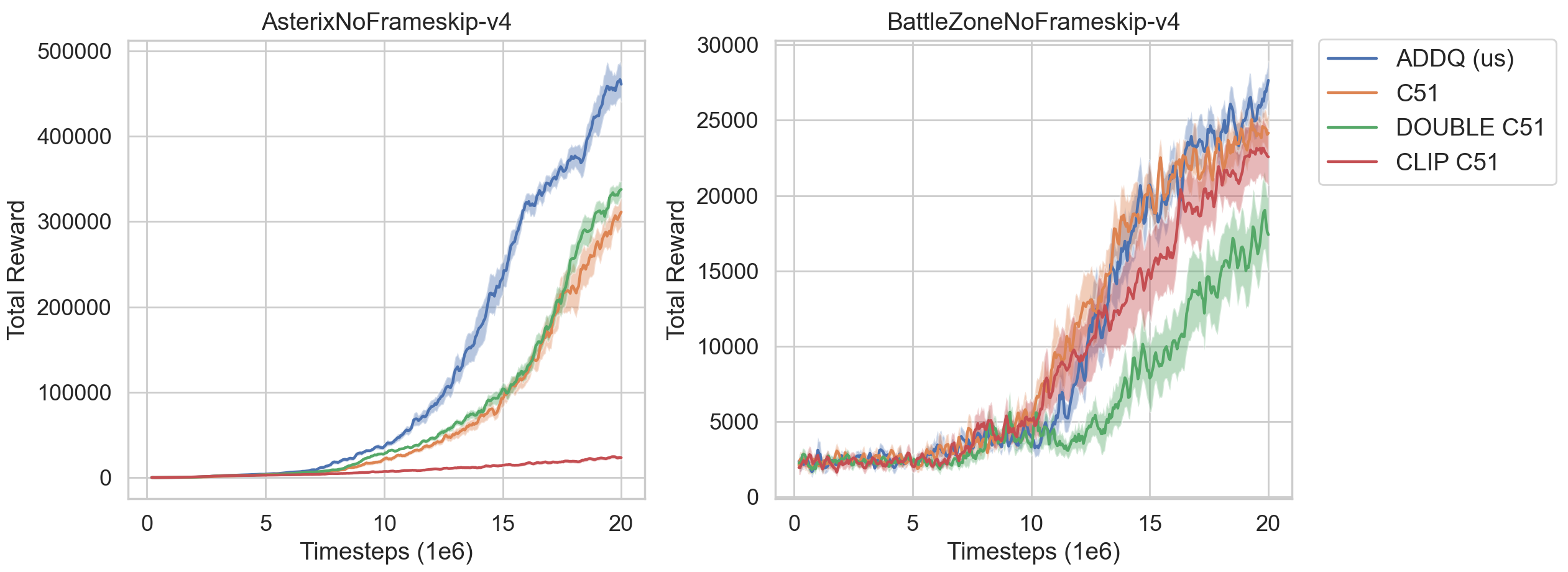}
  \includegraphics[height=2.5cm]{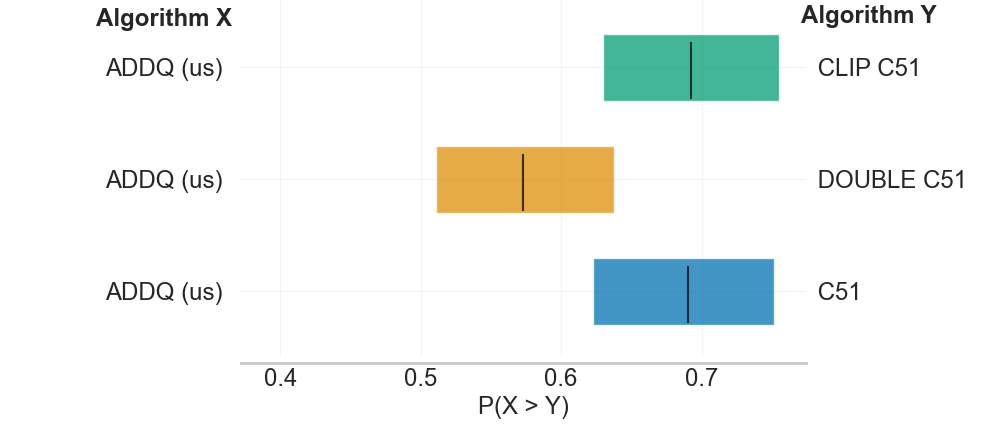}
  \label{g1}
  \includegraphics[height=4cm]{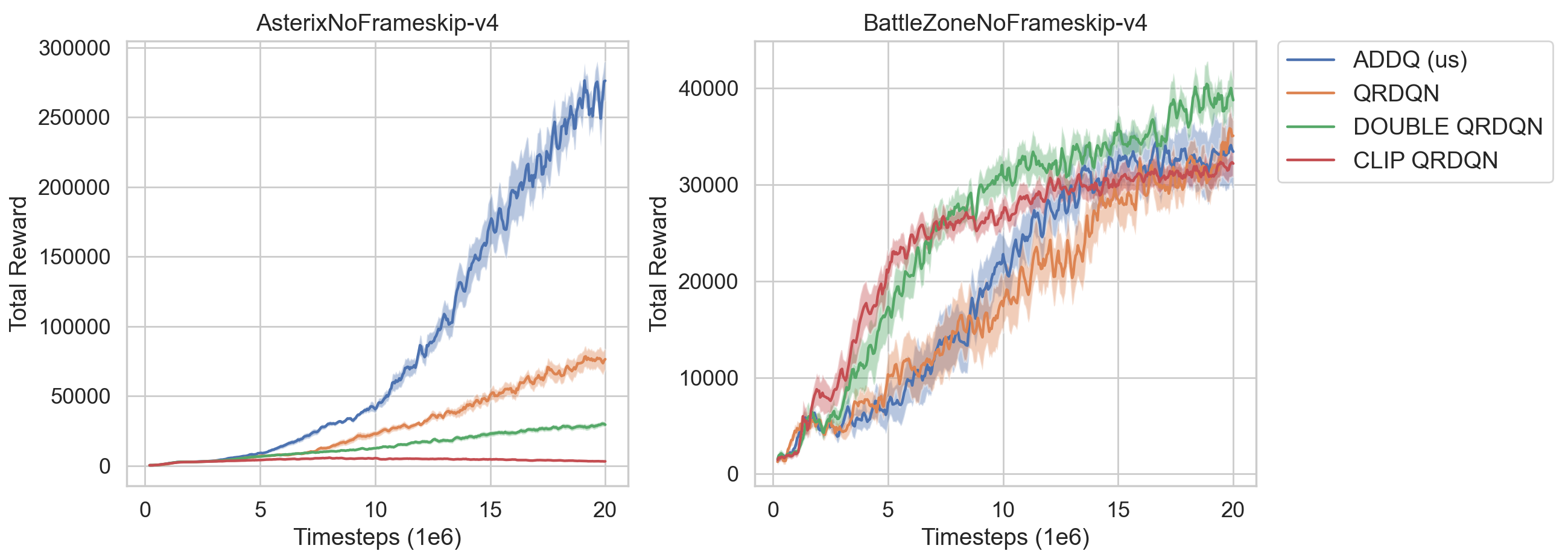}
  \includegraphics[height=2.5cm]{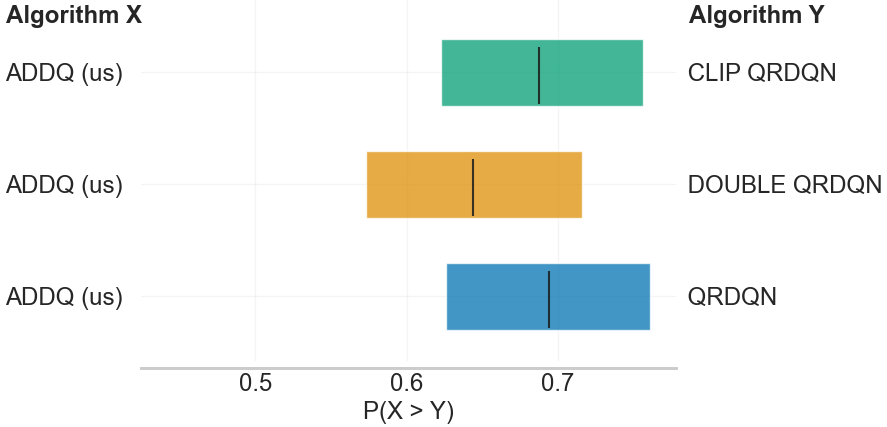}
  \vspace{-3mm}
  \caption{Our method ADDQ (blue) compared to  distributional DQN (orange), with double estimator (green) and clipped double estimator (red). First row with categorical, second row quantile parametrization. Learning curves are given for two environments, 8 more in Appendix \ref{app:experiments}. We used 10 seeds. RLiable probability of improvement plots are for all 10 environments, more metrics in Appendix \ref{app:experiments}.}
  \vspace{2mm}\label{g2}
  \includegraphics[height=4cm]{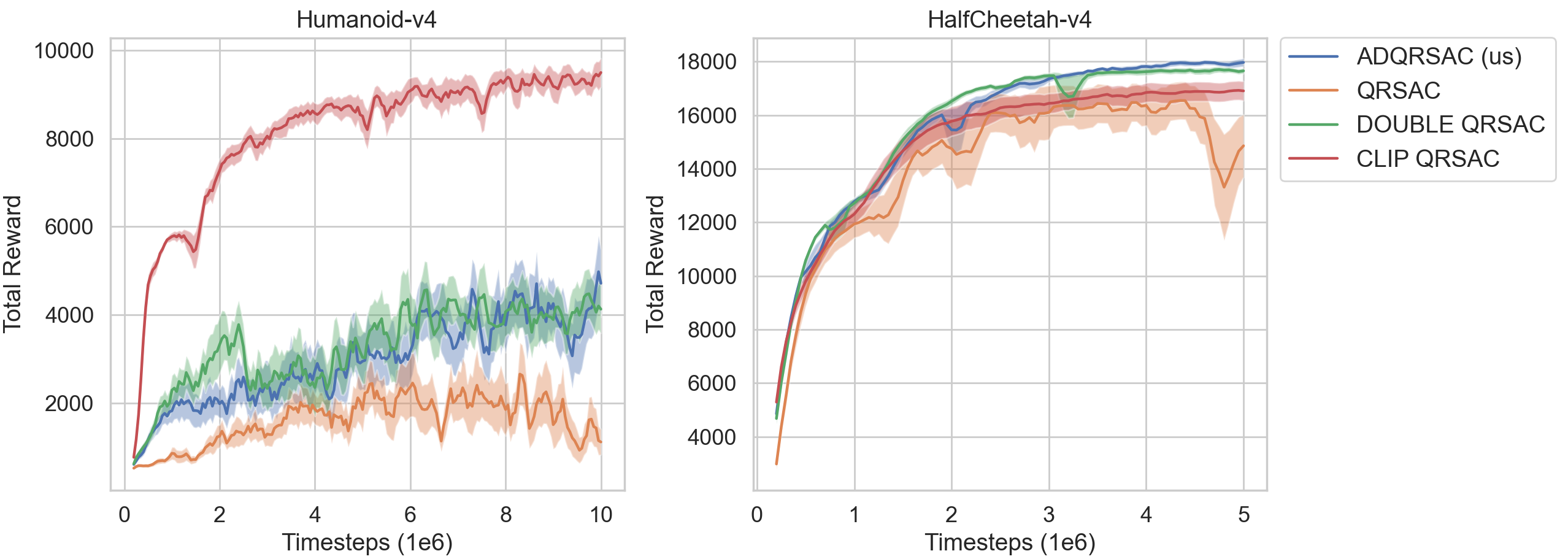}\hspace{-2mm}
  \includegraphics[height=2.5cm]{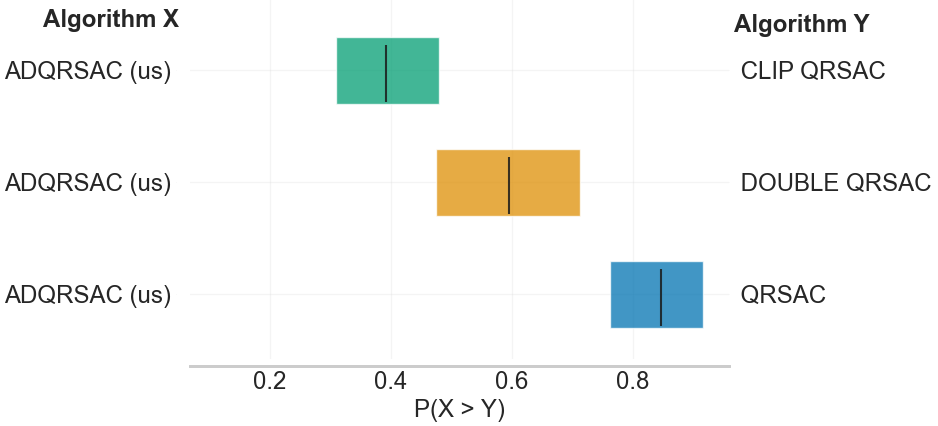}
  \vspace{-3mm}
    \caption{For completeness: Learning curves for two MuJoCo environments and RLiable probability of improvement on 5 environments. Adapting locally (blue, us) the double critic estimator (green) and direct estimator (orange) cannot (by far) reach performance of the clipping estimator (red). Nonetheless, ADDQ improves distributional direct and double critic estimators with almost no change to the code. Runs are averaged on 10 seeds, more details can be found in the Appendix \ref{app:experiments}.}\label{g3}
  \end{center}
  \vspace{-6mm}
\end{figure*}

\newpage
\section{Summary, limitations, and future work}
Built on theoretical insight for a bandit MDP we suggest to use sample variances in distributional RL to mitigate  the overestimation of QL. Our approach does not use a novel estimation procedure for $Q$-values but instead combines known estimators and tries to use the better one. Using DRL, the agent in state-action pair $(s,a)$ has access to next-state information that predicts the overestimation of QL-updates and accordingly prevers QL- or an alternative. The approach can be incorporated in different estimation methods (e.g. (random) ensemble methods, truncation methods). For the present article we decided to improve DQL, leading to our algorithm ADDQ. The algorithm has the expected feature that in contrast to DQ/DQL it does not fail completely on some environments. Probability of improvement and normalized scores using the RLiable library show clear improvement of ADDQ to underlying deep DQ/DQL.

While ADDQ improves QL and DQL in different settings there is future work to be done. (i) It is clear that our probabilistic calculations can be extended. It is quite likely that results from Gaussian processes can be used for computations in more general settings. (ii) The exemplary choice of $\beta$ can be improved. It would be interesting to replace the hyperparameter thresholds by some adaptive learnable choice.  (iii) The MuJoCo simulation study was only included for completeness, it would have been very surprising if a local adaptation of simple and double critic estimates could improve the clipped critic estimate (or even better  algorithms such as REDQ or TQC). It is interesting future research to include local overestimation mitigation into REDQ (make the chosen ensemble number depend locally on sample variances) or TQC (make the number of truncated atoms depend locally on sample variances). (iv) Use sample variances to perform updates of target networks after non-constant steps.

\section*{Code} The code used in our experiments can be found on GitHub: \url{https://github.com/BommeHD/ADDQ.git}.

\section*{Acknowledgement} The authors acknowledge support by the state of Baden-Württemberg through bwHPC
and the German Research Foundation (DFG) through grant INST 35/1597-1 FUGG.

\section*{Impact statement} This paper presents work whose goal is to advance the field of machine learning, in particular reinforcement learning. There are many potential societal consequences of our work, none which we feel must be specifically highlighted here.


\bibliography{references}
\bibliographystyle{icml2025}

\newpage
\appendix
\onecolumn
\section{Theoretical backup from probability theory}\label{appendix:theory}
Theoretical backup for all known overestimation reduction algorithms is rather weak, the update rule of $Q$-learning makes precise computations very difficult. As we discuss below, the stochastic approximation rule requires computations with sums of random variables which is not well compatible with computations of maxima of random variables. Justifications for overestimation reduction algorithms are typically based on qualitative arguments for the estimation bias of expectations $\E[\max_i\{ X_1,...,X_K\}]$ of maxima of independent random variables even though random variables appearing in $Q$-learning maxima are clearly not independent. We will give explicit estimates using probabilistic insights for sums and maxima of independent random variables.

We should make very clear that there are different factors to the overestimation problem that are captured in the algorithmic approach of ADDQ. To provide some rigorous theoretical evidence, we focus in this section on the tabular setting without function approximation error. We refer the reader to the seminal article \cite{Thrun+Schwartz:1993} for some simplified computations on the overestimation effect caused by approximation errors.
\subsection{A lower bound computation for the overestimation bias in an episodic bandit MDP (Proof of Proposition \ref{prop2})}

\begin{figure}[h]\label{figure7}
\begin{center}
\begin{tikzpicture}[scale=0.8]
    \node[circle, draw] (A) at (6,0) {$s_0$};
    \node[text width=0.5cm] at (7,0.5) {0};
    \node[text width=0.5cm] at (5.3,0.5) {0};
    \node[text width=2cm] at (3.5,-1) {$k_1$  actions};
    \node[circle, draw] (B) at (4,0) {$s_1$};
    \draw[->] (A) -- (B);
    \draw[->] (A) -- (C);
    \filldraw[fill=gray, draw=black] (1.7,-0.25) rectangle +(0.5,0.5);
    \draw[->] (B) to[out=135, in=45]  (2.25,0);
    \draw[->] (B) to[out=-135, in=-45]  (2.25,0);
    \fill[black] (3.2,0.25) circle (0.05);
    \fill[black] (3.2,0) circle (0.05);
    \fill[black] (3.2,-0.25) circle (0.05);
    \node[circle, draw] (C) at (8,0) {$s_2$};
    \draw[->] (C) to[out=45, in=135]  (9.65,0);
    \draw[->] (C) to[out=-45, in=-135]  (9.65,0);

    \filldraw[fill=gray, draw=black] (9.7,-0.25) rectangle +(0.5,0.5);
  \fill[black] (8.7,0.25) circle (0.05);
    \fill[black] (8.7,0) circle (0.05);
    \fill[black] (8.7,-0.25) circle (0.05);
        \node[text width=2cm] at (9,-1) {$k_2$  actions};
    \node[circle, draw] (D) at (6.05,-1.39) {};
    \filldraw[fill=gray, draw=black] (5.8,-1.7) rectangle +(0.5,0.5);
    \draw[->] (A) -- (D);
    \node[text width=0.5cm] at (6.5,-0.8) {0};

\end{tikzpicture}
\end{center}
\end{figure}

The two-sided bandit MDP from the main text can be analyzed side by side. Without loss of generality we thus study the overestimation of the left-side which is essentially the bandit MDP that appeared quite a bit in the literature on overestimation of $Q$-learning (see  \cite{VanHasselt:2011}, Example 6.7 of \cite{Sutton1998}, or \cite{Maxmin}).

Before stating our results on the overestimation bias we state a technical lemma that is based on the Sudakov-Fernique inequality, see \cite{Sudakov_1971, Sudakov_1976, Fernique_1975}, which is a comparison inequality for Gaussian processes.

\begin{lemma}\label{lemma:martin}
    Let $k \in \mathbb{N}$ and suppose $(X_j^i)_{j \in \mathbb{N}, i \in \{1, \ldots, k\}}$ are iid $\mathcal N(\mu,\sigma^2)$. Further, let $n_{1}, \ldots, n_{k} \in \mathbb{N}$ such that $n_{1} + \cdots + n_{k} \leq k n$ and set $\gamma := \max_{1 \leq i \ne i' \leq k} \bigl|1/n_{n_i} + 1/n_{i'} - 2/n \bigr|$. Then,
    \begin{align}
        \label{eq:diff:max:estimate}
        \Biggl|
            \E\biggl[
                \max_{1 \leq i \leq k} \frac{1}{n_{i}} \sum_{j=1}^{n_{i}} X^i_j
            \biggr]
            -
            \E\biggl[
                \max_{1 \leq i \leq k} \frac{1}{n}\sum_{j=1}^{n} X^i_j
            \biggr]
        \Biggr|
        \leq 
        \sqrt{2 \gamma \ln k}.
    \end{align}
    In particular, if, additionally, $1/n_{i} + 1/n_{i'} \geq 2/n$ for all $i, i' \in \{1, \ldots, k\}$ with $i \ne i'$ then
    \begin{align}
        \label{eq:comparison:max}
        \E\biggl[
            \max_{1 \leq i \leq k} \frac{1}{n}\sum_{j=1}^{n} X^i_j
        \biggr]
        \leq 
        \E\biggl[
            \max_{1 \leq i \leq k} \frac{1}{n_{i}}\sum_{j=1}^{n_{i}} X^{i}_{j}
        \biggr].
    \end{align}
\end{lemma}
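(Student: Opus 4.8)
The plan is to express both quantities as expected maxima of centered Gaussian vectors and then invoke the Sudakov--Fernique comparison inequality, using an auxiliary Gaussian ``padding'' to upgrade it to the two-sided quantitative bound \eqref{eq:diff:max:estimate}.

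\textbf{Reduction to centered Gaussians.} Set $S^i_m := \sum_{j=1}^{m}(X^i_j - \mu)$ and put $V_i := S^i_{n_i}/n_i$ and $U_i := S^i_{n}/n$. Then $\max_i \frac1{n_i}\sum_{j=1}^{n_i} X^i_j = \mu + \max_i V_i$ and $\max_i \frac1n\sum_{j=1}^{n} X^i_j = \mu + \max_i U_i$, so it suffices to compare $\E[\max_i V_i]$ and $\E[\max_i U_i]$, where $(V_i)_{i\le k}$ and $(U_i)_{i\le k}$ are centered Gaussian vectors. Because distinct coordinates use disjoint streams, $V_i$ and $V_{i'}$ (resp. $U_i$ and $U_{i'}$) are independent for $i\ne i'$, which gives the increment identities $\E[(V_i-V_{i'})^2] = \sigma^2(\tfrac1{n_i}+\tfrac1{n_{i'}})$ and $\E[(U_i-U_{i'})^2] = \tfrac{2\sigma^2}{n}$. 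The fact that $V_i$ and $U_i$ are themselves correlated is irrelevant, since Sudakov--Fernique only sees each process through its own covariance structure.

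\textbf{The ordered case.} If $\tfrac1{n_i}+\tfrac1{n_{i'}} \ge \tfrac2n$ for all $i\ne i'$, then $\E[(U_i-U_{i'})^2] \le \E[(V_i-V_{i'})^2]$ for every pair (trivially with equality when $i=i'$), so Sudakov--Fernique yields $\E[\max_i U_i] \le \E[\max_i V_i]$; adding $\mu$ gives \eqref{eq:comparison:max}.

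\textbf{The quantitative case.} In general the two increment functions are incomparable, so I would pad both processes up to a common level. Let $(G_i)_{i\le k}$ and $(G'_i)_{i\le k}$ be two iid $\mathcal N(0,\sigma^2\gamma)$ families, independent of everything, and set $\widehat U_i := U_i + G_i$, $\widehat V_i := V_i + G'_i$. By the definition of $\gamma$ one has $\sigma^2(\tfrac1{n_i}+\tfrac1{n_{i'}}) \le \tfrac{2\sigma^2}{n} + 2\sigma^2\gamma = \E[(\widehat U_i-\widehat U_{i'})^2]$ and $\tfrac{2\sigma^2}{n} \le \sigma^2(\tfrac1{n_i}+\tfrac1{n_{i'}}) + 2\sigma^2\gamma = \E[(\widehat V_i-\widehat V_{i'})^2]$ for all $i\ne i'$, so Sudakov--Fernique applied in each direction gives $\E[\max_i V_i] \le \E[\max_i \widehat U_i]$ and $\E[\max_i U_i] \le \E[\max_i \widehat V_i]$. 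Since $\max_i \widehat U_i \le \max_i U_i + \max_i G_i$ pointwise, and the standard Gaussian maximal inequality bounds $\E[\max_i G_i] \le \sqrt{2\sigma^2\gamma\ln k}$ (likewise for $G'$), combining the two one-sided estimates yields $|\E[\max_i V_i] - \E[\max_i U_i]| \le \sqrt{2\sigma^2\gamma\ln k}$, which is the asserted bound (under the unit-variance normalization used in the application).

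\textbf{Main obstacle.} The only real difficulty is the two-sided estimate: Sudakov--Fernique is one-directional and cannot be applied verbatim because the increments $\sigma^2(\tfrac1{n_i}+\tfrac1{n_{i'}})$ and $\tfrac{2\sigma^2}{n}$ need not be ordered, while the crude alternative of writing $V_i = U_i + (\text{correction}_i)$ and bounding $\E[\max_i(\text{correction}_i)]$ loses a constant since $\operatorname{Var}(V_i-U_i)=\sigma^2|\tfrac1{n_i}-\tfrac1n|$ is not controlled by $\sigma^2\gamma$. The symmetric padding resolves both points; beyond that one only needs the classical Gaussian maximal inequality and to dispose of the trivial edge cases $k=1$ and $\gamma=0$.
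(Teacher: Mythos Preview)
Your argument is correct. For the ordered inequality \eqref{eq:comparison:max} you and the paper do exactly the same thing: compute the pairwise increment variances and apply the one-sided Sudakov--Fernique comparison. For the two-sided bound \eqref{eq:diff:max:estimate}, however, the paper takes a shorter path: it invokes directly the quantitative form of Sudakov--Fernique recorded in \cite{Adler_2007}, Theorem~2.2.5, Eq.~(2.2.11), which already states that for centered Gaussian vectors $Y,Z$ with equal means,
\[
\bigl|\E[\max_i Y_i]-\E[\max_i Z_i]\bigr|\le \sqrt{2\ln k\;\max_{i,i'}\bigl|\gamma^Y_{i,i'}-\gamma^Z_{i,i'}\bigr|},
\]
so the lemma follows by plugging in $\gamma^Y_{i,i'}=2\sigma^2/n$ and $\gamma^Z_{i,i'}=\sigma^2(1/n_i+1/n_{i'})$. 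Your route instead \emph{reproves} this quantitative estimate from the classical one-sided inequality via the symmetric Gaussian padding $\widehat U_i=U_i+G_i$, $\widehat V_i=V_i+G'_i$, combined with the elementary bound $\E[\max_{i\le k} G_i]\le\sqrt{2\operatorname{Var}(G_1)\ln k}$. This is more self-contained---it avoids citing the less familiar Eq.~(2.2.11)---and, as a bonus, if you take the minimal padding variance $\sigma^2\gamma/2$ rather than $\sigma^2\gamma$ you actually recover the constant with a spare factor $\sqrt 2$. The paper's version is terser but relies on a black-box reference; yours makes the mechanism transparent.
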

\begin{proof}
    Given $n_{1}, \ldots, n_{k} \in \mathbb{N}$ with $n_{1} + \cdots n_{k} \leq k n$, set $Y^{i} := n^{-1} \sum_{j=1}^{n} X_{j}^{i}$ and $Z^{i} := n_{i}^{-1} \sum_{j=1}^{n_{i}} X_{j}^{i}$ for any $i \in \{1, \ldots, k\}$. Clearly, $Y = (Y^{1}, \ldots, Y^{k})$ and $Z = (Z^{1}, \ldots, Z^{k})$ are Gaussian vectors with $\E[Y^{i}] = \E[Z^{i}]$ for all $i \in \{1, \ldots, k\}$ and
    \begin{align*}
        \gamma_{i, i'}^{Y}
        :=
        \E\Bigl[\bigl(Y^{i} - Y^{i'}\bigr)^2 \Bigr]
        =
        2/n
        \qquad \text{and} \qquad
        \gamma_{i, i'}^{Z}
        :=
        \E\Bigl[\bigl(Z^{i} - Z^{i'}\bigr)^2 \Bigr]
        =
        1/n_{i} + 1/n_{i'}
    \end{align*}
    for all $i, i' \in \{1, \ldots, k\} $ with $i \ne i'$. Thus, \eqref{eq:diff:max:estimate} is an immediate consequence of \citealp[Theorem~2.2.5, Eq.~(2.2.11)]{Adler_2007}. If, in addition, $1/n_{i} + 1/n_{i'} \geq 2$ for all $i \ne i'$ then $\gamma_{i, i'}^{Y} \leq \gamma_{i, i'}^{Z}$ and \eqref{eq:comparison:max} follows from \citealp[Theorem~2.2.5, Eq.~(2.2.12)]{Adler_2007}.
\end{proof}

We are now in a position to give a lower bound on the overestimation bias for all exploration policies under the typical $\frac{1}{\# \text{visits}}$-step-size schedule. This extends the overestimation analysis of \cite{VanHasselt:2011} which was based on a simple synchronous exploration mechanism. The step-size schedule is crucial for our probabilistic analysis as it turns the analysis in a computation with sums and maxima of independent random variables. The sum structure is a consequence of the simple fact that $z_t:=\frac{1}{t} \sum_{i=1}^t a_{i-1}$ solves the stochastic approximation recursion $z_{t+1}=(1-\alpha_t)z_t+ \alpha_t a_t, z_0=0,$ with $\alpha_t=\frac{1}{t+1}$. Here is a formal lower bound of the overestimation bias in the episodic bandit MDP. As mentioned above we only consider exploration of the left-side of the bandit MDP.
\begin{theorem}\label{thm:1}
    Suppose $Q_0$ is initialized as the zero matrix and step-sizes are chosen as  $\alpha_t(s,a)=\frac{1}{T_{s,a}(t)}$, with $T_{s,a}(t)$ the number of updates of $Q(s,a)$. If rewards are $\mathcal N(\mu,\sigma^2)$-distributed, then every sufficient exploratory exploration rule leads to overestimation bias at least 
    \begin{align*}
        \E[Q_{Nk}(s_0,\text{"left"})]- Q^*(s_0,\text{"left"})\geq\frac{\gamma}{\sqrt{\pi\log(2) }}\frac{\sigma \sqrt{\log(k)}}{\sqrt{N}}.
    \end{align*}
    By sufficient exploratory we mean that $\frac{1}{n_a(t)}+\frac{1}{n_{a'}(t)}
\geq \frac{2}{N}$ for all actions $a,a'$ with $n_a(t)=T_{s_{,a}}(t)$.
\end{theorem}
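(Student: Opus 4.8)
The plan is to track the estimate $Q_{Nk}(s_0,\text{``left''})$ explicitly through the QL recursion and reduce it to a maximum of averages of Gaussians, and then apply Lemma \ref{lemma:martin} together with a lower bound on the expected maximum of $k$ i.i.d. Gaussians. First I would unwind the structure of the bandit MDP: from $(s_0,\text{``left''})$ the only transition is deterministically to $s_1$ with reward $0$, so after enough exploration the QL update at $(s_0,\text{``left''})$ has target $0 + \gamma \max_{a}Q(s_1,a)$, and since the step-size schedule is $\alpha_t(s,a) = 1/T_{s,a}(t)$, the running estimate is the empirical average of the past targets. Using the elementary fact quoted in the text, that $z_t = \tfrac1t\sum_{i=1}^t a_{i-1}$ solves $z_{t+1} = (1-\alpha_t)z_t + \alpha_t a_t$ with $\alpha_t = 1/(t+1)$, I would argue that once $(s_0,\text{``left''})$ has been updated enough times after the relevant $(s_1,a)$ updates have stabilised the max-index, $Q_{Nk}(s_0,\text{``left''}) = \gamma \cdot (\text{average of }\max_a Q(s_1,a)\text{ values})$. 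The actions $a$ at $s_1$ are terminal with i.i.d.\ $\mathcal N(\mu,\sigma^2)$ rewards, so $Q(s_1,a)$ after $n_a$ visits is exactly the empirical mean $\tfrac{1}{n_a}\sum_{j=1}^{n_a}X^a_j$ of those rewards.

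Next I would take expectations. We have $Q^*(s_0,\text{``left''}) = \gamma\mu$, so the bias is $\gamma\big(\E[\max_a \tfrac{1}{n_a}\sum_{j}X^a_j] - \mu\big)$ up to the averaging over update times, which only mixes several such maxima at different visit-count vectors and hence does not help us for a \emph{lower} bound unless all of them are themselves $\geq \mu + \text{something}$. Here the ``sufficiently exploratory'' hypothesis enters: the assumption $\tfrac{1}{n_a(t)} + \tfrac{1}{n_{a'}(t)} \geq \tfrac2N$ for all $a,a'$ is exactly the condition under which \eqref{eq:comparison:max} of Lemma \ref{lemma:martin} gives $\E[\max_a \tfrac1N\sum_{j=1}^N X^a_j] \leq \E[\max_a \tfrac{1}{n_a}\sum_{j=1}^{n_a}X^a_j]$, so every target averaged into $Q_{Nk}(s_0,\text{``left''})$ has expectation at least $\E[\max_a \tfrac1N\sum_{j=1}^N X^a_j]$. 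Hence $\E[Q_{Nk}(s_0,\text{``left''})] - \gamma\mu \geq \gamma\big(\E[\max_{1\le a\le k}\tfrac1N\sum_{j=1}^N X^a_j] - \mu\big)$. Now $\tfrac1N\sum_{j=1}^N X^a_j \sim \mathcal N(\mu, \sigma^2/N)$ i.i.d.\ over $a$, so this reduces to bounding $\E[\max_{1\le a \le k} G_a]$ for i.i.d.\ standard Gaussians $G_a$ and rescaling by $\sigma/\sqrt N$.

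The final step is a standard-but-careful lower bound: $\E[\max_{1\le a\le k}G_a] \geq \tfrac{1}{\sqrt{\pi\log 2}}\sqrt{\log k}$ (or some such explicit constant). I would get this via the usual route — e.g.\ using $\E[\max_a G_a] \geq c\sqrt{\log k}$ through a truncation/second-moment or a Sudakov-minoration-type argument, or directly from the known asymptotics sharpened to an inequality valid for all $k$ — to land exactly on the constant $\tfrac{\gamma}{\sqrt{\pi\log 2}}\tfrac{\sigma\sqrt{\log k}}{\sqrt N}$ claimed. I expect the main obstacle to be the \emph{bookkeeping} in the first step rather than any deep inequality: one has to justify carefully that by the time $(s_0,\text{``left''})$ has been visited on the order of $Nk$ times, (a) the downstream estimates $Q(s_1,a)$ feeding into its targets are themselves averages over roughly the right number $n_a$ of reward samples, and (b) the $\operatorname{argmax}$ at $s_1$ used in those targets is taken w.r.t.\ these empirical means — i.e.\ that the maximum and the evaluation use the \emph{same} (coupled) samples, which is what makes the QL target a genuine $\max_a$ of Gaussian averages rather than something messier. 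Making this reduction rigorous for \emph{arbitrary} sufficiently-exploratory schedules, while controlling the averaging over heterogeneous visit-count vectors using precisely the hypothesis of the theorem, is the delicate part; the Gaussian comparison (Lemma \ref{lemma:martin}) and the max-of-Gaussians lower bound are then essentially plug-in.
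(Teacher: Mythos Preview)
Your proposal is correct and follows essentially the same route as the paper: express $Q_{Nk}(s_0,\text{``left''})$ as an average of targets $\gamma\max_a Q_t(s_1,a)$ via the $1/T_{s,a}$ step-size, lower-bound each such target using Lemma~\ref{lemma:martin} (via \eqref{eq:comparison:max} under the sufficiently-exploratory hypothesis), and finish with the standard inequality $\E[\max_{i\le k}Z_i]\geq \mu+\tfrac{\sigma}{\sqrt{\pi\log 2}}\sqrt{\log k}$ for i.i.d.\ Gaussians (quoted in the paper from \cite{vH}). The only cosmetic difference is that the paper names the comparison target $\hat Q^{\mathrm{cyc,tar}}$ (cyclic exploration with a frozen target matrix and a single final update at $s_0$), which is precisely your $\gamma\max_a\tfrac1N\sum_{j=1}^N X^a_j$.
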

In simple words, the expected overestimation in $n$ episodes of $Q$-learning on the left-side of the bandit MDP is at least of the order $\frac{\sigma \sqrt{k\log(k)}}{\sqrt{n}}$. Higher variance and more actions obviously lead to stronger overestimation. 

\begin{proof}
    In contrast to \cite{VanHasselt:2011} we do not assume synchronous  exploration of all actions in each episode. Instead, we give a comparison argument that allows to reduce sufficiently exploratory exploration to simple cyclic exploration. The approach is motivated by the target network (here: target matrix) trick from DQN \cite{dqn}. With target networks the update works as follows:
     \begin{align*}
        Q_{t+1}(s,a)\leftarrow (1-\alpha_t) Q_t(s,a)+\alpha_t \big(r+\gamma \max_{a'}\bar Q_t(s',a')\big),
    \end{align*}
    where $\bar Q$ is kept constant for a fixed number of steps and then updated from the current $Q$-matrix. Although the target network was introduced in DQN to reduce overfitting in function approximation, it also reduces the overestimation of $Q$-values. We use the latter effect locally in this proof to construct a lower bound. The target matrix is kept constant ($0$ matrix) at $s_0$ until the last update. We show that (i) this target matrix trick with cyclic exploration yields a lower bound for the overestimation bias of standard $Q$-learning with sufficient exploratory  exploration strategy and (ii) allows for computations using elements of probability theory. In what follows we denote by $X_i^a$ the reward obtained when playing chosing $a$ for the $i$th time. By assumption all $(X_i^a)$ are iid.
    
    \textbf{Step 1:} In the first step we show that using cyclic exploration (playing one action after the other) for $N$ rounds ($Nk$ steps in total) followed by one update at $(s_0,\text{"left"})$ minimizes the overestimation of $Q$-learning estimators $\hat Q^{cyc,tar}(s_0,\text{"left"})$ that explore each action at most $N$ times.   
    For the claim we compare arbitrary $Q$-learning with the cyclic variant.    
    Recalling the $Q$-learning update (with arbitrary exploration rule) and the step-size schedule shows that 
    \begin{align*}
        Q_{t}(s_1,a)=\frac{1}{T_{s,a}(t)}\sum_{j=1}^{T_{s,a}(t)} X^a_j\sim \mathcal N(\mu,t\sigma^2).
    \end{align*}
    Since $s_0$ is explored once per episode the step-size schedule of regular $Q$-learning yields
    \begin{align*}
        Q_{Nk}(s_0,\text{"left"})= \frac{1}{Nk}\sum_{t=1}^{Nk} \gamma\max_a Q_t(s_1,a)
    \end{align*}
    for the $N$th episode. We next show that $\E[Q_{Nk}(s_0,\text{"left"})]\geq \E[\hat Q^{cyc,tar}(s_0,\text{"left"})]$. Denoting $n_k(t)=T_{s_1,a_k}(t)$ and using Lemma \ref{lemma:martin} we obtain
    \begin{align*}
        \E[Q_N(s_0,\text{"left"})]
         &=\E\Big[\frac{1}{Nk}\sum_{t=1}^{Nk} \gamma\max_a\big\{ Q_t(s_1,a_1),...,Q_t(s_1,a_k)\big\}\Big]\\
        &=\frac{1}{Nk}\sum_{t=1}^{Nk}\gamma \E\Big[\max \Big\{\frac{1}{n_1(t)}\sum_{j=1}^{n_1(t)}X^{a_1}_j,...,\frac{1}{n_k(t)}\sum_{j=1}^{n_k(t)}X^{a_k}_j\Big\}\Big]\\
        &\geq\frac{1}{Nk}\sum_{t=1}^{Nk}\gamma \E\Big[\max \Big\{\frac{1}{N}\sum_{j=1}^{N}X^{a_1}_j,...,\frac{1}{N}\sum_{j=1}^{N}X^{a_k}_j\Big\}\Big]\\
        &=\gamma \E\Big[\max \Big\{\frac{1}{N}\sum_{j=1}^{N}X^{a_1}_j,...,\frac{1}{N}\sum_{j=1}^{N}X^{a_k}_j\Big\}\Big]\\
        &=\E[\hat Q^{cyc,tar}(s_0,\text{"left"})].
    \end{align*}
\textbf{Step 2:} We now analyze the overestimation bias for the $Q$-learing with target matrix trick after $Nk$ episodes. To deduce the claim we use a fact on the expectation of the maximum of independent $\mathcal N(\mu,\sigma^2)$-distributed random variables:
\begin{align*}
    \mu+\frac{1}{\sqrt{\pi\log(2) }}\sigma\sqrt{\log(k)}\leq \E[\max\{X_1,...,X_k\}]\leq \mu+\sqrt{2}\sigma\sqrt{\log(k)}
\end{align*}
The inequality can for instance be found in \cite{vH}. According to the step-size schedule the update is as follows. If $N=nk$, i.e. in cyclic exploration every action is played $n$ times, then $Q^{cyc,tar}_{Nk}(s_1,a)=\frac{1}{N} \sum_{i=1}^N X_i^a$ for independent $\mathcal N(\mu,\sigma^2)$-distributed reward samples. Thus, $Q_{Nk}(s_1,a_1),...,Q_{Nk}(s_1,a_k)$ are iid and $\mathcal N(\mu,\frac{\sigma^2}{N})$-distributed. The final update after $N$ episodes is to set $$\hat Q^{cyc,tar}(s_1,\text{"left"})= \gamma \max \{Q_{Nk}(s_1,a_1),...,Q_{Nk}(s_1,a_k)\}\overset{(d)}{=} \gamma \max \{Z_1,...,Z_k\}$$ 
for $Z_1,...,Z_k$ iid $\mathcal N(\mu,\frac{\sigma^2}{N})$. The claim follows from the lower bound on the expectation of maxima of independent Gaussians.
\end{proof}

The computations give a number of quantitative insights. First, it becomes very clear why explicit computations for $Q$-learning are complicated. The stochastic approximation update is closely related to sums (exactly equal to sums for $\frac{1}{\# \text{visits}}$  step-sizes, while the update target involves a maximum. Unfortunately, sums and maxima of random variables do not get along well. We thus performed computations for Gaussian reward distributions for which sums are Gaussian again. Maxima of Gaussian random variables (processes) are a well-studied field in Mathematics and estimates can be derived. 

Finally, the computations show how variances influence the overestimation problem. Stochastic rewards with high variance are more problematic than rewards with small variance. In the next section we study distribution RL in the same simple problem.

\subsection{On the use of variance in local overestimation control via distributional RL (Proof of Proposition \ref{prop3})}
Our computations for the bandit MDP above suggest uncertainty at next states (aleatoric uncertainty of rewards, epistemic uncertainty through small $N$) lead to larger overestimation. Thus, algorithms mitigating the overestimation problem locally at $(s,a)$ must somehow use uncertainty at next states $s'$. In the following proposition we analyze the distributional $Q$-learning update scheme (compare Section 8 of \cite{rowland18})
\begin{align*}
    \eta(s,a)\leftarrow (1-\alpha) \eta(s,a)+\alpha\big(b_{r,\gamma}\#\eta(s',a^*)\big),
\end{align*}
with $a^*=\operatorname{argmax}_{a'}Q(s',a')$, where $Q(s',a')$ are the (random) sample means of the estimated return distributions $\eta(s',a')$. The bootstrap function is $b_{r,\gamma}(z)= r+\gamma z$ and $f\#\nu(B):=\nu(f^{-1}(B))$ denotes the push-forward of measures. Recall from the main text that distributional QL learns random measures, $\eta(s,a)$ itself is a probability measure that depends on the random samples used in the updates. As a probability measure $\eta(s,a)$ has an expectation and a variance that are both random in terms of the random samples. The situation becomes a bit tricky as we are going to take variances of the variances. To avoid confusion we use an analogy to statistics and speak of (random) sample averages $M$ (resp. sample variances $S^2$) of $\eta(s,a)$ and expectation $\E$ (resp. variance $\mathbb V$) for the integrals against the true randomness induced by the probability space behind all appearing random variables.

Motivated by the previous section we only study cyclic exploration with "target measure", i.e. all actions in $s_1$ are explored $N$ times before bootstrapping the estimates to $s_0$. Using the standard $\frac{1}{\# \text{visits}}$-step-size schedule allows us to carry out fairly explicit computations using tools from probability theory. Most importantly, we can derive the exact distribution for the sample variance of $\eta(s_0,\text{"left"})$, the state-action pair at risk for overestimation. The insight obtained from the computation motivates our ADDQ algorithm.
\begin{proposition}\label{prop1}
    Consider the bandit MDP introduced above with $\mathcal N(\mu,\sigma^2)$ reward distributions and $k$ actions. Denote by $\hat{ \eta}^{cyc,tar}(s_0,\text{"left"})$ the return distribution of distributional $Q$-learning with cyclic exploration, step-size schedule $\alpha_t(s,a)=\frac{1}{T_{s,a}(t)}$, and "target distribution". More precisely, all arms are $N$ times explored before the first and only update at $(s_1,\text{"left"})$. It then follows that
    \begin{align*}
            S^2 (\hat{ \eta}^{cyc,tar}(s_0,\text{"left"}))\sim \frac{\sigma^2}{N-1} \chi_{N-1}^2,
    \end{align*}
    where $\chi_n^2$ denotes the chi-squared distribution with $n$ degrees of freedom.

\end{proposition}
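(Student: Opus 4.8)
The plan is to reduce the distributional $Q$-learning iterates to empirical distributions of the sampled rewards, and then to exploit the Gaussian-specific independence of sample mean and sample variance: the $\operatorname{argmax}$ that selects which next-state arm is bootstrapped distorts the sample mean (this is the source of the overestimation bound in Proposition~\ref{prop2}) but leaves the sample variance untouched.

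First I would unroll the recursion at the bandit state $s_1$. Every action played in $s_1$ leads directly to a terminal state, so the distributional target is $b_{r,\gamma}\#\delta_0=\delta_{r}$ with $r\sim\mathcal N(\mu,\sigma^2)$ a fresh reward, i.e. the update reads $\eta(s_1,a)\leftarrow(1-\alpha)\eta(s_1,a)+\alpha\,\delta_r$. With cyclic exploration and $\alpha_t(s_1,a)=1/T_{s_1,a}(t)$, the same elementary identity recalled earlier (by which the $1/t$-step stochastic-approximation recursion computes a running average) shows that after every arm has been played $N$ times,
\[
  \eta_{Nk}(s_1,a)=\tfrac1N\sum_{j=1}^{N}\delta_{X^a_j},\qquad a=1,\dots,k,
\]
with $(X^a_j)$ iid $\mathcal N(\mu,\sigma^2)$. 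Since the target distribution is held frozen until the single update at the upstream state, that update (step-size $1$, first visit) uses greedy action $a^*=\operatorname{argmax}_a M(\eta_{Nk}(s_1,a))=\operatorname{argmax}_a\bar X^a$, where $\bar X^a:=\tfrac1N\sum_j X^a_j$, and produces $\hat\eta^{cyc,tar}(s_0,\text{"left"})=b_{0,\gamma}\#\eta_{Nk}(s_1,a^*)=\tfrac1N\sum_j\delta_{\gamma X^{a^*}_j}$. Its sample variance is hence $\gamma^2$ times $S^2_{a^*}:=\tfrac1{N-1}\sum_j(X^{a^*}_j-\bar X^{a^*})^2$, the usual unbiased sample variance of the selected arm (the factor $\gamma^2$, coming from the zero reward and the discount, is a deterministic rescaling suppressed in the statement).

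The main point is then to identify the law of $S^2_{a^*}$. I would invoke Cochran's theorem (equivalently Basu's theorem): for a size-$N$ sample from $\mathcal N(\mu,\sigma^2)$, the sample mean and sample variance are independent and $(N-1)S^2_a/\sigma^2\sim\chi^2_{N-1}$. As the $k$ arms use disjoint and mutually independent reward samples, the vector $(\bar X^1,\dots,\bar X^k)$ is independent of $(S^2_1,\dots,S^2_k)$, whose coordinates are iid; and $a^*=\operatorname{argmax}_a\bar X^a$ (ties being a null event, broken arbitrarily) is a measurable function of the means alone, so $a^*$ is independent of $(S^2_1,\dots,S^2_k)$. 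A one-line conditioning identity then gives, for every Borel $B$,
\[
  \Pr\big(S^2_{a^*}\in B\big)=\sum_{a=1}^{k}\Pr(a^*=a)\,\Pr(S^2_a\in B)=\Pr\big(\tfrac{\sigma^2}{N-1}\chi^2_{N-1}\in B\big),
\]
which is the claimed distribution; the stated mean $\sigma^2$ and variance $\tfrac{2\sigma^4}{N-1}$ follow from $\E[\chi^2_{N-1}]=N-1$ and $\mathbb{V}(\chi^2_{N-1})=2(N-1)$.

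I expect the conceptual step in the last paragraph to be the only real obstacle: for a non-Gaussian reward law the sample variance of the $\operatorname{argmax}$-selected arm would itself be biased (just as its sample mean is — precisely the overestimation phenomenon), so the cleanliness of the statement rests entirely on the Gaussian decoupling of mean and variance. A secondary thing to be careful about is that the ``target distribution'' hypothesis is genuinely needed: without it the estimate at the upstream state would be a $\delta_0$-heavy mixture rather than the exact empirical measure $\tfrac1N\sum_j\delta_{\gamma X^{a^*}_j}$, and the clean $\chi^2$ law would be lost.
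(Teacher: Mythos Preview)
Your proposal is correct and follows essentially the same route as the paper: unroll the distributional recursion to empirical measures at $s_1$, identify the greedy action via sample means, and then use the Gaussian independence of sample mean and sample variance (the paper invokes the same ``well-known'' fact you attribute to Cochran/Basu) to conclude that the $\operatorname{argmax}$-selection leaves the $\chi^2_{N-1}$ law of the sample variance intact. Your remark about the suppressed $\gamma^2$ factor is apt---the paper's proof likewise does not track this deterministic rescaling explicitly.
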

It is interesting to note that while there is a simple distributional expression for the sample variance there is no simple expression for the distribution of the sample mean $\max\{\frac{1}{N}\sum_{i=1}^N X_i^{a_1},...,\frac{1}{N}\sum_{i=1}^N X_i^{a_k}\}$ which is the maximum of independent Gaussians.
\begin{proof}
    For the proof we need to spell-out the distributional $Q$-learning update and then use basic results from statistic for sample-mean/sample-variances.

    Let us first understand the distributional update at the last step. Similar to scalar stochastic approximation schemes, the $\frac{1}{n+1}$-step-size schedule also gives the distributional estimator
    \begin{align}
        \hat\eta^{cyc, tar}(s_1,a)=\frac{1}{N}\sum_{i=1}^N \delta_{X_i^a}
    \end{align}    
    for the pre-terminal state $s_1$ after $N$ explorations of action $a$. To see why write $\eta_n:=\frac{1}{n}\sum_{i=1}^n\delta_{X^a_i}$ to get
    \begin{align*}
        \eta_{n+1}=\frac{1}{n+1}\sum_{i=1}^{n+1} \delta_{X_i^a}
        =\Big(1-\frac{1}{n+1}\Big)\frac{1}{n}\sum_{i=1}^{n} \delta_{X_i^a}+\frac{1}{n+1}\delta_{X^a_{n+1}}
        =(1-\alpha_n) \eta_n+\alpha_n \delta_{X^a_{n+1}}
    \end{align*} 
    and recall that there is no max-term in the update for pre-terminal states. Note that the equal weights $\frac{1}{N}$ are a consequence of the step-size schedule. To compute the return distribution at $s_0$ for action "left" we need to identify $a^*$. For that sake we must identify the sample means of $\hat \eta^{cyc,tar}(s_1,a)$:
    \begin{align*}
        \hat Q^{cyc,tar}(s_1,a)=\frac{1}{N}\sum_{i=1}^N X_i^a \sim \mathcal N(\mu,\sigma^2/N)
    \end{align*}
    Now we chose $a^*=\argmax_a \hat Q^{cyc,tar}(s_1,a)$ and set    
    \begin{align*}
        \hat\eta^{cyc, tar}(s_0,\text{"left"})
        =b_{0,\gamma} \# \hat\eta^{cyc,tar}(s_1,a^*)
        =: \gamma X.
    \end{align*}
    While we do not know much about $X$ (it is the empirical distribution of the set of Gaussians $X_1^a,...,X_N^a$ with maximal sum) we know the exact distribution of the sample variance for the following reason. The sample variance of every $\eta^{cyc, tar}(s_1,a)$ is nothing what is called sample variance of the iid observation $X_1^a,...,X_k^a$ in statistics. If $M_a:=\frac{1}{N}\sum_{i=1}^N X_i^a$ is the sample mean and $S_a^2:=\frac{1}{N-1}\sum_{i=1}^N(X_i^a-M^a)^2$ the sample variance then it is well-known that 
    \begin{itemize}
        \item $\E[S^2_a]=\sigma^2$,
        \item $S_a^2$ is $\frac{\sigma^2}{N-1}\chi^2_{N-1}$-distributed,
        \item $M_a$ and $S^2_a$ are independent.
    \end{itemize}
    The third property implies that if $S^2_{a_1},...,S^2_{a_k}$ are independent sample variances and $a^*$ is chosen to maximize the sample means, then also $S^2_{a^*}\sim \frac{\sigma^2}{N-1} \chi^2_{N-1}$ while $M_{a^*}\nsim M_a$. 
\end{proof}
As a consequence, even though we cannot compute the distribution of the sample mean of $\hat\eta^{cyc, tar}(s_0,\text{"left"})$ we can compute the distribution of its sample variance as $\frac{\sigma^2}{N-1}\chi^2_{N-1}$. The expectation is $\E[S_a^2]=\sigma^2$ while the variance is $\mathbb V(S_a^2)=\frac{2\sigma^4}{N-1}$. Note that one might be tempted to believe the sample variance is independent of the number $k$ of actions. This is not the case, as the number of explorations $Nk$ needed for $N$ explorations depends on $k$. Alternatively, one might denote the number of total episodes by $n$ and replace $N$ by $\lfloor\frac{n}{k}\rfloor$.

We now come back to the motivation of our ADDQ algorithm, in particular the choice of $\beta$. Let us consider the two-sided bandit MDP with $\mathcal N(\mu_1,\sigma_1^2)$ (resp. $\mathcal N(\mu_2,\sigma_2^2)$) distributed rewards. Suppose $\mu_1>\mu_2$ are small but $\sigma_1^2\ll \sigma_2^2$  and/or $k_1\ll k_2$. The MDP is delicate as the following can happen using QL (overestimation) and DQL (global overestimation reduction). In QL the agent will believe for a long time that "right" is the optimal action in $s_0$. In DQL the agent will believe for a long time that "down" is the optimal action as both non-trivial $Q$-values can be underestimated to be negative. It is thus more reasonable to mitigate overestimation locally instead of globally. This is what ADDQ achieves through the choice of $\beta$.

To use our explicit computations above, the agent explores both sides with cyclic exploration and target-matrix update. Now assume both sides have been explored with a total number of $n$ episodes each. The agent knows the estimates $\hat \eta(s_0,\text{"left"})$ (resp. $\hat \eta(s_0,\text{"right"})$) and the corresponding $Q$-values (sample means of return distributions) $\hat Q(s_0,\text{"left"})$ (resp. $\hat Q(s_0,\text{"right"})$). From the overestimation study of the previous section, the agent unwittingly overestimated the true $Q$-values in the order of $\frac{\sigma_1 \sqrt{k_1\log(k_1)}}{\sqrt{n}}$ (resp. $\frac{\sigma_2 \sqrt{k_2\log(k_2)}}{\sqrt{n}})$ and will believe "right" is the correct action. Now the smart agent also knows he/she should take into account the sample variances that is known to the agent and we know are of the order $\sigma_1^2$ (resp. $\sigma_2^2$) with concentration depending on $k_1$ (resp. $k_2$). The local overestimation control of $\beta$ (based on relative sample variances) from \eqref{adap_rule} thus compares $\sigma_1^2$ to $\sigma_2^2$ and suggests to mitigate overestimation of $\hat Q(s_0,\text{"right"})$. In our algorithm we use double $Q$-learning to mitigate, the same idea can of course be integrated into other algorithms (such as changing number of truncation atoms in truncated quantile critics algorithms \cite{tqc}).


\newpage

\section{Experimental confirmation of theoretical results for two-sided bandit MDP}

We provide an experimental analysis of the two-sided bandit MDP for which we proved theoretical results on the overestimation. For reimplementation purposes we collect here all required information on the environment and the training for all plots.

Environment:
\begin{itemize}
    \item $\gamma=0.9$
    \item $\mu_1=-0.1$, $\mu_2=0.1$, $k_2=5$, $\sigma_2=1$; the correct decision is thus moving to the right in the Start State.
\end{itemize}
Distributional properties for ADDQ:
\begin{itemize}
    \item categorical parametrization, 51 atoms equally spaced on $\lbrack-3,3\rbrack$
    \item initialization as $\delta_0$
\end{itemize}
Algorithmic choices:
\begin{itemize}
    \item $\beta$ is chosen according to (\ref{adap_rule})
    \item step-size schedules $\alpha_t(s,a)=\frac{1}{T_{s,a}(t)}$, with $T_{s,a}(t)$ the number of visits in $(s,a)$ up to time $t$, i.e. $\frac{1}{n}$ state-action wise counted 
    \item exploration: either $\epsilon$-greedy with $\epsilon$ linearly decreasing from $1$ to $0.1$ in $10000$ steps, then constant (E) or uniform random (U)
\end{itemize}
Policy evaluation: evaluation of 3 steps with a frequency of every 500 steps using current greedy policy, correct action rates refer to if the exploration was greedy

In order to demonstrate the proven proportionality of the overestimation of $Q$-values by QL in the number of arms and the variance to the left, we keep one of the values fixed and compare different values in the other one.
\begin{itemize}
    \item First experiment: $\sigma_1=5$ fixed, iterating over $k_1=5,10,15,20$, denoted as $K$ in the legend
    \item Second experiment: $k_1=10$ fixed, iterating over $\sigma_1=2,4,6,8$, denoted as $S$ in the legend
\end{itemize}

The plots also demonstrate that

\begin{itemize}
    \item ADDQ is much better in terms of bias, leveraging local information given by the (relative) variances
    \item the variances and relative variances at state 0 capture the real variances given by the MDP
    \item although our theorems assumed a sufficiently exploratory policy, the results seem to generalize to the much more commonly used $\epsilon$-greedy setting
\end{itemize}
\newpage

\begin{figure}[H]
    \centering
    \includegraphics[width=0.95\textwidth]{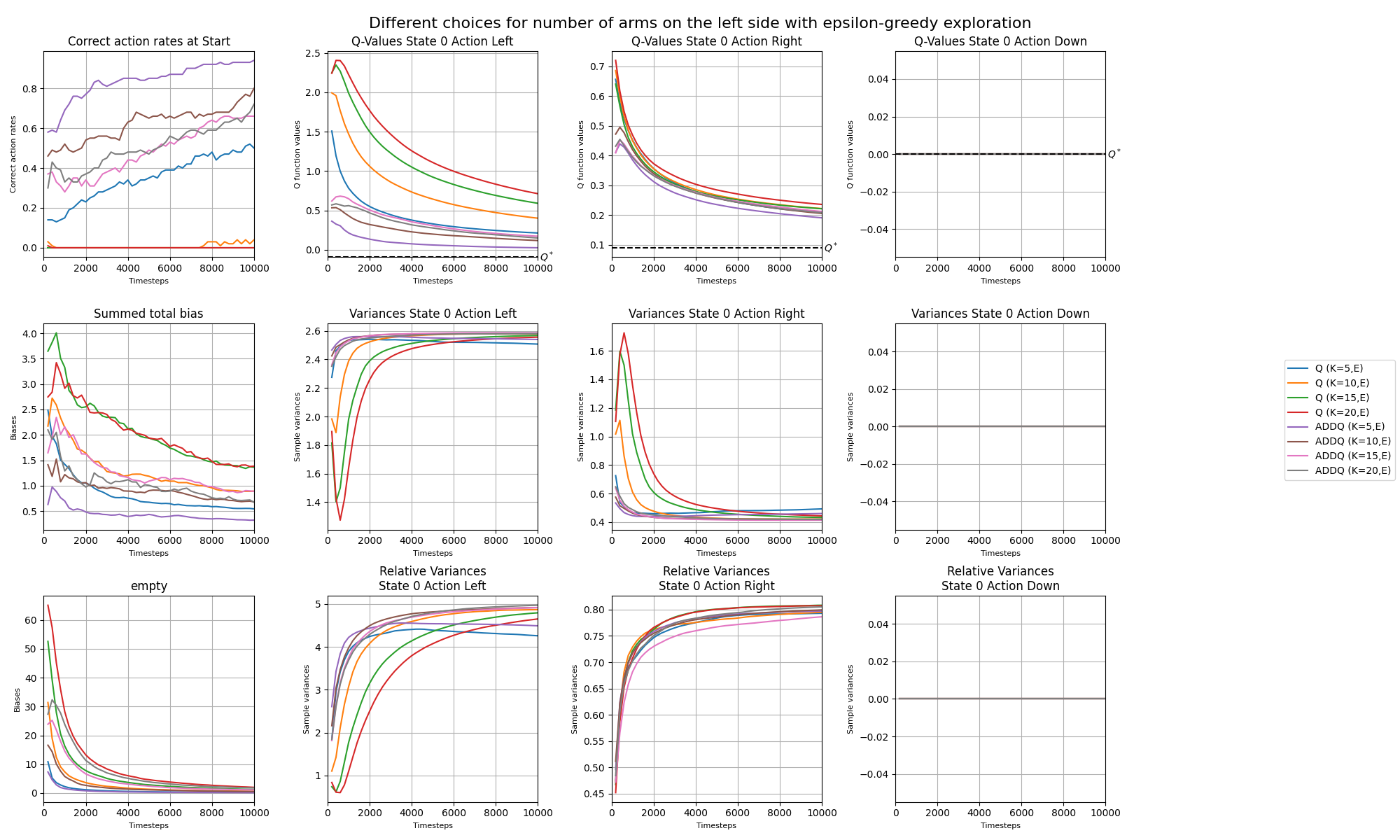}
    \includegraphics[width=0.95\textwidth]{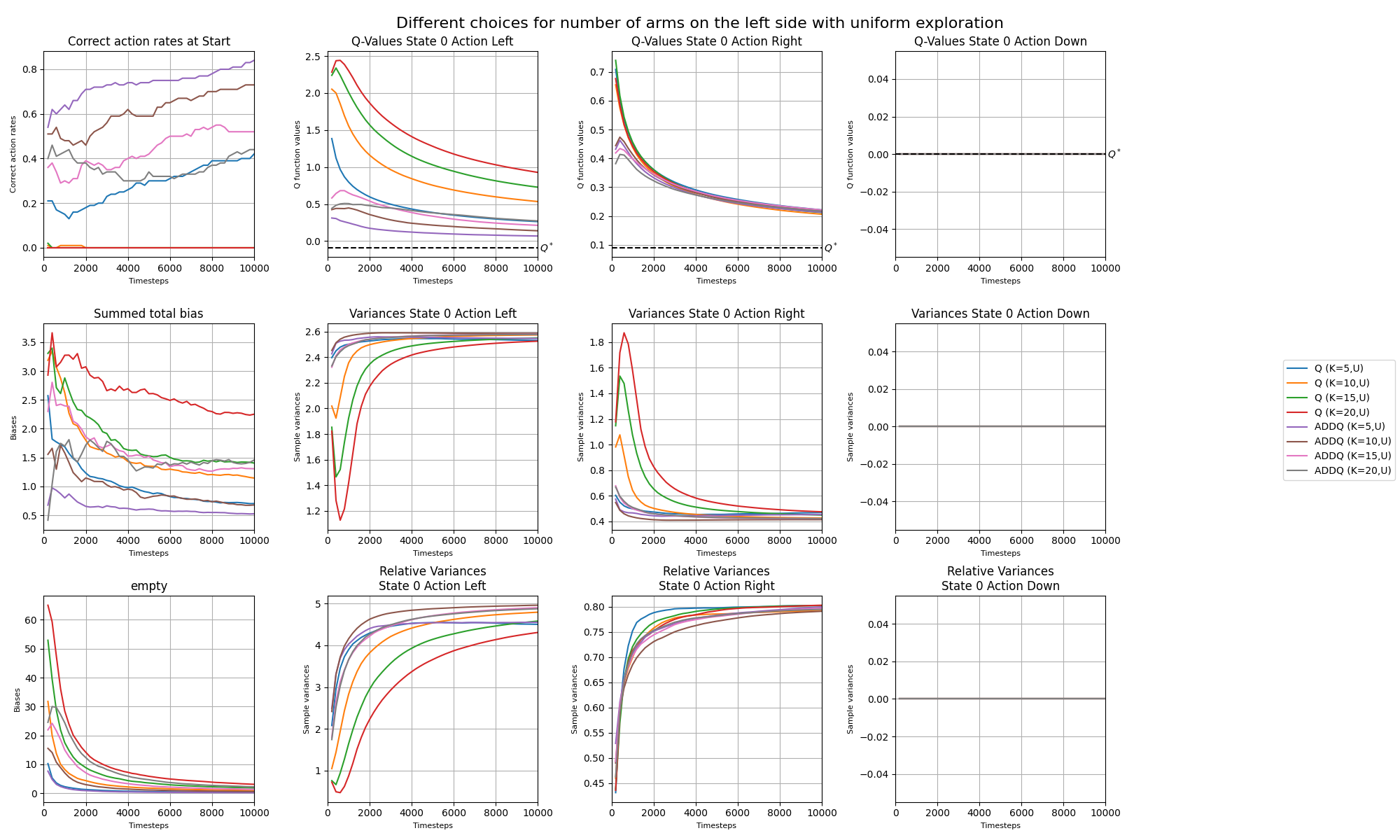}
    \caption{Comparing ADDQ and QL on two-sided bandit MDP with different number of arms on the left side and different exploration settings.}
    \label{fig:11}
\end{figure}

\begin{figure}[H]
    \centering
    \includegraphics[width=0.95\textwidth]{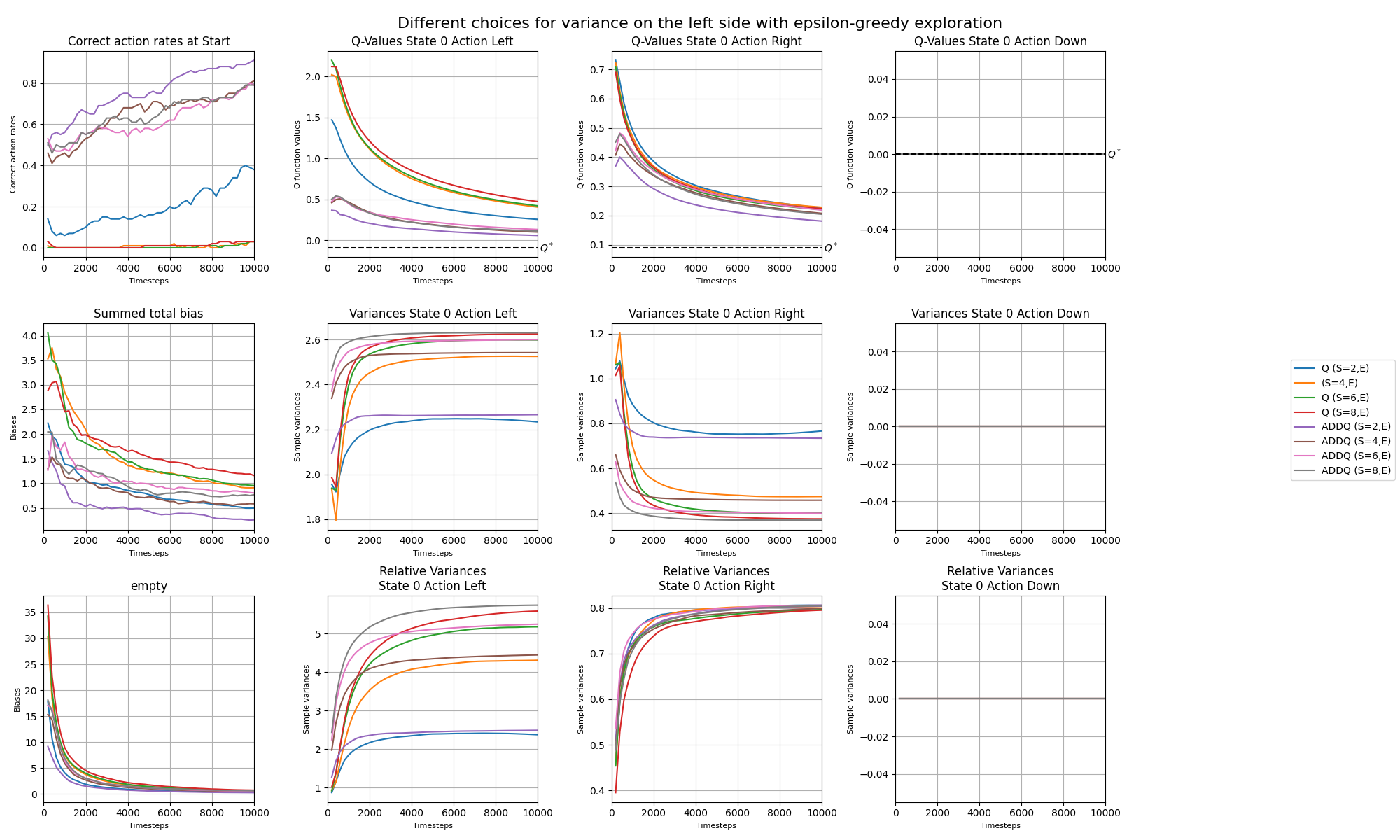}
    \includegraphics[width=0.95\textwidth]{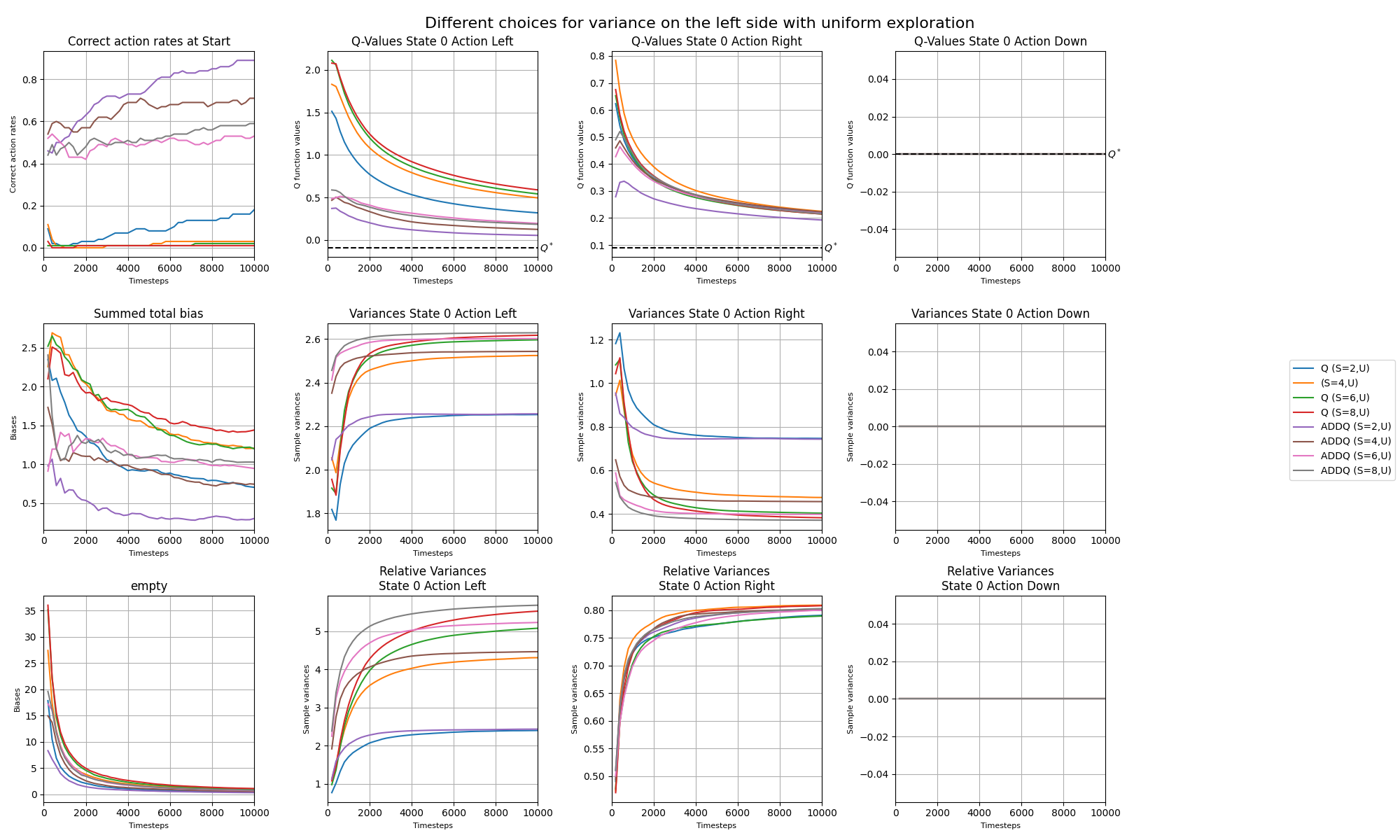}
    \caption{Comparing ADDQ and QL on two-sided bandit MDP with different variances on the left side and different exploration settings.}
    \label{fig:13}
\end{figure}

\newpage

\section{Grid world details}\label{app:gred}
\subsection{Experiment from the main text}
For reimplementation purposes we collect here all required information on the environment and the training for all plots.

Environment:
\begin{itemize}
    \item $\gamma=0.9$
    \item white low stochastic high average reward region: rewards $-0.05$, $+0.05$ with equal probabilities
    \item gray high stochastic low average reward region: rewards $-2.1$, $+2$ with equal probabilities
    \item goal: deterministic reward of $1$, fake goal: deterministic reward of $0.65$
\end{itemize}
Distributional properties for CategoricalQ, CategoricalDouble, and ADDQ:
\begin{itemize}
    \item categorical parametrization, 51 atoms equally spaced on $[-3,3]$
    \item initialization as $\delta_0$
\end{itemize}
Algorithmic choices:
\begin{itemize}
    \item $\beta$ is chosen according to \eqref{adap_rule}
    \item step-size schedule $\alpha_t(s,a)=\frac{1}{T_{s,a}(t)}$, with $T_{s,a}(t)$ the number of visits in $(s,a)$ up to time $t$, i.e. $\frac{1}{n}$ state-action wise counted,
    \item exploration: $\varepsilon$-greedy with $\varepsilon$ linearly decreasing from $1$ to $0.1$ in $10000$ steps, then constant
\end{itemize}
Policy evaluation:
     evaluation of 6 steps with a frequency of every 500 steps, correct action rates refer to if the exploration was greedy

\begin{figure}[h]
    \centering
    \includegraphics[width=0.95\linewidth]{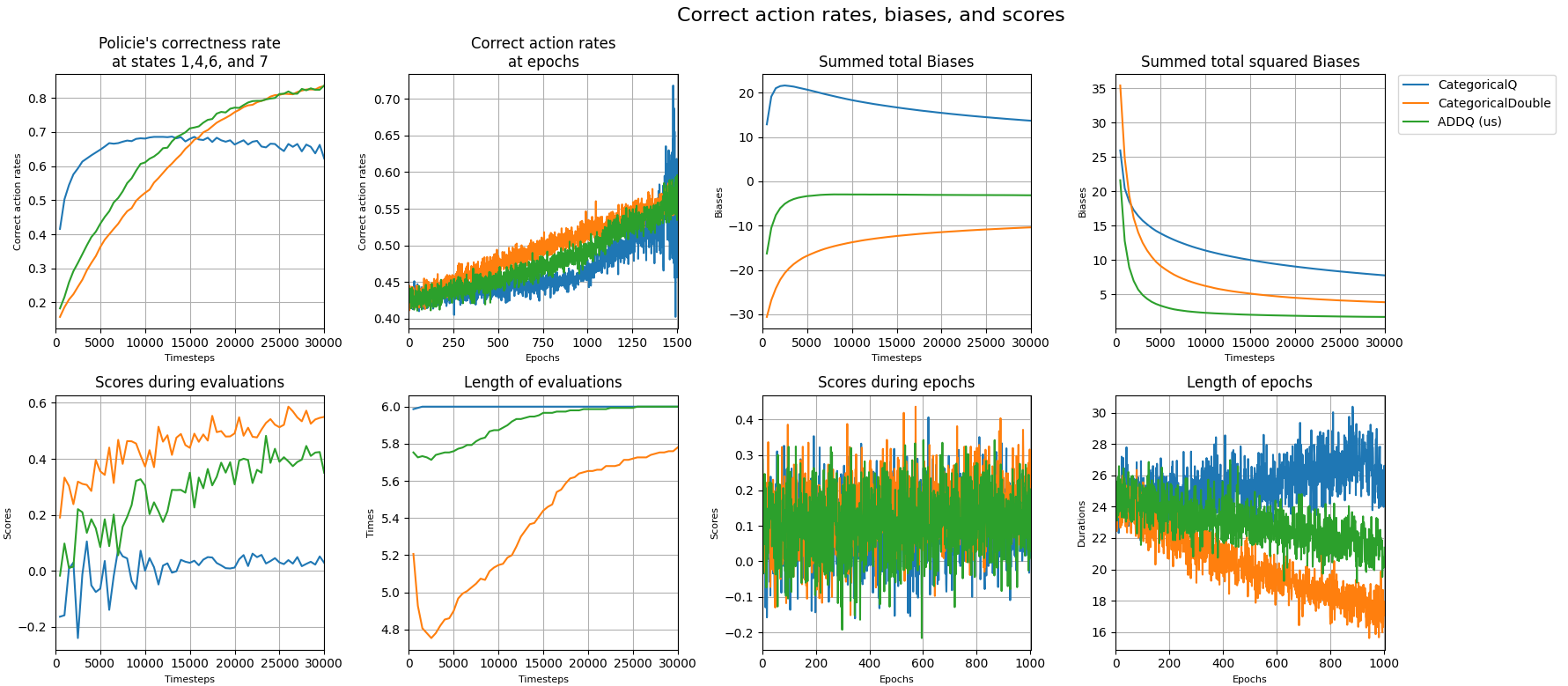}
    \caption{additional plots}
\end{figure}
For completeness we give plots pairing learning curves for $Q$-values with the corresponding sample variance. We give all state-action combinations for the interesting states 1 and 4 next to the fake goal, 6 and 7 next to the region with high stochasticity, and 10 and 14 before leaving the region with high stochasticity.
\begin{figure}
    \centering
    \includegraphics[width=0.7\linewidth]{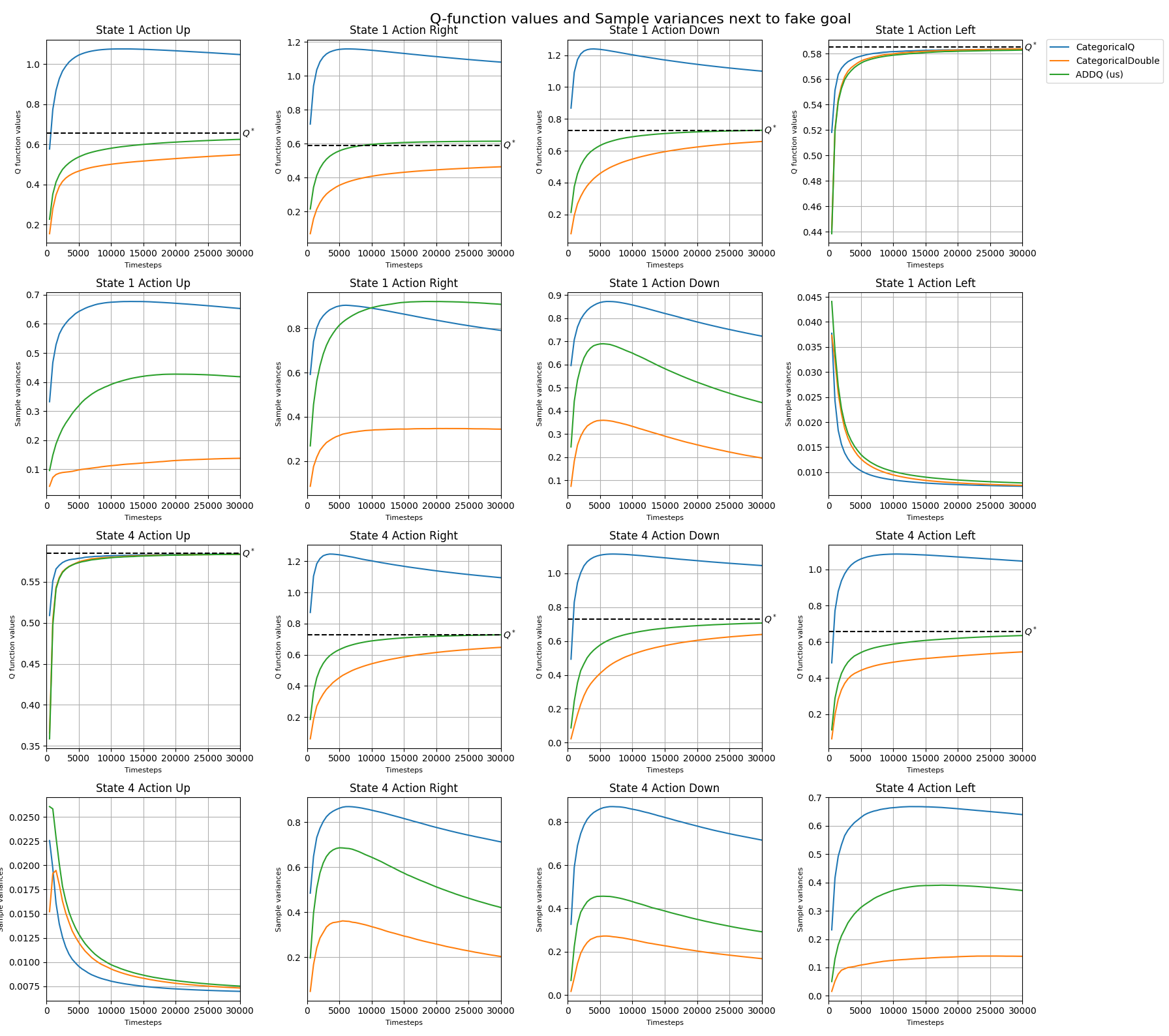}
     \includegraphics[width=0.7\linewidth]{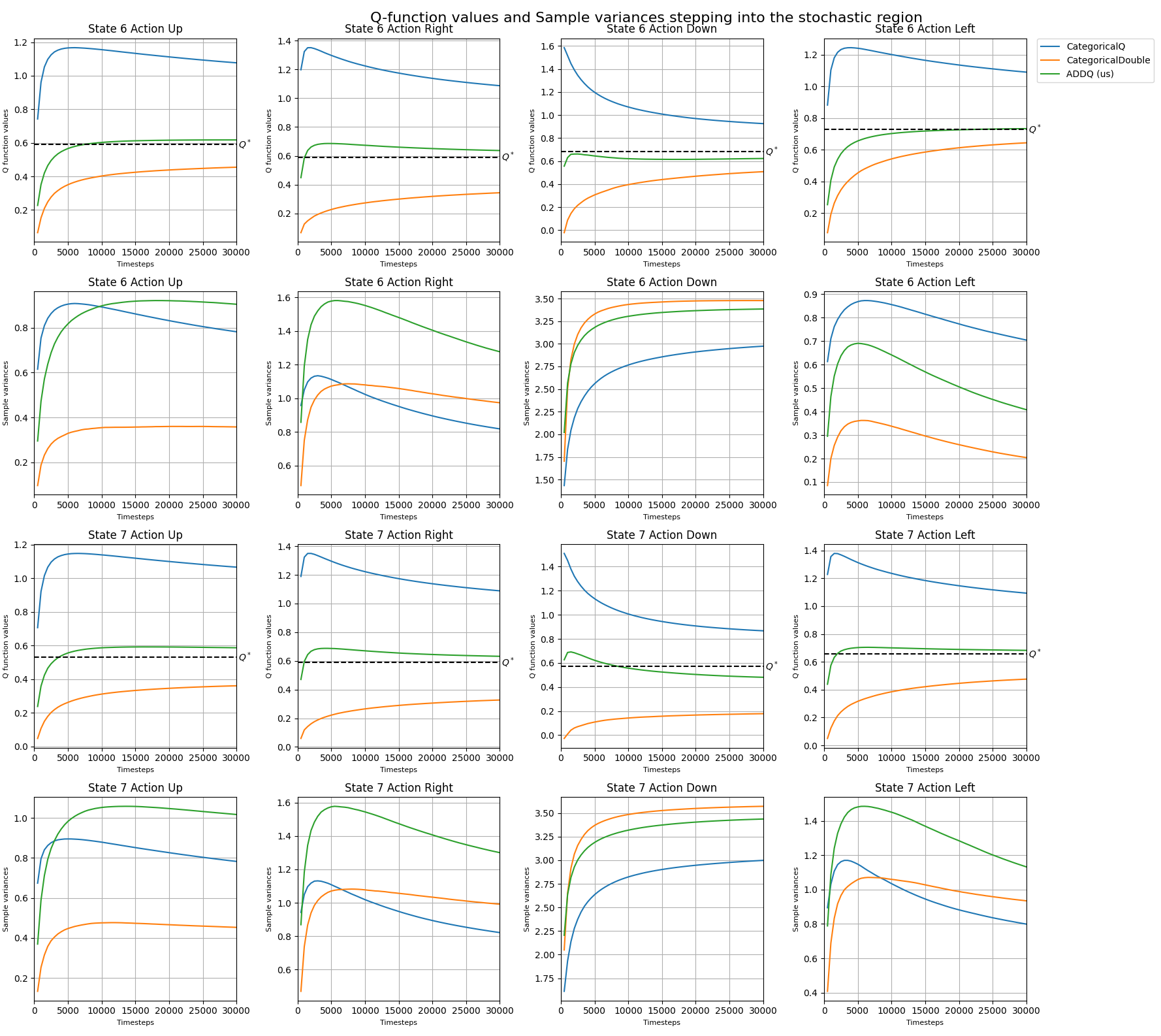}
\end{figure}

\newpage

\begin{figure}[h]
    \centering    
    \label{fig:enter-label} \includegraphics[width=0.7\linewidth]{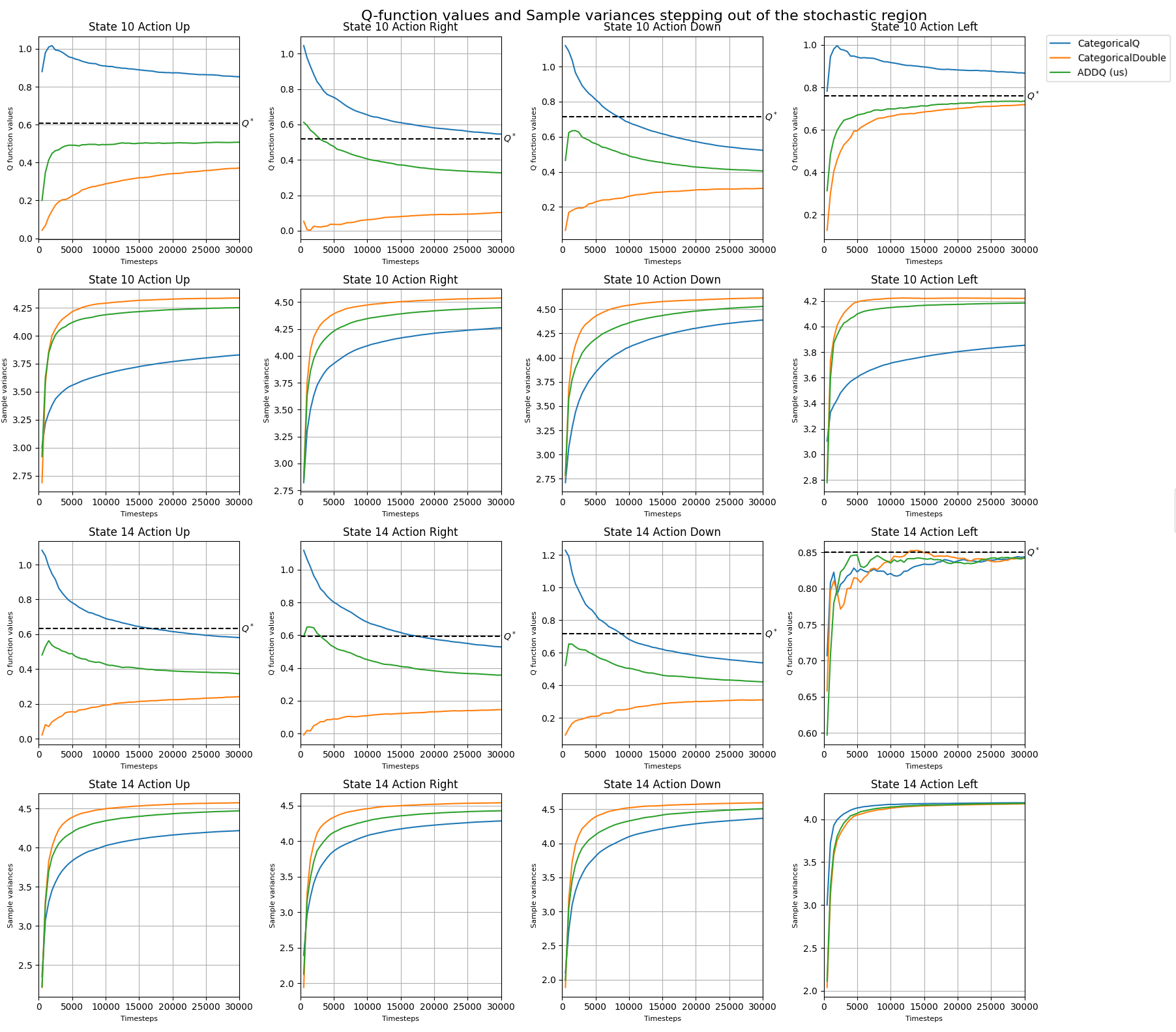}
\end{figure}

Finally, the following plot demonstrates that, in this GridWorld example, the relative variances are also strongly determined by the variances of the next state.

\begin{figure}[H]
    \centering
    \includegraphics[width=0.7\textwidth]{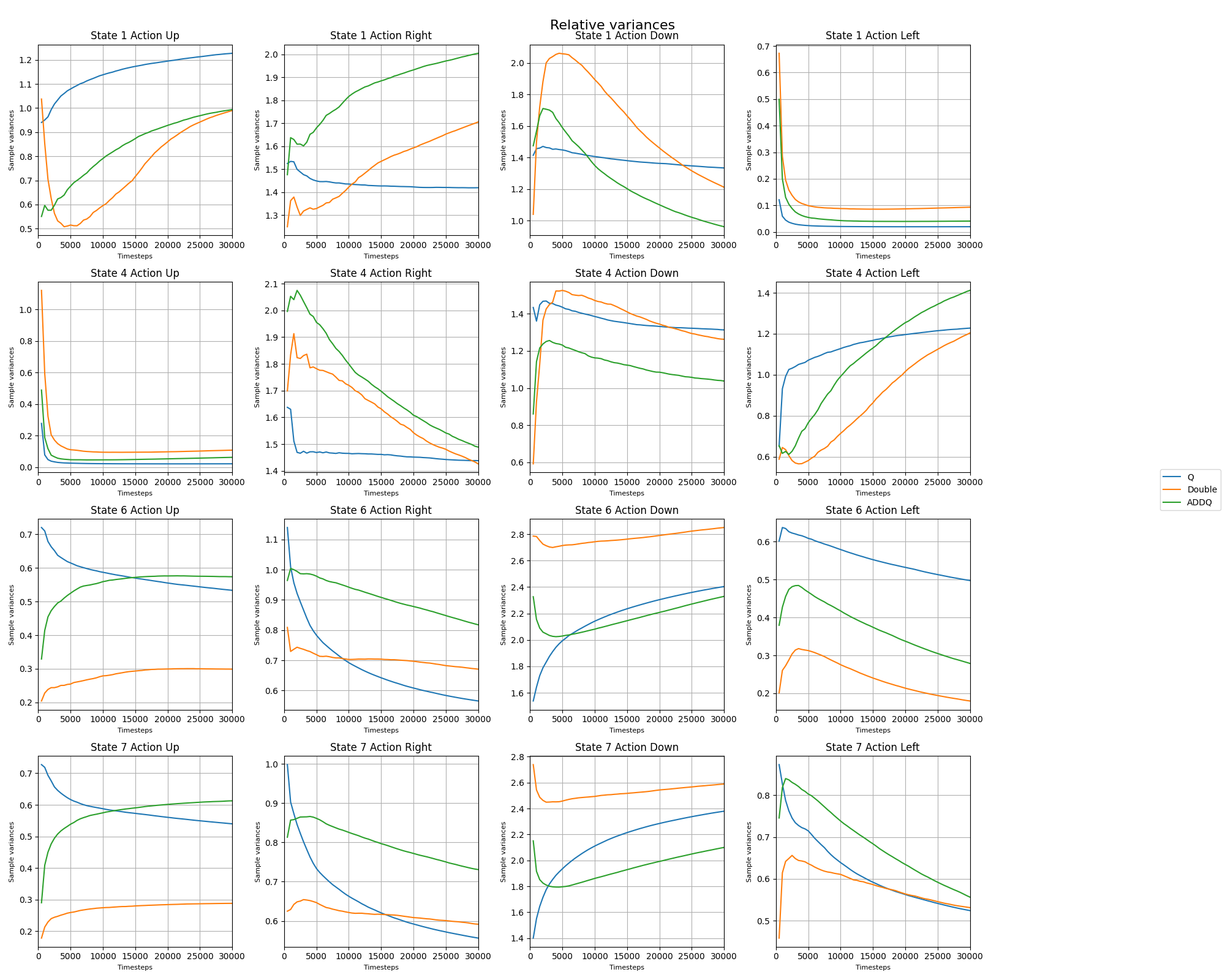}
\end{figure}

\subsection{Ablation study}\label{app:ablation}
In the following, on the same environment as before, we experiment with different choices for $\beta$ and compare with weighted DQL \cite{weightedDoubleQ}  with $c=10$ (WDQ), the standard choice from that paper. The choice of beta's names are comprised as follows:
\begin{itemize}
    \item (Optional) First two letters: Left-tilted (lt), Right-tilted (rt)
    \item First/Third letter: Neutral (n), Aggressive (a), Conservative (c)
    \item Final digit: Refers to the number of intervals in the definition of Beta (3 or 5)
\end{itemize}
The choices of Aggressive, Conservative, and Neutral refer to the trade-off of no interpolation (just choosing which Algorithm's update to take) and softening the interpolation, with neutral being in between the two choices and corresponding to the choice presented in the main body. Left- and Right-tilted refers to shifting the intervals for the relative Variance to fall into while choosing the interpolation coefficient. Left-tilted favors the Q update, Right-tilted the DQ update.\\
The concrete choices are:
\begin{align*}
    \text{n3:} \quad\quad 
    &\beta:= 
    \begin{cases}
    0.75&: S^2_{rel}(s,a) < 0.75 \\
    0.5&: S^2_{rel}(s,a)\in[0.75,1.25] \\
    0.25&: S^2_{rel}(s,a) > 1.25
    \end{cases}, \quad\quad\quad\quad
    &\text{a3:} \quad\quad
    &\beta:= 
    \begin{cases}
    1&: S^2_{rel}(s,a) < 0.99 \\
    0.5&: S^2_{rel}(s,a)\in[0.99,1.01] \\
    0&: S^2_{rel}(s,a) > 1.01
    \end{cases}\\
    \text{ltn3:} \quad\quad 
    &\beta:= 
    \begin{cases}
    0.75&: S^2_{rel}(s,a) < 1.25 \\
    0.5&: S^2_{rel}(s,a)\in[1.25,1.75] \\
    0.25&: S^2_{rel}(s,a) > 1.75
    \end{cases} , \quad\quad\quad\quad
    &\text{lta3:} \quad\quad 
    &\beta:= 
    \begin{cases}
    1&: S^2_{rel}(s,a) < 1.49 \\
    0.5&: S^2_{rel}(s,a)\in[1.49,1.51] \\
    0&: S^2_{rel}(s,a) > 1.51
    \end{cases}\\
    \text{rtn3:} \quad\quad 
    &\beta:= 
    \begin{cases}
    0.75&: S^2_{rel}(s,a) < 0.25 \\
    0.5&: S^2_{rel}(s,a)\in[0.25,0.75] \\
    0.25&: S^2_{rel}(s,a) > 0.75
    \end{cases}, \quad\quad\quad\quad
    &\text{rta3:} \quad\quad 
    &\beta:= 
    \begin{cases}
    1&: S^2_{rel}(s,a) < 0.49 \\
    0.5&: S^2_{rel}(s,a)\in[0.49,0.51] \\
    0&: S^2_{rel}(s,a) > 0.51
    \end{cases}\\
    \text{c3:} \quad\quad 
    &\beta:= 
    \begin{cases}
    0.6&: S^2_{rel}(s,a) < 0.6 \\
    0.5&: S^2_{rel}(s,a)\in[0.6,1.4] \\
    0.4&: S^2_{rel}(s,a) > 1.4
    \end{cases}, \quad\quad\quad\quad
    &\text{n5:} \quad\quad 
    &\beta:= 
    \begin{cases}
    1&: S^2_{rel}(s,a) \leq 0.25\\
    0.75&: S^2_{rel}(s,a)\in(0.25,0.75) \\
    0.5&: S^2_{rel}(s,a)\in[0.75,1.25] \\
    0.25&: S^2_{rel}(s,a)\in(1.25,1.75) \\
    0&: S^2_{rel}(s,a) \geq 1.75
    \end{cases}\\
    \text{ltc3:} \quad\quad 
    &\beta:= 
    \begin{cases}
    0.6&: S^2_{rel}(s,a) < 1.1 \\
    0.5&: S^2_{rel}(s,a)\in[1.1,1.9] \\
    0.4&: S^2_{rel}(s,a) > 1.9
    \end{cases}, \quad\quad\quad\quad
    &\text{ltn5:} \quad\quad 
    &\beta:= 
    \begin{cases}
    1&: S^2_{rel}(s,a) \leq 0.75\\
    0.75&: S^2_{rel}(s,a)\in(0.75,1.25) \\
    0.5&: S^2_{rel}(s,a)\in[1.25,1.75] \\
    0.25&: S^2_{rel}(s,a)\in(1.75,2.25) \\
    0&: S^2_{rel}(s,a) \geq 2.25
    \end{cases}\\
    \text{rtc3:} \quad\quad 
    &\beta:= 
    \begin{cases}
    0.6&: S^2_{rel}(s,a) < 0.1 \\
    0.5&: S^2_{rel}(s,a)\in[0.1,0.9] \\
    0.4&: S^2_{rel}(s,a) > 0.9
    \end{cases},\quad\quad\quad\quad
    &\text{rtn5:} \quad\quad 
    &\beta:= 
    \begin{cases}
    1&: S^2_{rel}(s,a) \leq -0.25\\
    0.75&: S^2_{rel}(s,a)\in(-0.25,0.25) \\
    0.5&: S^2_{rel}(s,a)\in[0.25,0.75] \\
    0.25&: S^2_{rel}(s,a)\in(0.75,1.25) \\
    0&: S^2_{rel}(s,a) \geq 1.25
    \end{cases}
\end{align*}
\begin{align*}
    \text{a5:} \quad\quad 
    &\beta:= 
    \begin{cases}
    1&: S^2_{rel}(s,a) \leq 0.99\\
    0.75&: S^2_{rel}(s,a)\in(0.99,0.995) \\
    0.5&: S^2_{rel}(s,a)\in[0.995,1.005] \\
    0.25&: S^2_{rel}(s,a)\in(1.005,1.01) \\
    0&: S^2_{rel}(s,a) \geq 1.01
    \end{cases}, \quad\quad\quad\quad
    &\text{c5:} \quad\quad 
    &\beta:= 
    \begin{cases}
    0.7&: S^2_{rel}(s,a) \leq 0.1\\
    0.6&: S^2_{rel}(s,a)\in(0.1,0.7) \\
    0.5&: S^2_{rel}(s,a)\in[0.7,1.3] \\
    0.4&: S^2_{rel}(s,a)\in(1.3,1.9) \\
    0.3&: S^2_{rel}(s,a) \geq 1.9
    \end{cases}\\
    \text{lta5:} \quad\quad 
    &\beta:= 
    \begin{cases}
    1&: S^2_{rel}(s,a) \leq 1.49\\
    0.75&: S^2_{rel}(s,a)\in(1.49,1.495) \\
    0.5&: S^2_{rel}(s,a)\in[1.495,1.505] \\
    0.25&: S^2_{rel}(s,a)\in(1.505,1.51) \\
    0&: S^2_{rel}(s,a) \geq 1.51
    \end{cases}, \quad\quad\quad\quad
    &\text{ltc5:} \quad\quad 
    &\beta:= 
    \begin{cases}
    0.7&: S^2_{rel}(s,a) \leq 0.6\\
    0.6&: S^2_{rel}(s,a)\in(0.6,1.2) \\
    0.5&: S^2_{rel}(s,a)\in[1.2,1.8] \\
    0.4&: S^2_{rel}(s,a)\in(1.8,2.4) \\
    0.3&: S^2_{rel}(s,a) \geq 2.4
    \end{cases}\\
    \text{rta5:} \quad\quad 
    &\beta:= 
    \begin{cases}
    1&: S^2_{rel}(s,a) \leq 0.49\\
    0.75&: S^2_{rel}(s,a)\in(0.49,0.495) \\
    0.5&: S^2_{rel}(s,a)\in[0.495,0.505] \\
    0.25&: S^2_{rel}(s,a)\in(0.505,0.51) \\
    0&: S^2_{rel}(s,a) \geq 0.51
    \end{cases}, \quad\quad\quad\quad
    &\text{rtc5:} \quad\quad 
    &\beta:= 
    \begin{cases}
    0.7&: S^2_{rel}(s,a) \leq -0.4\\
    0.6&: S^2_{rel}(s,a)\in(-0.4,0.2) \\
    0.5&: S^2_{rel}(s,a)\in[0.2,0.8] \\
    0.4&: S^2_{rel}(s,a)\in(0.8,1.4) \\
    0.3&: S^2_{rel}(s,a) \geq 1.4
    \end{cases}
\end{align*}
\newpage
\begin{figure}[H]
    \centering
    \includegraphics[width=\textwidth]{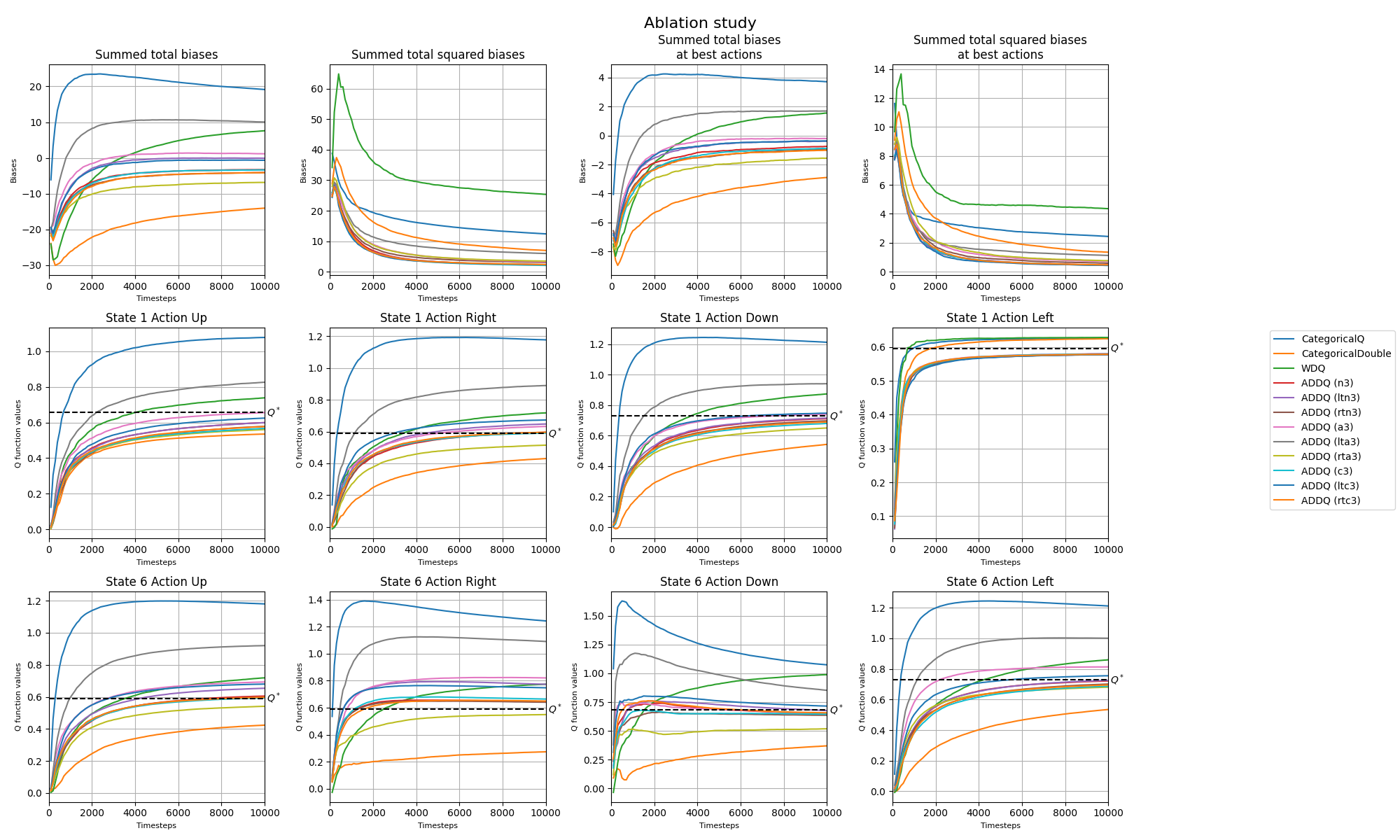}
    \label{fig:1}
\end{figure}
\begin{figure}[H]
    \centering
    \includegraphics[width=\textwidth]{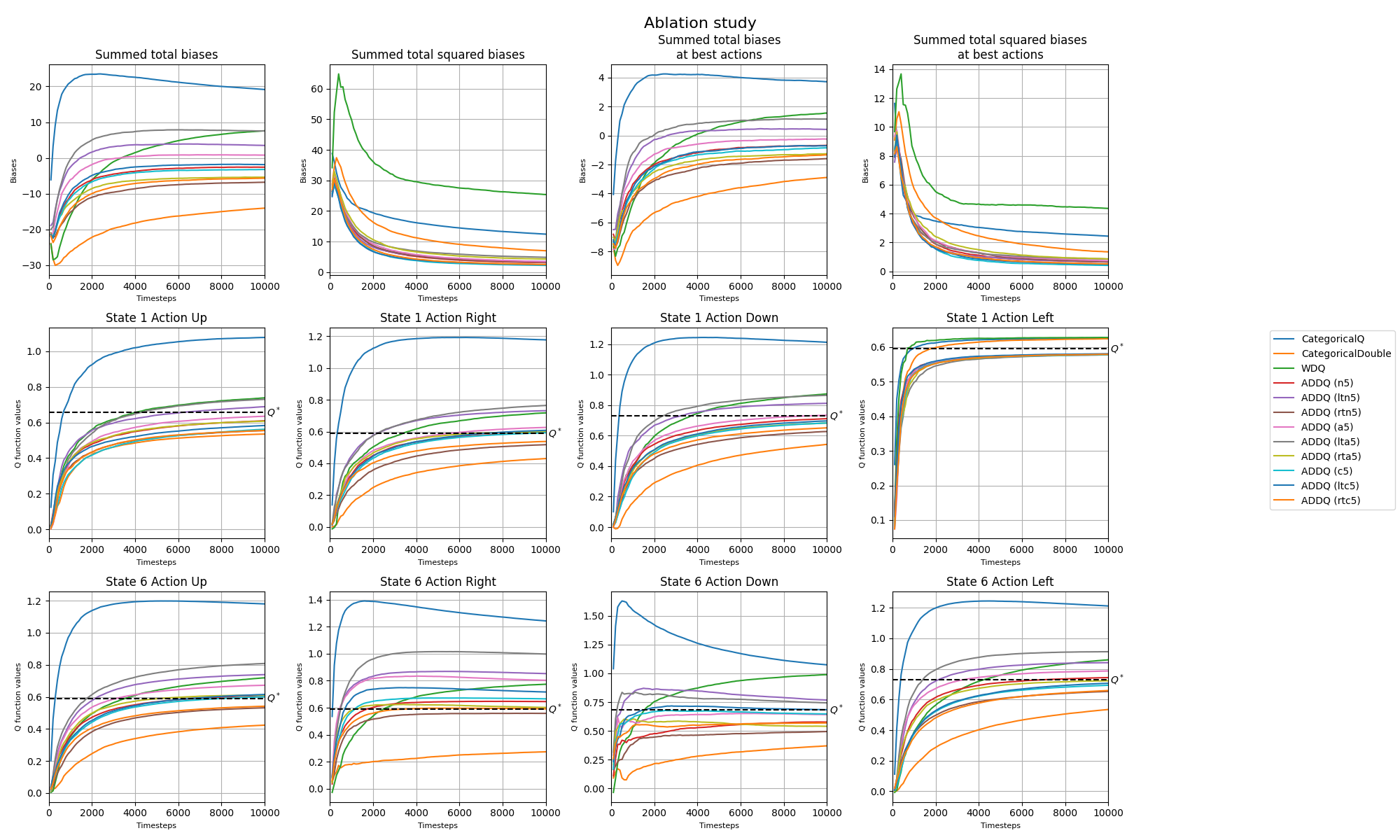}
    \caption{Ablation study plots regarding bias improvement. State 1 is adjacent to the fake goal, state 6 adjacent to the stochastic region.}
    \label{fig:2}
\end{figure}
It turns out that
\begin{itemize}
    \item the choice of thresholds in $\beta$ (hyperparameter to the algorithm) is harmless, results do not vary a lot,
    \item conservative choices seem to work especially well.
    \item weighted double Q-learning, an algorithm that is similar to ADDQ with non-distributional choice of $\beta$ is improved by our locally adaptive distributional RL based choice of $\beta$.
\end{itemize}

\subsection{Comparison to other algorithms}\label{sec:comparison}

We compared ADDQ with other bias reduction methods using different hyperparameters used in the respective papers on the same GridWorld environment.
\begin{itemize}
    \item Maxmin with 2, 4, 6, and 8 ensembles \cite{Maxmin}
    \item Ensemble Bootstrapped QL (EBQL) with 3, 7, 10, and 15 ensembles \cite{EnsembleQ}
    \item Randomized Ensemble DQL (REDQ) with 3 and 5 ensembles and size 1 and 2 of random update subset sizes \cite{redq}
\end{itemize}
It turns out that ADDQ decreases the estimation bias stronger than those algorithms while only using two ensembles.

\clearpage

\begin{figure}[H]
    \centering
    \includegraphics[width=\textwidth]{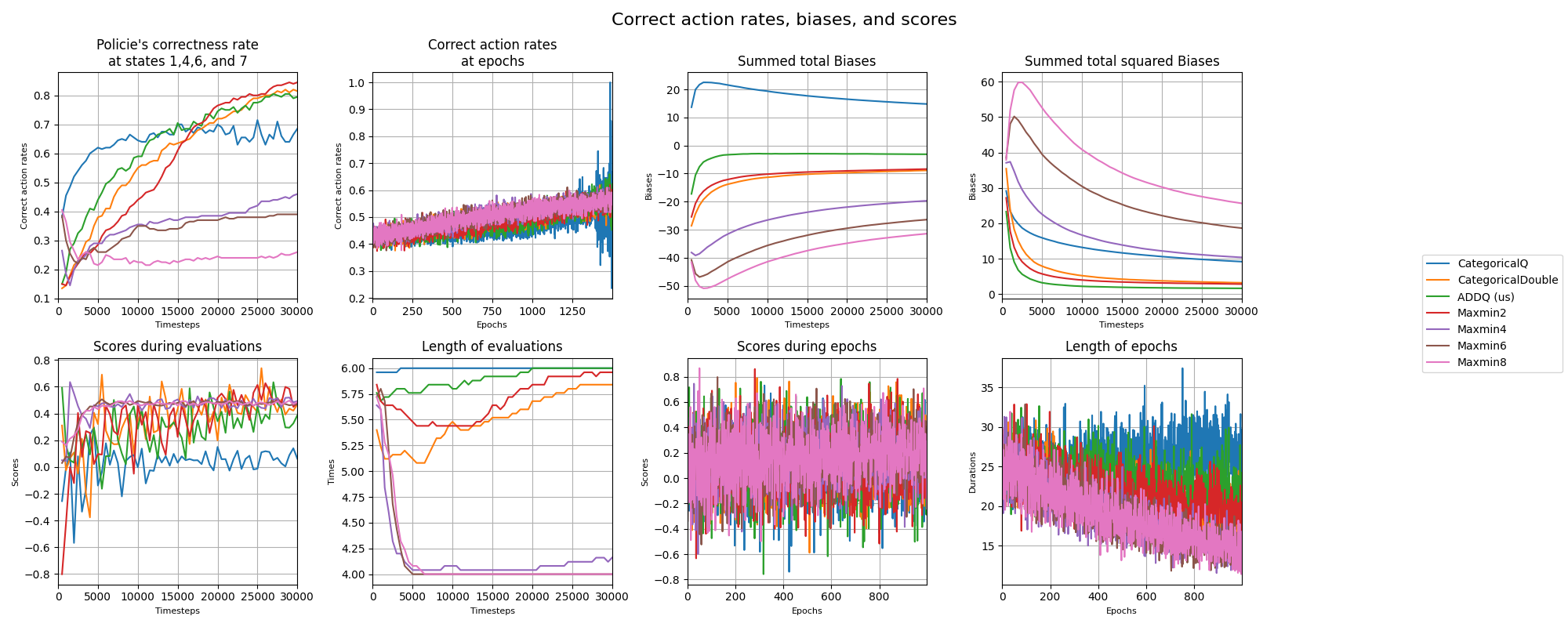}
    \includegraphics[width=\textwidth]{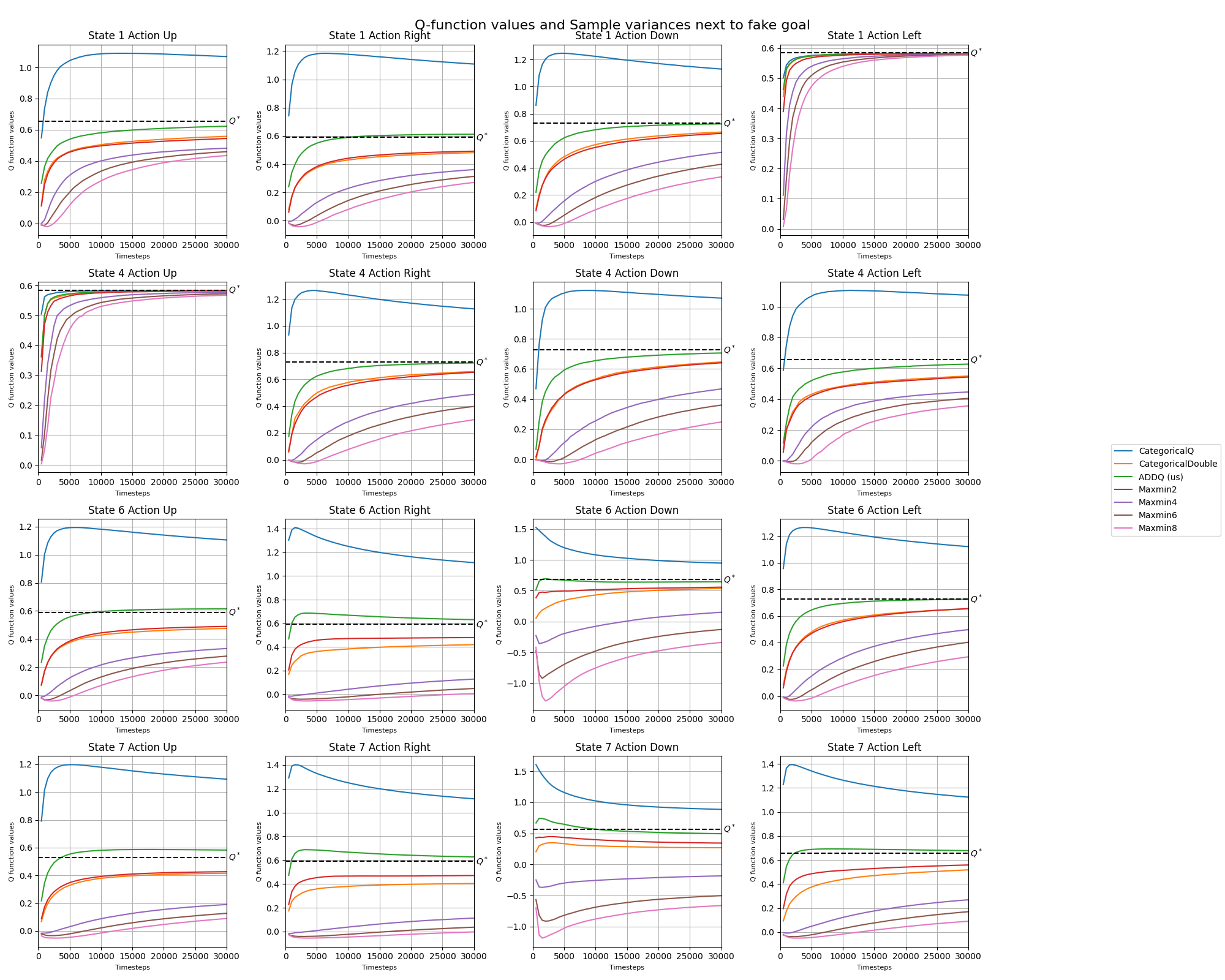}
    \caption{Comparison to MaxMin.}
\end{figure}

\begin{figure}[H]
    \centering
    \includegraphics[width=\textwidth]{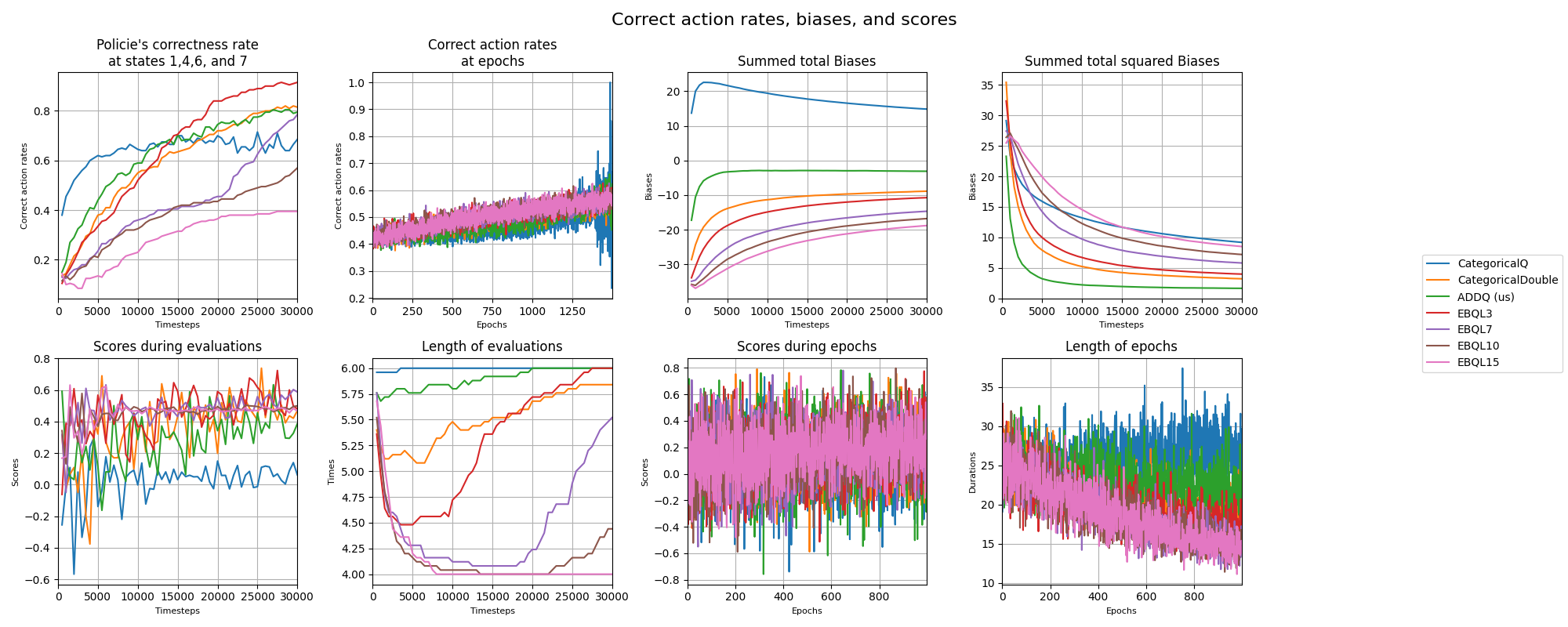}
    \includegraphics[width=\textwidth]{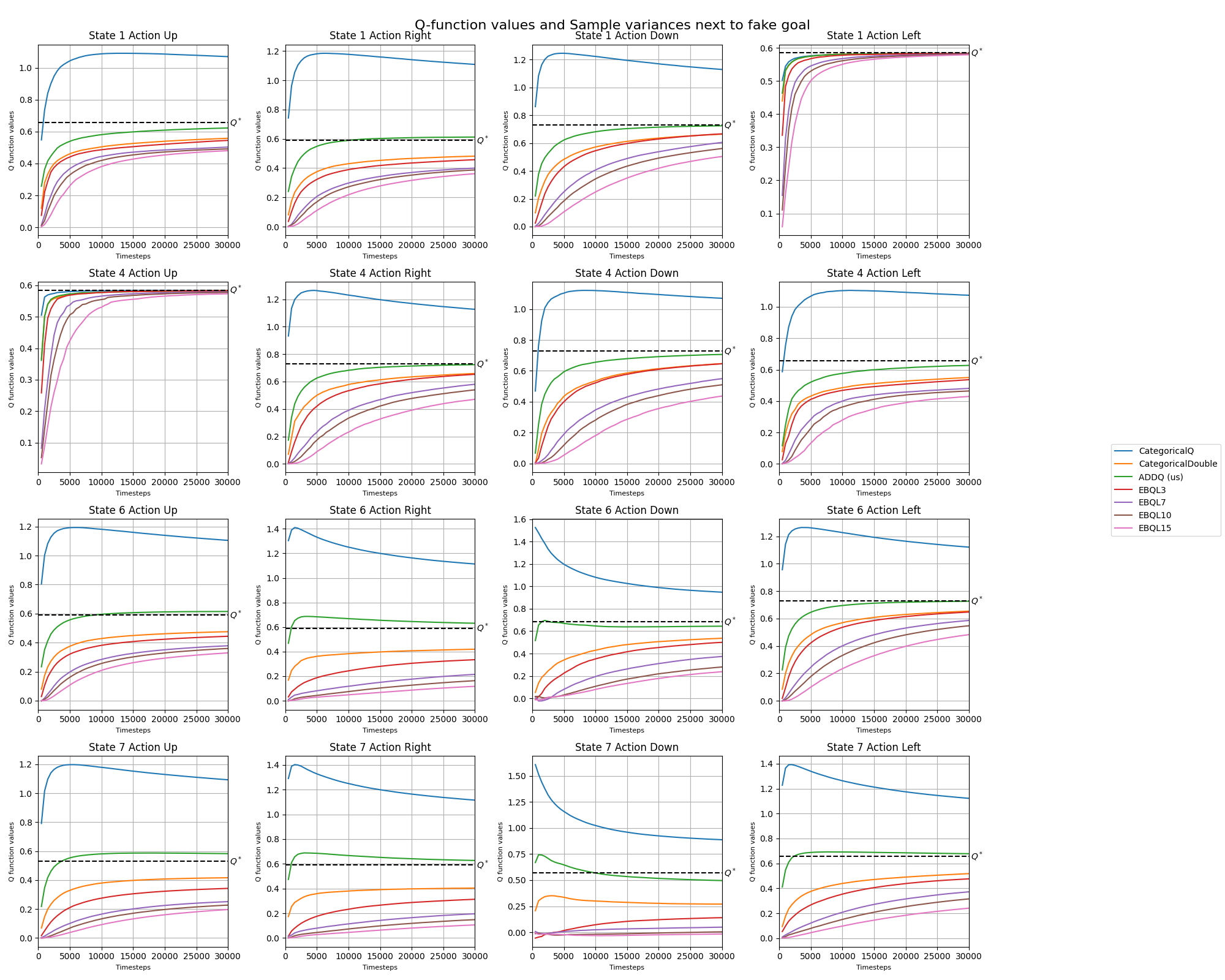}
    \caption{Comparison to EBQL}
\end{figure}

\begin{figure}[H]
    \centering
    \includegraphics[width=\textwidth]{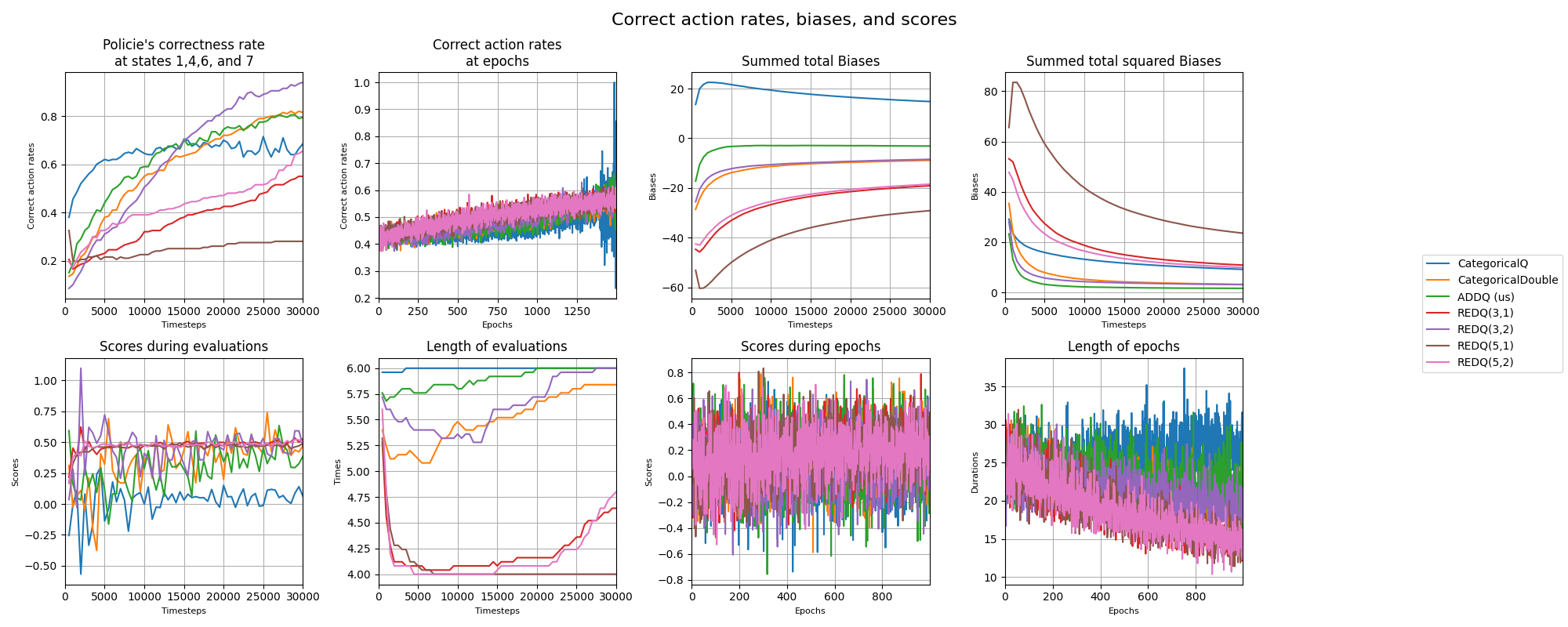}
    \includegraphics[width=\textwidth]{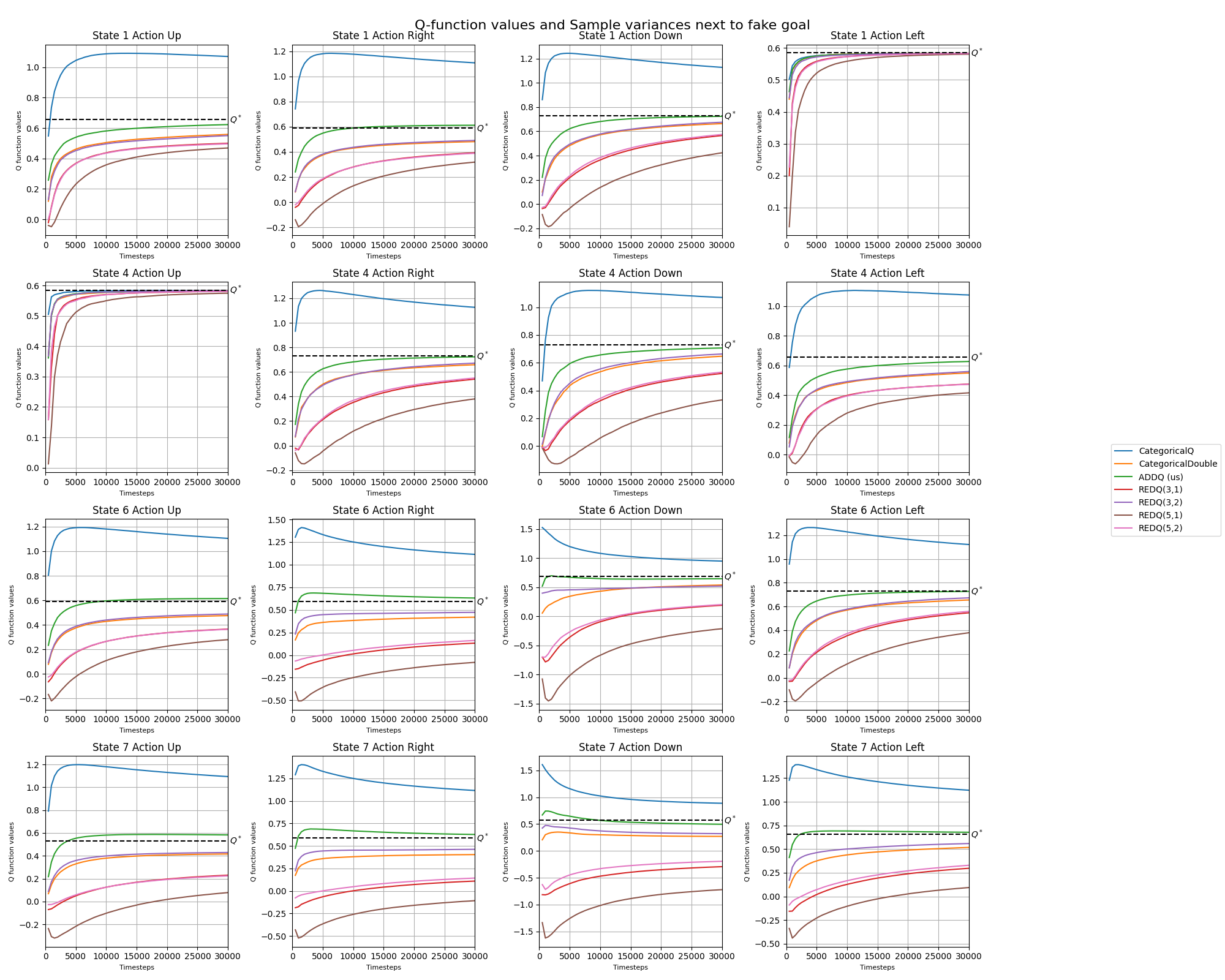}
    \caption{Comparison to REDQ.}
\end{figure}

\section{ADDQ adaptation for QRDQN - setup}\label{appendix:QRDQN}
To not double too much in the main text we moved the adaptation of ADDQ to the quantile setup \cite{dabney2018} to the appendix. In this section we explain how to adapt the ADDQ idea into QRDQN, experimental results are provided in the next section. 

The categorical approach has multiple disadvantages, most notably rewards and the fixed atom positions must be compatible. The categorical algorithm was included in the main text to keep notation simple. For quantile distributional RL the return distributions are parametrized by
\[\mathcal{F}_{Q,m} = \Big\{  \sum_{i=1}^{m}\frac{1}{m} \delta_{\theta_i} : \theta_i \in \mathbb{R} \Big\}.\]
In contrast to the categorical setting the positions of the atoms are not fixed, but the weights of all atoms are set equal. The computation of the target is equal to the categorical version (except using the projection on $\mathcal F_{Q,m}$ instead of $\mathcal F_{C,m}$). The update step is a gradient step in computing the Wasserstein-projection on $\mathcal F_{Q,m}$ of the target distribution $\hat\eta$, that is a gradient step in the quantile Huber-loss minimization:
\begin{align*}
\min_{\hat{\theta}_{1}^A(s,a),...,\hat{\theta}_{m}^A(s,a)} \sum_{i=1}^{m}\mathbb{E}_{Z \sim \hat{\eta}}[\rho_{\tau_i}^{\kappa}(Z-\hat{\theta}_{i}^A(s,a))],
\end{align*}
with quantile mid-points $\tau_i=\frac{2i-1}{2m}$ and
\begin{align*}
    \rho_{\tau}^{\kappa}(u) 
    =
    \begin{cases} 
    |\tau - \textbf{1}_{u<0}|\frac{1}{2}u^2&:|u|\leq \kappa \\
     |\tau - \textbf{1}_{u<0}|\kappa(|u|-\frac{1}{2}\kappa) &: |u|> \kappa
     \end{cases}.
\end{align*}
The quantile $Q$-learning modification of classical DQN \cite{dqn} is called QRDQN. Using the same network architecture as C51, QRDQN approximates return distributions using the quantile representation. Therefore the last layer outputs the $m$ quantile locations for each action.
In the quantile setup we write $\eta_\omega(s,a)=\frac{1}{m}\sum_{i=1}^m\delta_{\theta_i(s,a;\omega)}$ with induced mean values $Q_\omega(s,a)=\frac{1}{m}\sum_{i=1}^m\theta_i(s,a;\omega)$. Given a sample transition $(s,a,r,s')$, the network parameters are updated via gradient descent with respect to the loss function
\begin{align}\label{eq:qrdqn loss}
    \mathcal L(\omega)&=\frac{1}{m}\sum_{i,j=1}^m\rho_{\tau_i}^1(r+\gamma\theta_j(s',z^*;\bar\omega)-\theta_i(s,a;\omega)), \quad
    z^*=\operatorname{argmax}_{a'}Q_{\bar\omega}(s',a'),
\end{align}
and the quantile mid-points $\tau_i=\frac{2-1}{2m}$. 

In what follows we turn three known double variants of DQN with overestimation reduction into QRDQN variants and compare them on several Arcade environments to our algorithms. We use double DQN \cite{hasselt2015doubledqn}, a $Q$-learning adaptation of the clipping trick included in TD3 \cite{fujimoto2018addressing}, a quantile version of our ADDQ algorithm, and an additional variant of ADDQ. Using the Stable-Baselines3 framework [\cite{sb3}] there is very little that must be modified. Return distributions $\eta^A$ and $\eta^B$ are parametrized  with two independently initialized neural networks denoted by $\omega^A$ and $\omega^B$. As in DQN we use delayed target networks, one for $A$, one for $B$, that are indicated with an additional bar. For each gradient step we simulate a vector of random variables with the same size as the batch size with each element determining which of the two estimators is being updated based on the respective transition with the same position in the batch. Accordingly, we use twice the batch size for these methods, so that on average per gradient step, the same number of transitions is used for each estimator, compared to the single-estimator case. The only difference in different algorithms is the target used to update the neural networks.

Similar to modifying C51 implementations we only modify the target return distributions $b_{r,\gamma}\#\eta_{\bar\omega}(s',z^*)$ for an appropriate action $z^*$. In the quantile setup those are given by the locations of their atoms:
\begin{align*}
    \Gamma=\{r+\gamma\theta_j(s',z^*;\omega):j=1,\dots,m\}    
\end{align*}
We again use the compact $A/B$ notation to indicate how the update applies for $\Gamma^A$ and $\Gamma^B$.

\textbf{Double QRDQN:} $\Gamma^{A/B} = \eta_{\bar\omega^{B/A}}(s',z^*)$, where  $z^*=\operatorname{argmax}_{a'}Q_{\bar\omega^{A/B}}(s',a')$. \smallskip


   \textbf{Clipped QRDQN:}  $\Gamma^{A/B}=\eta_{\bar\omega^X}(s',z^*)$, where
   $z^*=\operatorname{argmax}_{a'}Q_{\bar\omega^{A/B}}(s',a')$ and $X=\operatorname{argmin}_{c\in\{A,B\}}Q_{\bar\omega^c}(s',z^*)$.  \smallskip


    \textbf{ADDQ (us):} 
$\Gamma^{A/B}=\beta\eta_{\bar\omega^{A/B}}(s',z^*)+(1-\beta)\eta_{\bar\omega^{B/A}}(s',z^*)$,
where $z^*=\operatorname{argmax}_{a'}Q_{\bar\omega^{A/B}}(s',a').$ The weights $\beta=\beta(s,a;\omega)$ are essentially arbitrary and can depend on the current estimated return distributions $\eta^A$ and $\eta^B$. For the experiments we take the same choice from \ref{adap_rule} as used for the tabular and the categorical settings.

Experimental results are presented in the next section.

\newpage

\section{Deep reinforcement learning experiments}\label{app:experiments}

To ensure fair comparison we modified the algorithms C51 [\cite{bellemare2017}] and QRDQN [\cite{dabney2018}] within the Stable-Baselines3 framework [\cite{sb3}]. The C51 implementation has been added to this framework by adapting from the Dopamine framework [\cite{dopamine}] and the DQN Zoo [\cite{dqnzoo}]. We run Atari environments from the Arcade Learning Environment [\cite{ale}] and MuJoCo [\cite{mujuco}] environments both using the Gymnasium API [\cite{gymnasium}]. We run the experiments via the RL Baselines3 Zoo [\cite{rl-zoo3}] training framework.

The experiments were executed on a HPC cluster with NVIDIA Tesla V100 and NVIDIA A100 GPUs. The replay buffer on Atari environments takes around 57GB of memory and less than 7 GB of memory for MuJoCo environments.

For the experiments the training has been interrupted every 50000 steps and 10 evaluation episodes on 10 evaluation environments without exploration have been performed. The plots below show the mean total reward (sum of all rewards) averaged over \textbf{10 seeds} with standard errors of seeds as the shaded regions. To improve visibility a rolling window of size 4 is applied. Atari runs took less than 48 hours for 20 million train steps and periodic evaluations and MuJoCo runs less than 36 for 10 million train steps (Humanoid) and periodic evaluations. Note that one timestep in the Atari environments corresponds to 4 frames, which are stacked together. This corresponds to repeating every action 4 times in the actual game. Therefore 20 million timesteps correspond to 80 million frames. Additionally, a small ablation study comprising of 10 seeds on one evaluation environment with some of the choices of beta detailed in Appendix \ref{app:ablation} has been conducted.

\newpage
\subsection{Full experimental results for the categorical parametrization}
As in \cite{bellemare2017} we use 51 atoms for all C51 variants.
\begin{figure*}[h]
\begin{center}
\includegraphics[height=8cm]{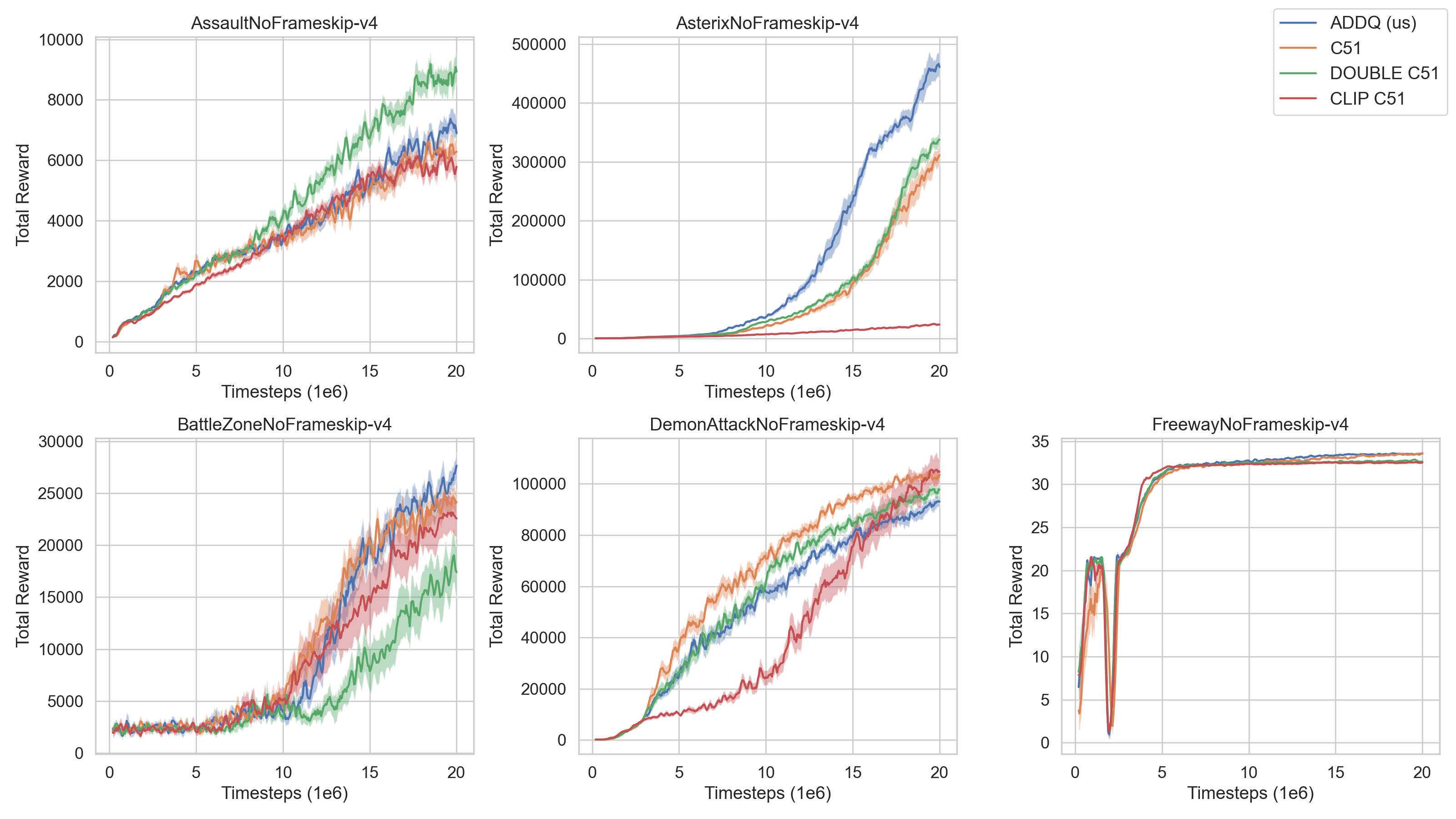}
\includegraphics[height=8cm]{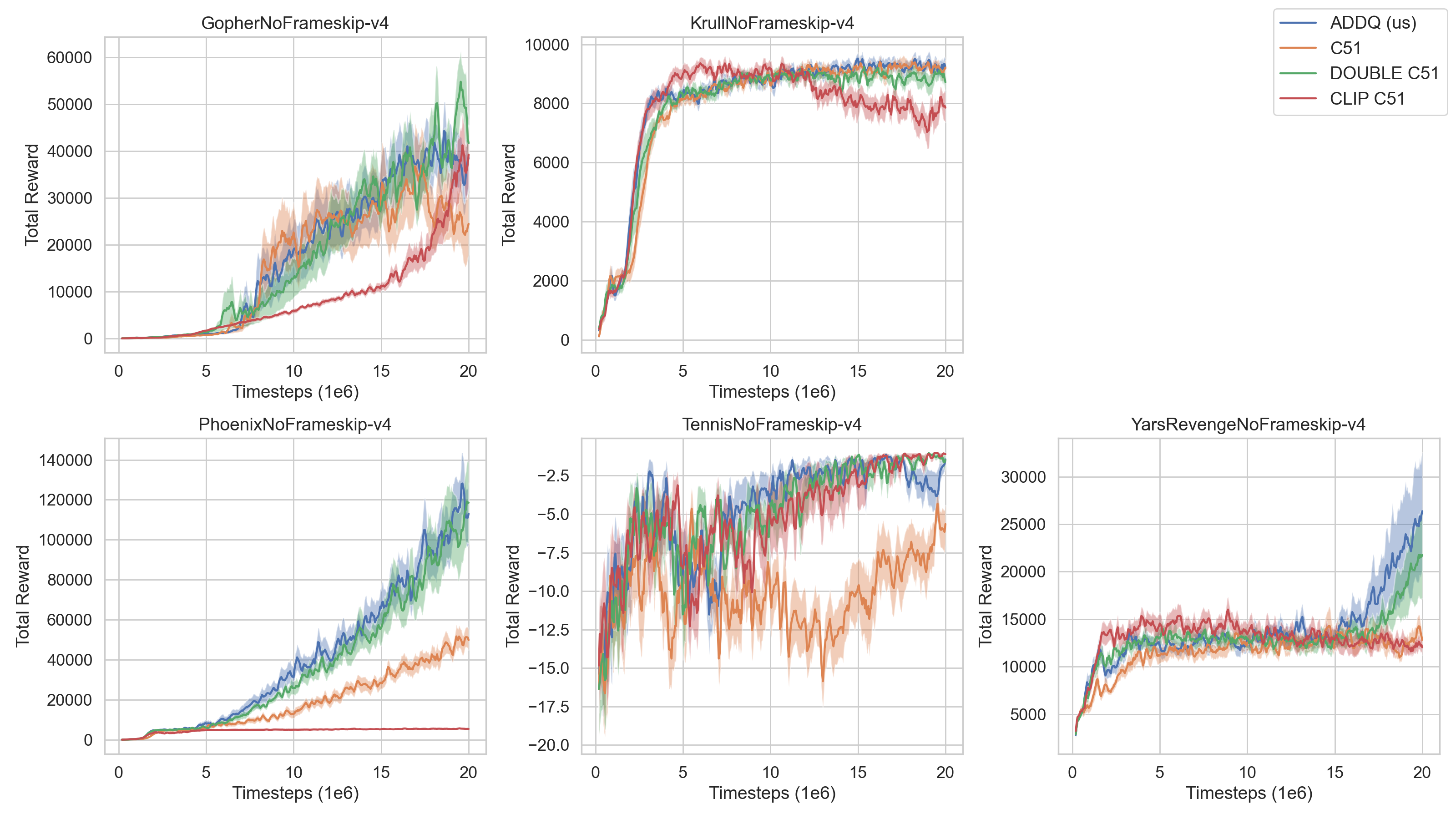}
\caption{Learning curves on 10 Atari environments, averaged over 10 seeds}
\end{center}
\end{figure*}
\newpage
We additionally provide plots using the RLiable library \cite{RLiable}.
\begin{figure*}[h]
\begin{center}
 \includegraphics[width=0.6\textwidth]{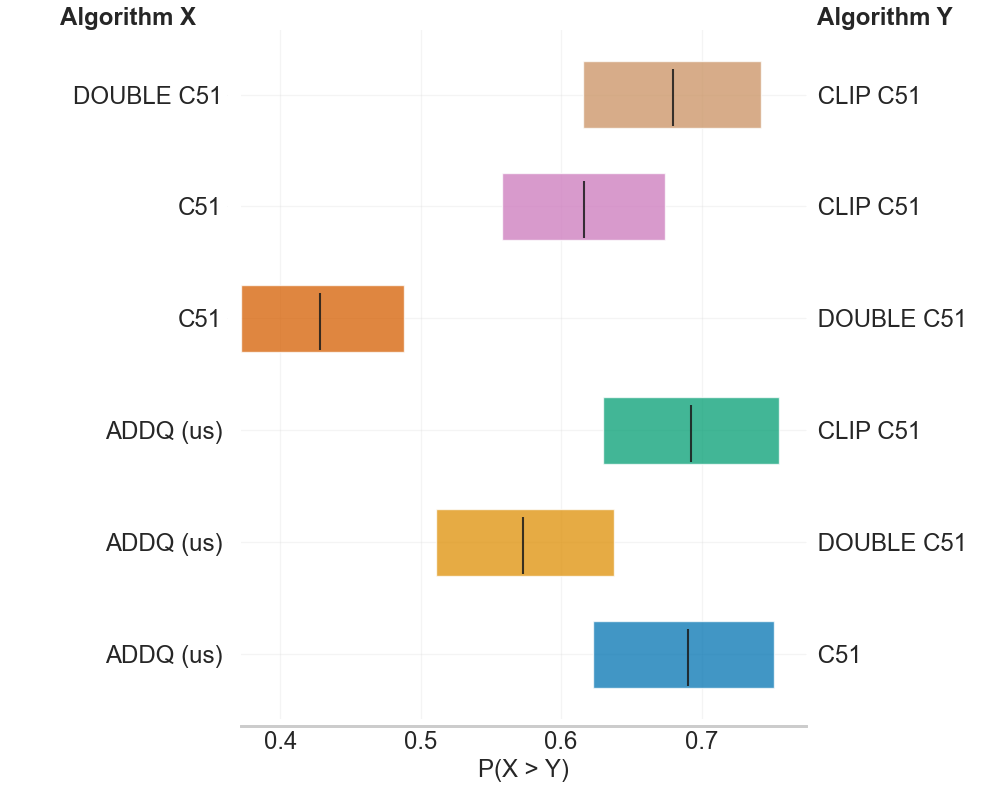}
        \caption{RLiable probability of improvement plot}
    \vspace{5mm}        
    \includegraphics[width=0.95\textwidth]{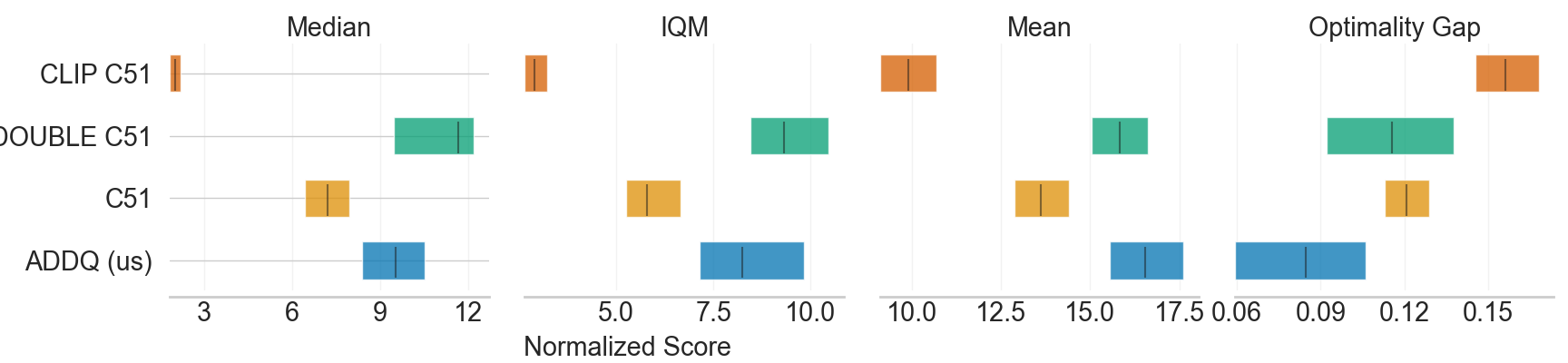}
    \caption{RLiable human normalized scores plot (based on \cite{agent57})}
    \end{center}
\end{figure*}

\newpage
\subsection{Full experimental results for the quantile parametrization}
\begin{figure*}[h]
\begin{center}
\includegraphics[height=8cm]{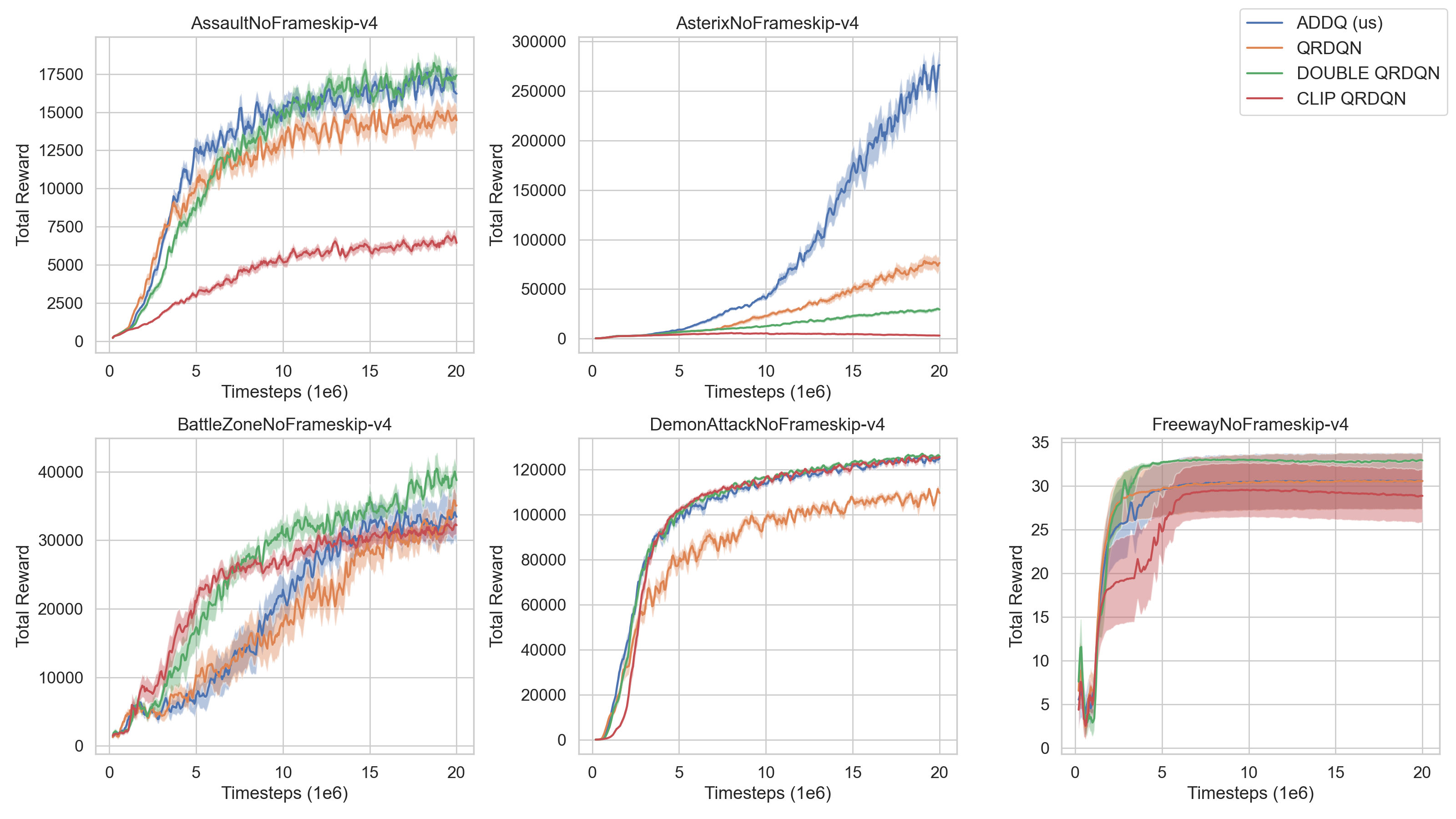}
\includegraphics[height=8cm]{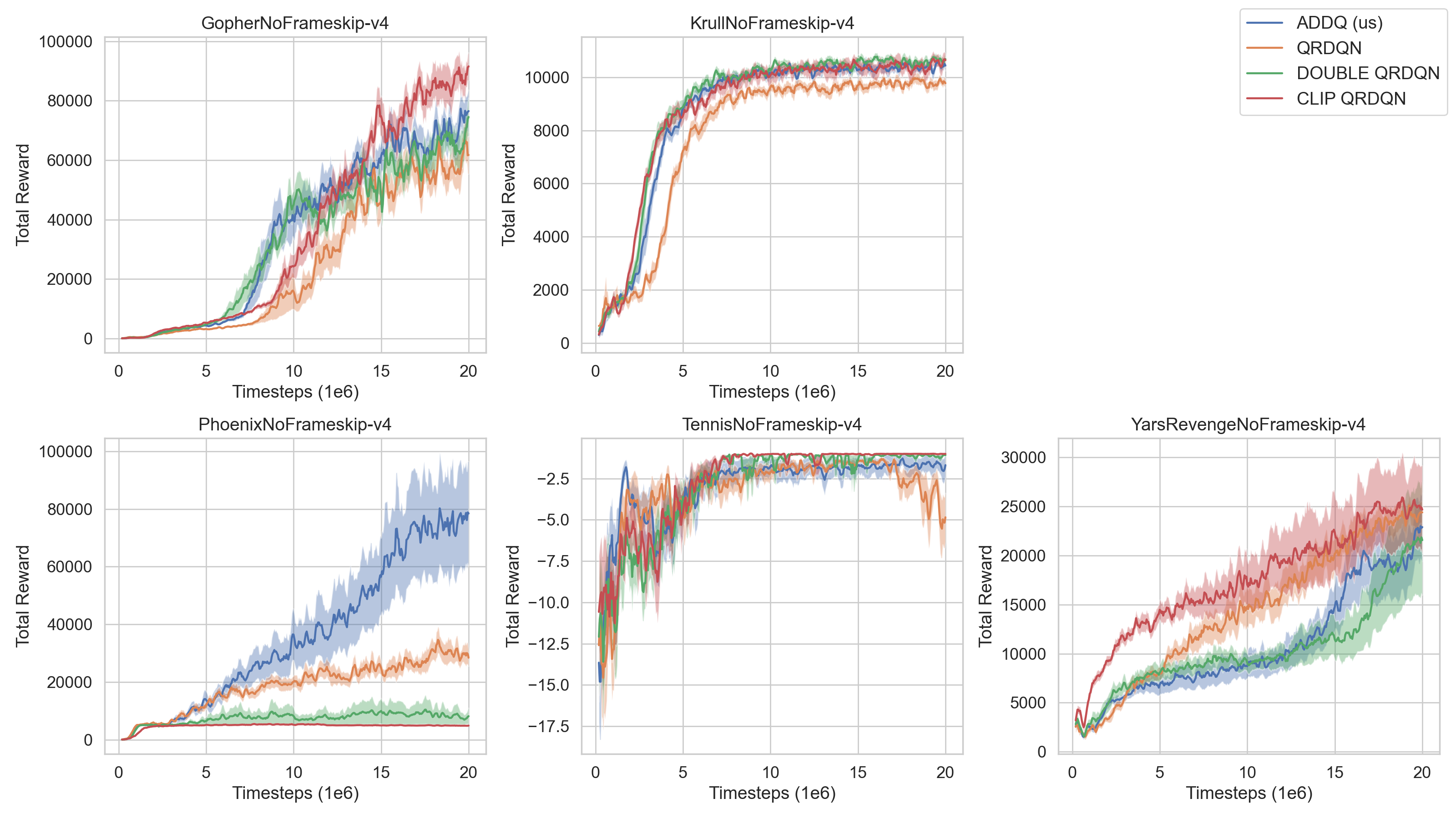}
\caption{Learning curves on 10 Atari environments, averaged over 10 seeds}
\end{center}
\end{figure*}
\newpage

\begin{figure*}[h]
\begin{center}
     \includegraphics[width=0.6\textwidth]{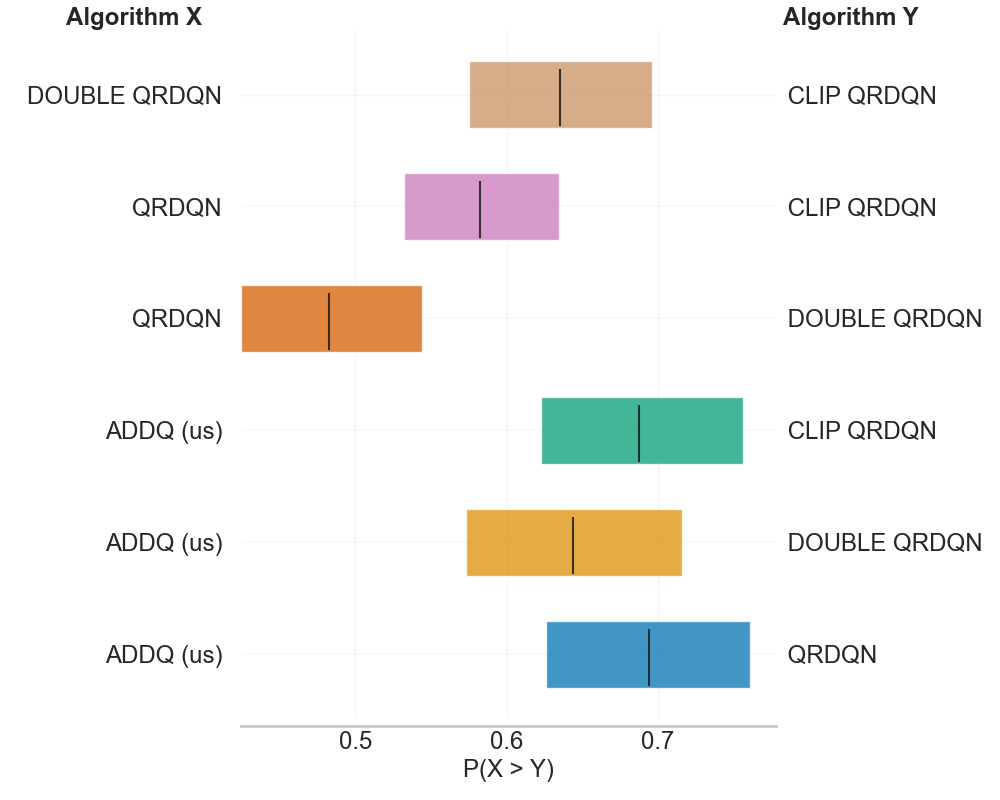}
        \caption{RLiable probability of improvement plot}
    \vspace{5mm}
        
    \includegraphics[width=0.95\textwidth]{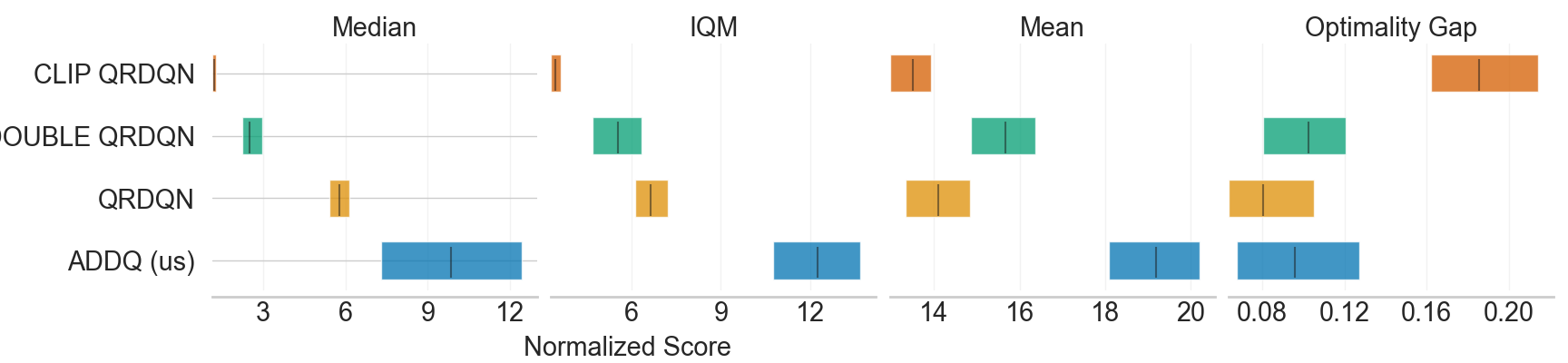}
    \caption{RLiable human normalized scores plot (based on \cite{agent57})}
  
    \end{center}
\end{figure*}

\newpage

\subsection{Full experimental results for the actor critic setting}

\begin{figure*}[h]
\begin{center}
\includegraphics[width=0.9\textwidth]{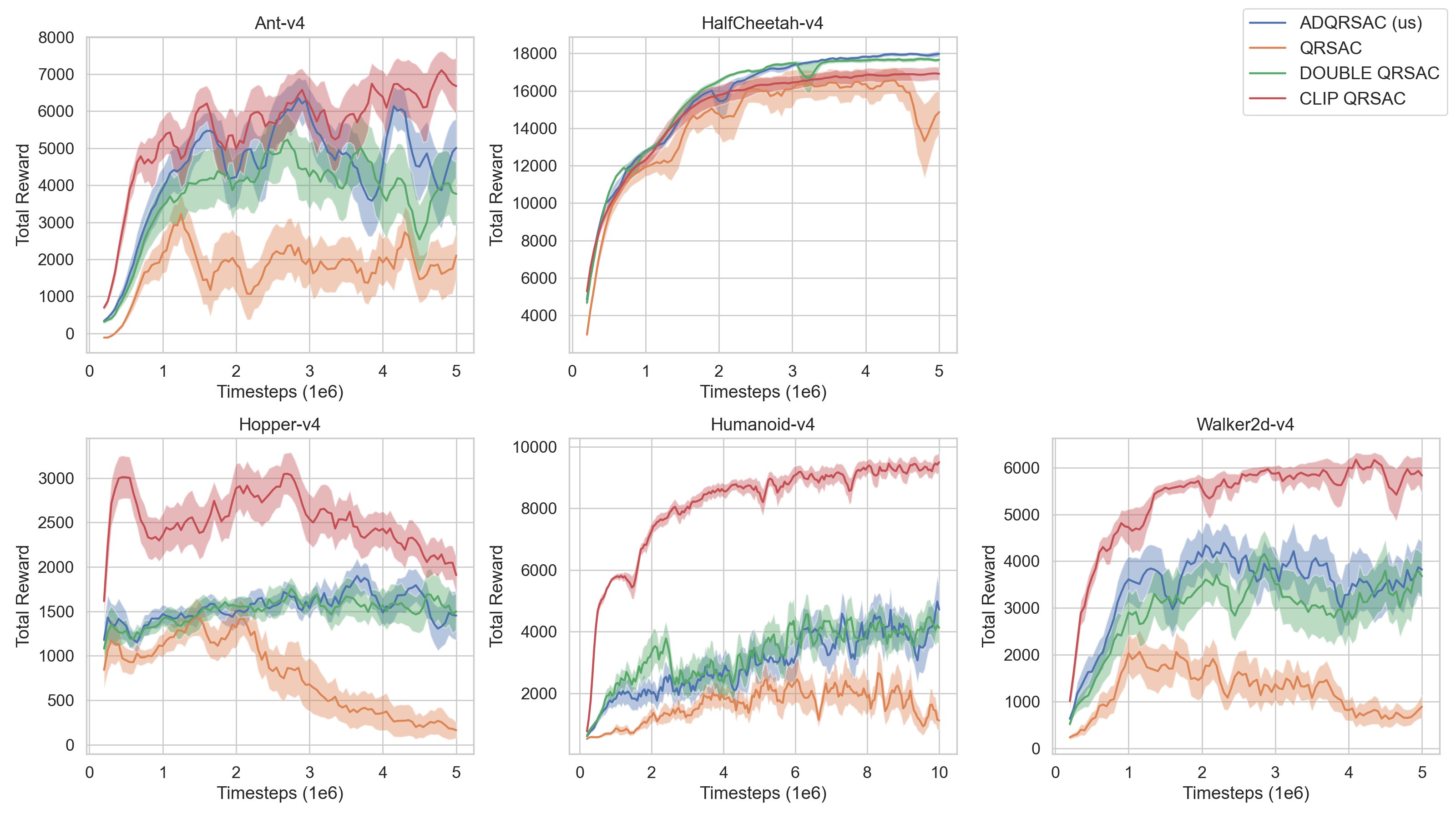}
\caption{Learning curves on 5 MuJoCo environments, averaged over 10 seeds}
\vspace{5mm}
\includegraphics[width=0.5\textwidth]{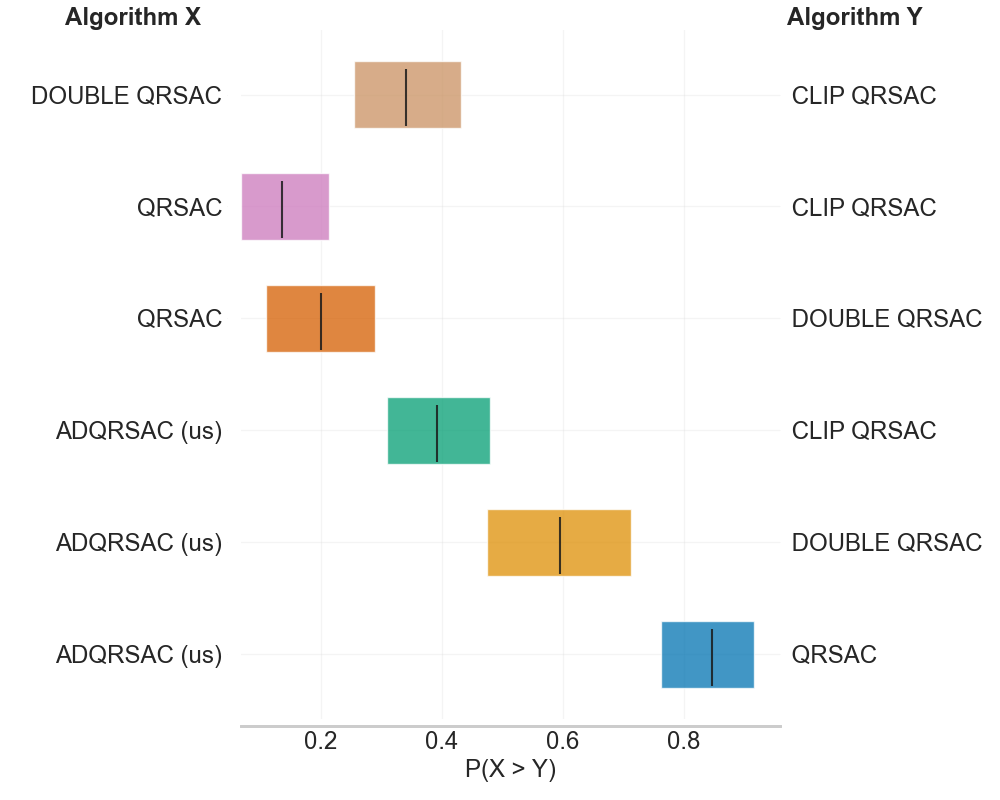}
        \caption{RLiable probability of improvement plot}
    \end{center}
\end{figure*}

\begin{figure*}[h]
\begin{center}
\includegraphics[width=0.7\textwidth]{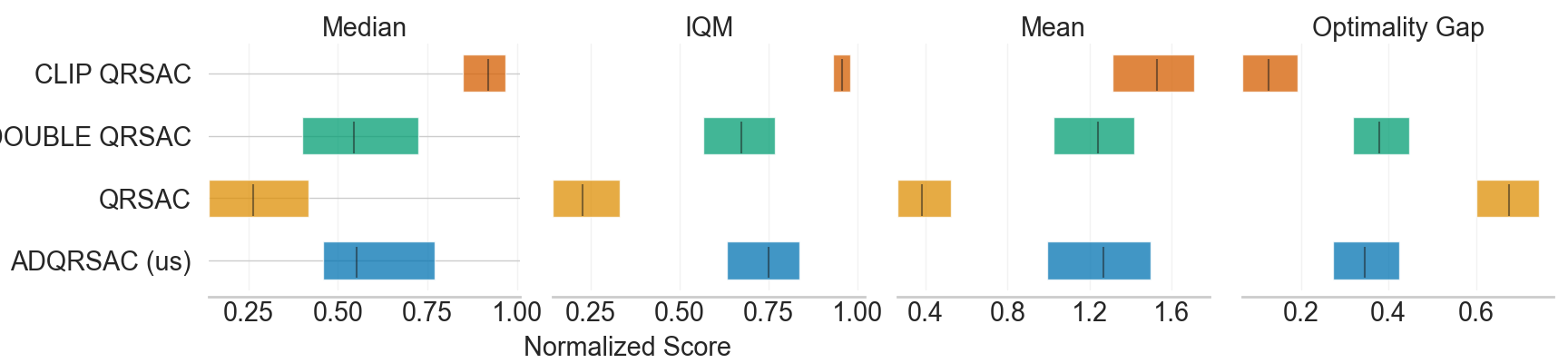}
    \caption{RLiable normalized scores plot,  based on highest and lowest performance on each environment}
\end{center}
\end{figure*}

\newpage 
\quad
\newpage


\section{Convergence proof in the tabular categorical setup}\label{sec:proof}
In this section we give a convergence proof for the adaptive distributional double-Q algorithm in the simplest setting, the categorical setting. The proof is based on known arguments from the literature and requires some modifications to work in our generality. Since many papers only sketched proofs we decided to spell out all details.
\begin{remark}[Notation and short recap]
The Cramer distance $\ell_2$ for probability distributions $\nu,\nu'\in\cP(\R)$ is given by
\[\ell_2(\nu, \nu') = \left( \int_{\mathbb{R}} \left| F_{\nu}(z) - F_{\nu'}(z) \right|^2 dz \right)^{1/2}.\]
Following \cite{rowland18,drl} the supremum extension of a probability metric $d$ between two return distribution functions $\eta,\eta'\in\cP^{\cS\times\cA}$ is denoted as
\[\bar d(\eta,\eta')=\sup_{s,a\in\cS\times\cA}d(\eta(s,a),\eta'(s,a).\]
The iterates $\eta_{k+1}=\Pi_C\cT^\pi\eta_k$ converge to the unique fixed point in $\cF_{C,m}^{\cS\times\cA}$ with respect to $\bar\ell_2$ based on Banach's fixed point Theorem. This follows from the contraction property
\begin{equation}
\bar\ell_2(\Pi_C\cT^\pi\eta,\Pi_C\cT^\pi\eta')\leq\sqrt{\gamma}\bar\ell_2(\eta,\eta'),
\end{equation}
[compare \cite{rowland18,drl}].
\end{remark}

\begin{theorem}[Convergence of adaptive distributional $Q$-learning in the categorical setting]\label{thm:double categorical control}
    Given some initial return distribution functions $\eta_0^{A},\eta^{B}_0$ supported within $[\theta_1,\theta_m]$, the induced $Q$-values, i.e. the expected values of the return distributions $(\eta_t^{A}),(\eta^{B}_t)$, recursively defined by Algorithm \ref{alg:CDRL} converge almost surely towards $Q^*$ if the following conditions are satisfied:
\begin{enumerate}
	\item the step sizes $\alpha_t(s,a)$ almost surely fulfill the Robbins-Monro conditions $\sum_{t=0}^{\infty}\alpha_t(s,a)=\infty$ and $
		\sum_{t=0}^{\infty}\alpha_t^{2}(s,a)<\infty$.
	\item rewards are bounded in $[R_{min},R_{max}]$ and $[\frac{R_{min}}{1-\gamma},\frac{R_{max}}{1-\gamma}]\subseteq[\theta_1,\theta_m],$
    \item the choice of updating $\eta^A$ or $\eta^B$ is random and independent of all  previous random variables 
    \item the sequences $(\beta^A_t)_{t\in\N},(\beta^B_t)_{t\in\N}$ only depend on the past and fulfill $\lim_{t\to\infty} |\beta^A_t-\beta^B_t|=0$ almost surely.
\end{enumerate}
If additionally the MDP has a unique optimal policy $\pi^*,$ then $(\eta_t^{A}),(\eta^{B}_t)$ converge almost surely in $\bar\ell_2$ to some limit $\eta^*_C\in\cF_{C,m}$ and the greedy policy with respect to $\eta^{*}_C$ is the optimal policy.
\end{theorem}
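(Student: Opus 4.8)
The workhorse is the stochastic-approximation lemma of \cite{singh2000} (and its measure-valued version in \cite{rowland18}): a componentwise recursion $\Delta_{t+1}(x)=(1-\alpha_t(x))\Delta_t(x)+\alpha_t(x)F_t(x)$ converges to $0$ almost surely provided the $\alpha_t(x)$ obey Robbins--Monro, $\|\E_t[F_t]\|_\infty\le\kappa\|\Delta_t\|_\infty+c_t$ for some fixed $\kappa<1$ and $c_t\to 0$ a.s., and the conditional variances grow at most like $(1+\|\Delta_t\|_\infty)^2$, where $\E_t$ denotes the conditional expectation given the history before the step-$t$ estimator choice and sample. I would invoke this lemma repeatedly. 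A preliminary induction on $t$ shows that $\eta^A_t(s,a),\eta^B_t(s,a)$ remain supported in $[\theta_1,\theta_m]$: by $[\tfrac{R_{min}}{1-\gamma},\tfrac{R_{max}}{1-\gamma}]\subseteq[\theta_1,\theta_m]$ the map $b_{r,\gamma}$ sends $[\theta_1,\theta_m]$ into itself, and both mixtures and $\Pi_C$ preserve this support, so $\Pi_C$ never clips. Hence $\Pi_C$ preserves means throughout, the induced $Q$-values $Q^{A/B}_t:=\E[\eta^{A/B}_t]$ obey exactly the ``double $Q$-learning with $\beta$-interpolated bias'' recursion, they stay in $[\tfrac{R_{min}}{1-\gamma},\tfrac{R_{max}}{1-\gamma}]$, and the variance hypothesis above holds automatically.

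\textbf{Convergence of the $Q$-values.} I would apply the lemma first to $\Delta_t:=Q^B_t-Q^A_t$. Because the estimator to update is chosen by an \emph{independent fair} coin, the reward terms from the ``update $A$'' and ``update $B$'' branches cancel in $\E_t[F_t]$, leaving at the visited pair $(s,a)$ an expression of the form $\E_t[F_t](s,a)=\tfrac12\Delta_t(s,a)+\tfrac\gamma2\,\E_{s'}[\,\cdots\,]$. Using that the greedy action of $Q^{A/B}_t$ at $s'$ attains $\max_{a'}Q^{A/B}_t(s',a')$, the $1$-Lipschitz property of $\max$ in $\|\cdot\|_\infty$, and $\beta^{A/B}_t\in[0,1]$, the inner bracket is bounded in absolute value by $(1+|\beta^A_t-\beta^B_t|)\|\Delta_t\|_\infty$; since $\|\Delta_t\|_\infty$ is bounded and $|\beta^A_t-\beta^B_t|\to 0$, this yields $\|\E_t[F_t]\|_\infty\le\tfrac{1+\gamma}{2}\|\Delta_t\|_\infty+c_t$ with $c_t\to 0$ a.s.\ and $\tfrac{1+\gamma}{2}<1$; the lemma gives $\|Q^A_t-Q^B_t\|_\infty\to 0$ (infinitely many $A$- and $B$-updates at each pair follow from the infinite-visit assumption and independence of the coins). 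Then comes the SARSA trick proper: when $A$ is updated, $Q^A_{t+1}(s,a)=(1-\alpha_t)Q^A_t(s,a)+\alpha_t[(T^*Q^A_t)(s,a)+w_t+c^A_t]$ with $w_t$ a mean-zero, bounded-variance noise and $c^A_t=\gamma(1-\beta^A_t)\E_{s'}[Q^B_t(s',z^*)-\max_{a'}Q^A_t(s',a')]$, where $z^*$ is the greedy action of $Q^A_t$ at $s'$; thus $|c^A_t|\le\gamma\|Q^A_t-Q^B_t\|_\infty\to 0$, and the lemma with contraction constant $\gamma$ gives $Q^A_t\to Q^*$, symmetrically $Q^B_t\to Q^*$, which is the first assertion.

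\textbf{Distributional convergence.} Now assume $\pi^*$ is the unique optimal policy. Combining $Q^{A/B}_t\to Q^*$ with the strict separation $\max_aQ^*(s,a)>Q^*(s,a')$ for $a'\neq\pi^*(s)$ produces a finite random time $T_0$ after which the greedy actions of $Q^A_t$ and $Q^B_t$ both equal $\pi^*$, so for $t\ge T_0$ the ADDQ targets bootstrap only from $(s',\pi^*(s'))$. I would rerun the cross-difference argument in the Cram\'er metric: with $g_t(s,a):=F_{\eta^B_t(s,a)}-F_{\eta^A_t(s,a)}$, using that $\ell_2$ is affine along mixtures of CDFs, that $\E_{r,s'}[\Pi_C b_{r,\gamma}\#(\cdot)]$ equals $\Pi_C\cT^{\pi^*}$ applied componentwise (a $\sqrt\gamma$-contraction in $\bar\ell_2$ by the Remark), and the same fair-coin cancellation, one obtains $\sup_{s,a}\|\E_t[F_t](s,a)\|_{L^2}\le\tfrac{1+\sqrt\gamma}{2}\sup_{s,a}\|g_t(s,a)\|_{L^2}+c_t$ with $c_t\to 0$, hence $\bar\ell_2(\eta^A_t,\eta^B_t)\to 0$. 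Therefore, for $t\ge T_0$ the $\eta^A$-update reads $\eta^A_{t+1}(s,a)=(1-\alpha_t)\eta^A_t(s,a)+\alpha_t[(\Pi_C\cT^{\pi^*}\eta^A_t)(s,a)+(\text{noise})+(\text{bias}_t)]$ with $(\text{bias}_t)$ of Cram\'er size at most $\sqrt\gamma\,\bar\ell_2(\eta^A_t,\eta^B_t)\to 0$; the measure-valued SA lemma of \cite{rowland18}, with the $\sqrt\gamma$-contraction $\Pi_C\cT^{\pi^*}$ and vanishing bias, gives $\eta^A_t,\eta^B_t\to\eta^*_C$, the unique fixed point of $\Pi_C\cT^{\pi^*}$ in $\cF_{C,m}^{\cS\times\cA}$. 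Since means are preserved, $\E[\eta^*_C]=\lim_tQ^A_t=Q^*$, so the greedy policy for $\eta^*_C$ coincides with that for $Q^*$, which is $\pi^*$.

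\textbf{Main obstacle.} The delicate part is the two ``cross-difference vanishes'' steps ($\|Q^A_t-Q^B_t\|_\infty\to 0$ and $\bar\ell_2(\eta^A_t,\eta^B_t)\to 0$): one must squeeze the pseudo-contraction constant strictly below $1$, which works only because the estimator is chosen by a \emph{fair} coin (so the reward / bootstrap terms cancel in conditional mean) and because $|\beta^A_t-\beta^B_t|\to 0$ (so the $\beta$-asymmetry enters merely as a vanishing \emph{additive} term $c_t$, never as a multiplicative factor on $\|\Delta_t\|$). The remaining work --- boundedness, infinitely many $A$- and $B$-updates per pair, the martingale-noise bookkeeping, and checking the hypotheses of the two SA lemmas --- is routine but has to be carried out carefully since the updates are asynchronous and each step modifies only one of the two estimators.
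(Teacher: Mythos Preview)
Your Step~1 is the paper's argument: first show $Q^B_t-Q^A_t\to 0$ via the SA lemma with pseudo-contraction $\tfrac{1+\gamma}{2}$ (the fair coin makes the reward terms cancel in conditional mean, and $|\beta^A_t-\beta^B_t|\to 0$ pushes the asymmetry into an additive $c_t$), then run the SARSA trick on $Q^A_t-Q^*$ with the cross-difference as vanishing bias. The only cosmetic difference is that you thin the step sizes by the coin for the second application (contraction $\gamma$), whereas the paper keeps the coin inside $F_t$ and gets contraction $\tfrac{1+\gamma}{2}$; both need the observation that $\alpha_t\mathbf 1_{Y_{t+1}=1}$ still satisfies Robbins--Monro, which the paper isolates as a separate lemma.

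Your Step~2 is a genuinely different route. The paper does \emph{not} first prove $\bar\ell_2(\eta^A_t,\eta^B_t)\to 0$; instead it introduces an auxiliary \emph{double} policy-evaluation process $\tilde\eta^{A/B}$ that shares the very same coins, step sizes, $\beta$'s and samples but always bootstraps from $\pi^*$, shows by a separate lemma that $\tilde\eta^{A/B}\to\eta^*_C$, and then proves $\bar\ell_2^2(\eta^{A/B}_t,\tilde\eta^{A/B}_t)\to 0$ on the event that the greedy action has stabilized. Your direct approach (cross-difference in $\bar\ell_2$, then treat the $A$-update as $\Pi_C\cT^{\pi^*}\eta^A_t$ plus vanishing bias) also works and avoids the auxiliary process, at the cost of one extra SA-lemma application; the coupling, in turn, sidesteps the cross term $\bar\ell_2(\eta^A_t,\eta^*_C)\cdot\bar\ell_2(\eta^A_t,\eta^B_t)$ that your final step has to absorb. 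One point to tighten: there is no off-the-shelf ``measure-valued SA lemma'' in \cite{rowland18} to cite as a black box. Both the paper and your argument must pass through the real-valued lemma by tracking $\ell_2^2$-distances (via the Hilbert-space inner product on CDF differences) and using the modified step sizes $\zeta_t=2\alpha_t-\alpha_t^2$; this is exactly what the paper does in its Step~2 and in its policy-evaluation lemma, and you would need to do the same in your two distributional applications.
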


Note that the algorithm and proof uses $\beta^{A/B}_{t+1}(s,a)$ with index ${t+1}$ when updating $\eta^{A/B}_t$. This is to show that in general the parameter is allowed to depend on $S_{t+1}$ and the respective greedy action, i.e. it must only be $\cF_{t+1}$ measurable. To portray this generality in the following we will only write $\beta^{A/B}_{t+1}$ without referencing a state-action pair.

The simplest way to guarantee the assumptions on the adaptive parameters $\beta^A$, $\beta^B$ to be satisfied is to chose them equal.\smallskip

As in \cite{hasselt2010}, the proof is based on the following stochastic approximation result, see also  \cite{bertsekasbook}, Proposition 4.5. 
\begin{lemma}[\cite{singh2000}, Lemma 1]\label{lem:stochastic approximation lemma} 
Suppose \((\Omega, \mathcal{A}, \mathbb{P}, (\mathcal{F}_n))\) is a filtered probability space on which all appearing random variables are defined. Suppose that

\begin{enumerate}
    \item a stochastic process \((F_n)_{n \in \mathbb{N}} \subset \mathbb{R}^d\) with the coordinates \(F_{i,n}\) for \(i = 1, \ldots, d\) such that \(F_n\) is \(\mathcal{F}_{n+1}\)-measurable and for all \(i = 1, \ldots, d\)
    \[
    \left\|\mathbb{E}[F_n | \mathcal{F}_n]\right\|_\infty \leq \kappa\|X_n\|_\infty + c_n \quad \text{and} \quad \mathbb{V}[F_{i,n} | \mathcal{F}_n] \leq K(1 + \kappa\|X_n\|_\infty)^2 \quad n \geq 1,
    \]
    where \(\kappa \in [0,1)\), an adapted, stochastic process \((c_n)_{n \in \mathbb{N}} \subset \mathbb{R}^+\) that converges to 0 almost surely and some constant \(K > 0\).

    \item the non-negative stochastic process \((\alpha_n)_{n \in \mathbb{N}} \subset \mathbb{R}^d\), with the coordinates \(\alpha_{i,n} \in [0,1]\) for \(i = 1, \ldots, d\) is adapted with
    \[
    \sum_{n=1}^{\infty} \alpha_{i,n} = \infty \quad \text{and} \quad \sum_{n=1}^{\infty} \alpha_{i,n}^2 < \infty \quad \text{a.s.}.
    \]
\end{enumerate}

Then, for any \(\mathcal{F}_0\)-measurable initial condition \(X_0\) the stochastic process \((X_n)_{n \in \mathbb{N}} \subset \mathbb{R}^d\) with coordinates \(X_{i,n}\) for \(i = 1, \ldots, d\) that is recursively defined by
\[
X_{i,n+1} = (1 - \alpha_{i,n})X_{i,n} + \alpha_{i,n}F_{i,n}, \quad n \in \mathbb{N},
\]
converges almost surely to zero.
\end{lemma}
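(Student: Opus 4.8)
The plan is to prove this lemma by the classical route of asynchronous stochastic approximation (\cite{Tsitsiklis1994, singh2000}; see also \cite{bertsekasbook}, Prop.~4.5): split the recursion into a contracting conditional-mean part and a martingale noise part, establish almost sure boundedness of the iterates, show that the accumulated noise is asymptotically negligible, and then exploit the contraction factor $\kappa<1$ to force $\|X_n\|_\infty\to0$. Concretely, first I would decompose $F_{i,n}=g_{i,n}+w_{i,n}$ with $g_{i,n}:=\mathbb{E}[F_{i,n}\mid\mathcal{F}_n]$ and $w_{i,n}:=F_{i,n}-g_{i,n}$, so that $w_{i,n}$ is a zero-mean martingale increment satisfying $\mathbb{V}[w_{i,n}\mid\mathcal{F}_n]\le K(1+\kappa\|X_n\|_\infty)^2$ while $\|g_n\|_\infty\le\kappa\|X_n\|_\infty+c_n$. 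The recursion then reads
\[
X_{i,n+1}=(1-\alpha_{i,n})X_{i,n}+\alpha_{i,n}g_{i,n}+\alpha_{i,n}w_{i,n},
\]
and I would also introduce the noise envelopes $\Phi_{i,m,\cdot}$ defined by $\Phi_{i,m,m}=0$ and $\Phi_{i,m,n+1}=(1-\alpha_{i,n})\Phi_{i,m,n}+\alpha_{i,n}w_{i,n}$, which for fixed $i,m$ is a zero-mean martingale in $n$.

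The main obstacle will be the second step: almost sure boundedness of the iterates, $\sup_n\|X_n\|_\infty<\infty$. This is genuinely delicate because the conditional noise variance is only controlled by $K(1+\kappa\|X_n\|_\infty)^2$, so a priori the iterates could escape to infinity. I would handle it by the standard stability-by-rescaling argument: set $B_n:=\max\bigl(1,\max_{k\le n}\|X_k\|_\infty\bigr)$, note that $w_{i,n}/B_n$ has conditional variance uniformly bounded by $K(1+\kappa)^2$ since $\|X_n\|_\infty\le B_n$, so the $L^2$-bounded martingale $\sum_{k\le n}\alpha_{i,k}(w_{i,k}/B_k)$ converges with vanishing tails (using $\sum_k\alpha_{i,k}^2<\infty$); feeding this together with $\|g_n\|_\infty\le\kappa\|X_n\|_\infty+c_n$ and $\kappa<1$ back into the recursion yields a self-bounding estimate for $B_n$ that is incompatible with $B_n\uparrow\infty$. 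Hence $\sup_n\|X_n\|_\infty\le B<\infty$ almost surely.

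On the almost sure event $\{\sup_n\|X_n\|_\infty\le B\}$ the conditional variances are uniformly bounded, $\mathbb{V}[w_{i,n}\mid\mathcal{F}_n]\le\sigma^2:=K(1+\kappa B)^2$, so that $\mathbb{E}[\Phi_{i,m,n}^2]\le\sigma^2\sum_{k\ge m}\alpha_{i,k}^2$ for all $n\ge m$, and Doob's maximal inequality gives $\mathbb{E}[\sup_{n\ge m}\Phi_{i,m,n}^2]\le 4\sigma^2\sum_{k\ge m}\alpha_{i,k}^2\to0$ as $m\to\infty$. Passing to a subsequence of values of $m$ and using Borel--Cantelli, I would conclude $\sup_{n\ge m}|\Phi_{i,m,n}|\to0$ almost surely for each coordinate $i$; that is, the noise accumulated after a late time is uniformly small.

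For the final step I would fix $\varepsilon>0$ and choose a random time $n_0$ with $c_n\le\varepsilon$ and $\sup_{n\ge n_0}|\Phi_{i,n_0,n}|\le\varepsilon$ for all $i$ and $n\ge n_0$. Writing $X_{i,n}=Y_{i,n}+\Phi_{i,n_0,n}$ with $Y_{i,n_0}=X_{i,n_0}$ and $Y_{i,n+1}=(1-\alpha_{i,n})Y_{i,n}+\alpha_{i,n}g_{i,n}$, the mean bound gives, for $n\ge n_0$,
\[
|Y_{i,n+1}|\le(1-\alpha_{i,n})|Y_{i,n}|+\alpha_{i,n}\bigl(\kappa\|Y_n\|_\infty+(1+\kappa)\varepsilon\bigr).
\]
Since only the updated coordinate changes at each step, $\|Y_n\|_\infty$ is dominated by the scalar asynchronous recursion whose drift is $\kappa\cdot(\text{running maximum})+(1+\kappa)\varepsilon$, and that scalar recursion converges to $(1+\kappa)\varepsilon/(1-\kappa)$ because $\sum_n\alpha_{i,n}=\infty$ for every $i$ --- this is the standard contraction-comparison lemma for asynchronous stochastic approximation. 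Therefore $\limsup_n\|X_n\|_\infty\le(1+\kappa)\varepsilon/(1-\kappa)+\varepsilon$, and letting $\varepsilon\downarrow0$ yields $X_{i,n}\to0$ almost surely for every $i$, which is the assertion.
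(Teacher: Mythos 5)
First, a point of reference: the paper does not prove this lemma at all --- it is imported verbatim as Lemma 1 of \cite{singh2000} (see also \cite{bertsekasbook}, Proposition 4.5) and used as a black box. So your proposal can only be measured against the classical proofs, and its skeleton is indeed the classical one: split $F_n$ into the conditional mean $g_n$ and a martingale-difference noise $w_n$, prove almost sure boundedness of the iterates, show the accumulated noise after a late time is uniformly small, and finish with the asynchronous contraction comparison giving $\limsup_n\|X_n\|_\infty\le(1+\kappa)\varepsilon/(1-\kappa)+\varepsilon$. That final comparison step, as you set it up, is fine.

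However, two of your intermediate steps have genuine gaps as written. (i) The noise envelope $\Phi_{i,m,\cdot}$ defined by $\Phi_{i,m,n+1}=(1-\alpha_{i,n})\Phi_{i,m,n}+\alpha_{i,n}w_{i,n}$ is \emph{not} a martingale in $n$: $\mathbb{E}[\Phi_{i,m,n+1}\mid\mathcal{F}_n]=(1-\alpha_{i,n})\Phi_{i,m,n}$, so Doob's maximal inequality does not apply to it directly. Moreover, the unconditional bound $\mathbb{E}[\Phi_{i,m,n}^2]\le\sigma^2\sum_{k\ge m}\alpha_{i,k}^2$ is not legitimate in this setting: the step sizes are random and only almost surely square-summable, and your $\sigma^2$ involves $\sup_n\|X_n\|_\infty$, which after the boundedness step is only an almost sure \emph{random} bound, not an $\mathcal{F}_n$-measurable deterministic constant; none of these quantities need be integrable. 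The standard repair is to localize (stop when $\max_{k\le n}\|X_k\|_\infty$ or $\sum_{k\le n}\alpha_{i,k}^2$ exceeds a level, then take a union over levels) and then either apply the Robbins--Siegmund argument to $\Phi_{i,m,n}^2$, or work with the genuine martingale $M_n=\sum_{k=m}^{n-1}\alpha_{i,k}w_{i,k}$, or use the identity $\Phi_{i,m,n}=W_n-W_m\prod_{j=m}^{n-1}(1-\alpha_{i,j})$ with $W$ the noise recursion started at time $0$, from which $\sup_{n\ge m}|\Phi_{i,m,n}|\le\sup_{n\ge m}|W_n|+|W_m|\to0$ once $W_n\to0$ a.s. is established. (ii) The almost sure boundedness of $(X_n)$, which you correctly single out as the crux, is in your write-up essentially asserted: the phrase ``a self-bounding estimate for $B_n$ that is incompatible with $B_n\uparrow\infty$'' is exactly the hard rescaling/epoch argument of \cite{Tsitsiklis1994} and the original source, and it does not follow in one line from the convergence of $\sum_k\alpha_{i,k}w_{i,k}/B_k$; one has to show that once the noise relative to the running scale is small, the scale can grow by at most a fixed factor per epoch while the drift contracts by $\kappa<1$, so the scale stabilizes. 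So the route is the right one, but as it stands the noise-control and boundedness steps are not proofs yet.
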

Furthermore, we follow \cite{rowland18} by first showing the convergence of the mean-values to $Q^*$ and afterwards showing convergence of the return distribution functions, under the assumption of a unique optimal policy, by coupling it with policy evaluation. The convergence of the latter is easier to prove and we will do so at the end.
\begin{lemma}[Adaptive Double Categorical Temporal Difference for Policy Evaluation]\label{lem: double categorical policy evaluation}
    Given some initial return distribution functions $\eta_0^{A},\eta^{B}_0$ supported within $[\theta_1,\theta_m]$ and a stationary policy $\pi\in\Pi_S$, the return distribution functions $(\eta_t^{A}),(\eta^{B}_t)$ recursively defined by Algorithm \ref{alg:CDRL}, but with $a^*\sim\pi(\cdot;S_{t+1})$ instead,
    converge almost surely towards the unique fixed point $\eta_C\in\cP(\R)^{\cS\times\cA}$ of the operator $\Pi_C\cT^\pi$ with respect to $\bar\ell_2$, if the following conditions are satisfied:
\begin{enumerate}
	\item the step sizes $\alpha_t(s,a)$ fulfill the Robbins-Monro conditions:
	\begin{itemize}
		\item[$\bullet$] $\sum_{t=0}^{\infty}\alpha_t(s,a)=\infty$
		\item[$\bullet$]$\sum_{t=0}^{\infty}\alpha_t^{2}(s,a)<\infty,$
	\end{itemize}
	\item rewards are bounded in $[R_{min},R_{max}]$ and $[\frac{R_{min}}{1-\gamma},\frac{R_{max}}{1-\gamma}]\subseteq[\theta_1,\theta_m],$
    \item the choice of updating $\eta^A$ or $\eta^B$ is random and independent of all other previous random variables 
\end{enumerate}
\end{lemma}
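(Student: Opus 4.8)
The plan is to view the recursion defining $(\eta_t^{A}),(\eta_t^{B})$ as a single stochastic‑approximation scheme on a finite–dimensional space and to apply Lemma~\ref{lem:stochastic approximation lemma} in the vector‑valued form used in \cite{rowland18,bertsekasbook}. Following \cite{rowland18}, encode a categorical return distribution function by the vector of its CDF values at the atoms $\theta_1,\dots,\theta_{m-1}$, let $X_t$ collect the deviations of the encodings of $\eta_t^{A}$ and $\eta_t^{B}$ from that of $\eta_C$, and measure everything in the Cram\'er metric, so that the claim becomes $\delta_t:=\max\{\bar\ell_2(\eta_t^{A},\eta_C),\bar\ell_2(\eta_t^{B},\eta_C)\}\to 0$ a.s., with $\delta_t$ playing the role of $\|X_t\|$ in that lemma.

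First I would record three elementary facts. \emph{(i) Invariance.} $\Pi_C$ maps any measure into $\cF_{C,m}$, so $\cF_{C,m}^{\cS\times\cA}$ is closed under the update; by condition~2 the atoms cover $[\tfrac{R_{min}}{1-\gamma},\tfrac{R_{max}}{1-\gamma}]$, and by the $\sqrt\gamma$‑contractivity of $\Pi_C\cT^\pi$ recalled above, $\eta_C$ is well defined as its unique fixed point. \emph{(ii) Linearity.} Both $\mu\mapsto\Pi_C\mu$ and $\mu\mapsto b_{r,\gamma}\#\mu$ are affine‑linear on signed measures, so for the pair $(s,a)$ updated at time $t$,
\begin{align*}
\E\big[\Pi_C(b_{R_t,\gamma}\#\nu_t)\mid\cF_t\big]&=\big(\Pi_C\cT^\pi\zeta_t\big)(s,a),\\
\zeta_t(s',a')&:=(1-\beta_{t+1}(s',a'))\,\eta_t^{B}(s',a')+\beta_{t+1}(s',a')\,\eta_t^{A}(s',a'),
\end{align*}
the conditional expectation averaging over $R_t$, $S_{t+1}$, $a^{*}\sim\pi(\cdot;S_{t+1})$ and the $\cF_{t+1}$‑measurable weight $\beta_{t+1}$. \emph{(iii) Mixture bound.} Since the CDF of a mixture is the corresponding convex combination of CDFs, $\bar\ell_2(\zeta_t,\eta_C)\le\max\{\bar\ell_2(\eta_t^{A},\eta_C),\bar\ell_2(\eta_t^{B},\eta_C)\}=\delta_t$. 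Combining (ii), (iii), $\eta_C=\Pi_C\cT^\pi\eta_C$ and the contraction property yields the drift estimate $\ell_2\big(\E[\,\text{target}\mid\cF_t\,],\eta_C(s,a)\big)\le\sqrt\gamma\,\delta_t$ for every coordinate, i.e.\ the hypothesis of Lemma~\ref{lem:stochastic approximation lemma} with $\kappa=\sqrt\gamma<1$ and \emph{vanishing} extra term $c_t\equiv 0$. This last point is exactly why the conditions on $\beta^{A},\beta^{B}$ from Theorem~\ref{thm:double categorical control} are not needed here: no bias term survives when the bootstrapping action is drawn from the fixed policy rather than greedily.

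The remaining hypotheses are routine. Every target lies in the compact set $\cF_{C,m}$ with CDF values in $[0,1]$, so each coordinate of $F_t$ has conditional variance bounded by an absolute constant, hence by $K(1+\sqrt\gamma\,\delta_t)^2$. The effective step size of the coordinate attached to $(\eta^{A},s,a)$ at time $t$ is $\alpha_t(s,a)\,\mathbf 1\{A\text{ is chosen at }t\}\,\mathbf 1\{(S_t,A_t)=(s,a)\}$; by condition~3 the update‑choice variables are independent of the past with both outcomes of probability bounded away from $0$, and by condition~1 each $(s,a)$ is updated infinitely often with Robbins--Monro schedule, so an extended Borel--Cantelli argument as in \cite{hasselt2010,singh2000} shows that every coordinate still satisfies $\sum_t\alpha=\infty$ and $\sum_t\alpha^2<\infty$ almost surely. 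Lemma~\ref{lem:stochastic approximation lemma} then gives $\delta_t\to 0$ a.s., i.e.\ $\eta_t^{A}(s,a),\eta_t^{B}(s,a)\to\eta_C(s,a)$ in $\bar\ell_2$ for all $(s,a)$, with uniqueness of $\eta_C$ supplied by Banach's fixed point theorem.

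The main obstacle is the choice of ambient norm rather than any deep step: on the CDF‑value coordinates $\Pi_C\cT^\pi$ is only non‑expansive in $\ell_\infty$ (the factor $\sqrt\gamma$ being an $L^2$ effect), so Lemma~\ref{lem:stochastic approximation lemma} has to be applied with the Cram\'er norm, i.e.\ in the vector‑valued / weighted‑norm version as in \cite{rowland18,bertsekasbook}, not the coordinatewise maximum norm. Once that is in place, the only thing specific to the present setting is bookkeeping: the double structure enters the dynamics exclusively through the convex combination $\zeta_t$, whose Cram\'er error is dominated by $\delta_t$ \emph{uniformly} over the adapted, possibly distinct weights $\beta_{t+1}$, which is precisely what makes the drift contract with factor $\sqrt\gamma$ and no residual term; the invariance, linearity, boundedness and Borel--Cantelli ingredients are all standard.
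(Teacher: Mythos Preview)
Your overall strategy---reduce to Lemma~\ref{lem:stochastic approximation lemma} via the $\sqrt\gamma$-contraction of $\Pi_C\cT^\pi$ together with a convexity bound on the mixture $\zeta_t$---is the right one, and points (i)--(iii) are fine. But the paragraph you flag as ``the main obstacle'' is a genuine gap, and the fix you propose does not exist in the references you cite.

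Lemma~\ref{lem:stochastic approximation lemma} is stated for the coordinatewise $\|\cdot\|_\infty$ norm only, and neither \cite{rowland18} nor the relevant result in \cite{bertsekasbook} supply a version in which $\|\cdot\|_\infty$ may be replaced by an $\ell_2$-type (Cram\'er) norm; the Bertsekas--Tsitsiklis statement is for weighted \emph{sup}-norms. If you encode $\eta_t^{A/B}$ by their CDF values and take $X_t$ to be the deviations from $\eta_C$, then $\|X_t\|_\infty$ is a supremum of individual CDF increments, a norm in which $\Pi_C\cT^\pi$ is merely non-expansive, so the hypothesis $\|\E[F_t\mid\cF_t]\|_\infty\le\kappa\|X_t\|_\infty$ with $\kappa<1$ is not available. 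Passing to a ``Cram\'er-norm version'' of the lemma would amount to proving a separate stochastic-approximation result.

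The paper avoids this by taking the \emph{squared} Cram\'er distances as the coordinates: $X_t^A(s,a)=\ell_2^2(\eta_t^A(s,a),\eta_C(s,a))$ and likewise for $B$, so that $\|X_t\|_\infty=\max_{z\in\{A,B\}}\bar\ell_2^2(\eta_t^z,\eta_C)$. Since $\ell_2$ comes from a Hilbert inner product one expands
\[
\ell_2^2\big((1-\alpha)\eta+\alpha\hat\eta,\eta_C\big)=(1-\alpha)^2\|\eta-\eta_C\|_{\ell_2}^2+\alpha^2\|\hat\eta-\eta_C\|_{\ell_2}^2+2(1-\alpha)\alpha\langle\eta-\eta_C,\hat\eta-\eta_C\rangle_{\ell_2},
\]
which is a recursion of the required form $X_{t+1}=(1-\zeta_t)X_t+\zeta_t F_t$ with modified step size $\zeta_t=2\tilde\alpha_t-\tilde\alpha_t^2$ (still Robbins--Monro). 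The $\sqrt\gamma$-contraction now enters through Cauchy--Schwarz on the cross term and yields $|\E[F_t\mid\cF_t]|\le\sqrt\gamma\,\|X_t\|_\infty+c_t$ \emph{directly in the sup-norm} that Lemma~\ref{lem:stochastic approximation lemma} demands, with a nonzero but vanishing $c_t$ coming from the $\alpha^2\|\hat\eta-\eta_C\|_{\ell_2}^2$ term---so your claim $c_t\equiv 0$ would also fail in the corrected argument. This squared-distance device is precisely what \cite{rowland18} do; it is not a weighted-norm variant of the lemma.
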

The above result is only relevant for the proof of Theorem \ref{thm:double categorical control}, as policy evaluation with a double estimator is not of interest. Note that convergence of categorical temporal difference for policy evaluation (in the single estimator case) has been proven in [\cite{rowland18} Theorem 2 mimicking \cite{Tsitsiklis1994} Theorem 2] and [\cite{drl} Theorem 6.12 applying \cite{Tsitsiklis1994} Theorem 3 or \cite{bertsekasbook} Proposition 4.5]. 
\begin{lemma}\label{lem:RM}
    Let $(\alpha_t)_{t\in\N_0}$ be a sequence fulfilling the Robbins-Monro conditions and $(Y_t)_{t\in\N}$ an $iid$ sequence of Bernoulli(0.5) random variables, i.e. $\mathbb P(Y_t=1)=\mathbb P(Y_t=0)=0.5$ for all $t\in\N_0$.
    Then $(\alpha_tY_t)_{t\in\N_0}$ also fulfills the Robbins-Monro condition.
\end{lemma}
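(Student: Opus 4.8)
The plan is to check the two Robbins--Monro conditions for the thinned sequence $(\alpha_tY_t)_{t\in\N_0}$ separately; the square--summability part is essentially free, and only the divergence $\sum_t\alpha_tY_t=\infty$ requires an argument. First I would note that $Y_t\in\{0,1\}$ forces $Y_t^2=Y_t\le 1$, so that
\[
\sum_{t}(\alpha_tY_t)^2\;\le\;\sum_t\alpha_t^2\;<\;\infty
\]
almost surely, directly from the assumption on $(\alpha_t)$; this step is purely deterministic.

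For $\sum_t\alpha_tY_t=\infty$ the key idea is to center the coin flips: write $\alpha_tY_t=\tfrac12\alpha_t+\alpha_t\bigl(Y_t-\tfrac12\bigr)$ and show that the fluctuation part $\sum_t\alpha_t(Y_t-\tfrac12)$ converges to a finite limit almost surely. The summands $\alpha_t(Y_t-\tfrac12)$ are independent, mean zero, and satisfy $(Y_t-\tfrac12)^2=\tfrac14$, hence have variance $\tfrac14\alpha_t^2$ with $\sum_t\tfrac14\alpha_t^2<\infty$. Kolmogorov's two--series theorem (equivalently, the $L^2$--martingale convergence theorem applied to the $L^2$--bounded partial sums $N_n:=\sum_{t=1}^n\alpha_t(Y_t-\tfrac12)$, for which $\sup_n\E[N_n^2]=\tfrac14\sum_t\alpha_t^2<\infty$) then gives $N_n\to N_\infty$ a.s.\ with $N_\infty$ finite. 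Since $\sum_t\alpha_t=\infty$ while $N_n$ stays bounded,
\[
\sum_{t=1}^n\alpha_tY_t\;=\;\tfrac12\sum_{t=1}^n\alpha_t+N_n\;\longrightarrow\;\infty\qquad\text{a.s.},
\]
which is exactly the remaining Robbins--Monro condition.

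The only point needing care is the dependence structure of the step sizes $\alpha_t$, which in the applications are random (they are functions of the visit counts). If the $\alpha_t$ are deterministic the display above is literally Kolmogorov's theorem. If instead they are adapted to a filtration $(\mathcal F_t)$ with respect to which $Y_t$ is the freshly revealed, independent Bernoulli at step $t$, I would run the $L^2$--martingale argument conditionally, using $\E[\alpha_t(Y_t-\tfrac12)\mid\mathcal F_t]=0$ and $\E[\alpha_t^2(Y_t-\tfrac12)^2\mid\mathcal F_t]=\tfrac14\alpha_t^2$, so that the increment variances stay summable and the same conclusion holds. I expect this filtration bookkeeping, rather than any real analytic difficulty, to be the only subtle point; the heart of the proof is the one-line centering identity together with a classical a.s.\ convergence theorem.
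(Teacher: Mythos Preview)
Your proof is correct and takes a genuinely different route from the paper's. You center the Bernoullis, writing $\alpha_tY_t=\tfrac12\alpha_t+\alpha_t(Y_t-\tfrac12)$, and control the fluctuation term via Kolmogorov's two--series theorem (or the $L^2$--martingale convergence theorem in the adapted case), so that $\sum\alpha_tY_t$ inherits the divergence of $\tfrac12\sum\alpha_t$. The paper instead argues by symmetry: for a fixed realization of $(\alpha_t)$ with $\sum\alpha_t=\infty$, the two complementary sums $A=\sum\alpha_tY_t$ and $B=\sum\alpha_t\mathbf{1}_{\{Y_t=0\}}$ are identically distributed and satisfy $A+B=\infty$ deterministically, so $\mathbb P(A=\infty)>0$; the Kolmogorov $0$--$1$ law then upgrades this to $\mathbb P(A=\infty)=1$. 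Randomness in $(\alpha_t)$ is handled by a product--measure (Fubini) argument, assuming independence of $(\alpha_t)$ and $(Y_t)$.

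Your approach is more quantitative (it shows $\sum_{t\le n}\alpha_tY_t\sim\tfrac12\sum_{t\le n}\alpha_t$) and, via the conditional martingale version, copes directly with adapted step sizes without requiring full independence of $(\alpha_t)$ from the coin flips. The paper's argument is slightly more elementary in that it avoids any variance computation, relying only on distributional symmetry and a tail--event argument. One small bookkeeping point: in your adapted version you write $\E[\alpha_t(Y_t-\tfrac12)\mid\cF_t]=0$, which needs $\alpha_t\in\cF_t$ while $Y_t$ is independent of $\cF_t$; in the paper's indexing this corresponds to $\alpha_t$ being $\cF_t$--measurable and the relevant coin being $Y_{t+1}$, so you should align the indices accordingly when invoking martingale convergence.
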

\begin{proof}
    The almost sure convergence of the summed squares is obviously fulfilled due to 
    \[\sum_{t=0}^\infty(\alpha_tY_t)^2\leq\sum_{t=0}^\infty \alpha_t^2<\infty\quad \text{almost surely.}\]
    Due to independence of each $Y_t$ with $\{Y_n|n\in\N_0,\ n\neq t\}$ as well as with $\alpha=(\alpha_t)_{t=0}^\infty$ we will consider a two stage experiment, where we first draw the sequence $\alpha=(\alpha_t)_{t=0}^\infty$ and then independently of this realization sample the $iid$ sequence $Y=(Y_t)_{t=0}^\infty$. Due to the independence the joint measure of $\alpha$ and $Y$ is the product measure. Consider the product space $(\Omega,\cF,\mathbb P)=(\Omega_\alpha\times\Omega_Y,\cF_\alpha\otimes\cF_Y,\mathbb P^{\otimes\N}_\alpha\otimes\mathbb P^{\otimes\N}_Y)$ where $\Omega_\alpha,\Omega_Y=[0,1]^\N$, $\cF_\alpha,\cF_Y=\cB([0,1])^{\otimes \N}$ . Then, using that $\sum_{t=0}^\infty\alpha_t=\infty$ $\mathbb P_\alpha$-almost surely, we have
    \begin{align*}   
    \mathbb P\Big(\sum_{t=0}^\infty\alpha_tY_t=\infty\Big)
    =&\int_{\Omega_\alpha}
    \mathbb P_Y\Big(\sum_{t=0}^\infty\alpha_tY_t=\infty\Big)
    d\mathbb P_\alpha(\alpha)\\
            =&\int_{\{(\alpha_t)_{t=0}^\infty\in \Omega_\alpha:\sum_{t=0}^\infty\alpha_t=\infty\}}\mathbb P_Y\Big(\sum_{t=0}^\infty\alpha_tY_t=\infty\Big)d\mathbb P_\alpha(\alpha)\\
            \overset{(a)}{=}&\int_{\{(\alpha_t)_{t=0}^\infty\in \Omega_\alpha:\sum_{t=0}^\infty\alpha_t=\infty\}}1\ d\mathbb P_\alpha(\alpha)\\
            =& 1,
    \end{align*} 
    where $(a)$ can be seen as follows. Consider any deterministic sequence $(b_t)\subseteq[0,1]$ fulfilling $\sum_{t=0}^\infty b_t=\infty$. Then
    \[\infty= \sum_{t=0}^\infty b_t=\sum_{t=0}^\infty b_tY_t+\sum_{t=0}^\infty b_t\textbf{1}_{Y_t=0}.\]
    Now notice that $A=\sum_{t=0}^\infty b_tY_t$ and $B=\sum_{t=0}^\infty b_t\textbf{1}_{Y_t=0}$ are identically distributed and since the sum of $A$ and $B$ is always infinity, almost surely either one of them is infinite. Given the identical distribution, we infer
    \[\mathbb P_Y(\sum_{t=0}^\infty b_tY_t=\infty)>0.\]
    But since $(b_tY_t)$ is an independent sequence of random variables and the event that the infinite sum diverges is in the tail sigma algebra, the Kolmogorov 0-1 law yields:
    \[\mathbb P_Y(\sum_{t=0}^\infty b_tY_t=\infty)=1.\]
\end{proof}
\begin{remark}
    As outlined in \cite{rowland18}, proof of Proposition 1, denoting by $\cM(\R)$ the space of all finite signed measures on $(\R,\cB(\R)),$ the subspace 
    \[\cM_0(\R):=\{\nu\in\cM(\R)|\nu(\R)=0,\ \int_\R F_\nu (x)^2dx<\infty \},\]
    "where $F_\nu (x)=\nu([-\infty,x))$ for $x\in\R$, is isometrically isomorphic to a subspace of the Hilbert space $L^2(\R)$ with inner product given by 
    \[\langle\nu_1,\nu_2\rangle_{\ell_2}=\int_\R F_{\nu_1}(x)F_{\nu_2}(x)dx."\]
    Then the affine translation $\delta_0+\cM_0$ is also Hilbert space endowed with the same inner product. It contains probability measures $\nu\in\cP(\R)$ satisfying
    \[\int_{-\infty}^{0} F_{\nu}(x)^2 \, dx < \infty \quad \text{and} \quad \int_{0}^{\infty} (1 - F_{\nu}(x))^2 \, dx < \infty.\]
    To see this, consider $\mu= \nu - \delta_0$ fulfills $F_\mu(x)=F_\nu(x)$ for $x<0$ and $F_\mu(x)=F_\nu(x)-1$ for $x\geq 1$. Hence, $\mu\in\cM_0.$ The two conditions assure that the tails decay fast enough.\\
    Note that the inner product induces a norm through $\|\nu\|^2_{\ell_2}=\langle\nu,\nu\rangle.$ And we have $\ell_2(\nu_1,\nu_2)=\|\nu_1-\nu_2\|_{\ell_2}$. In the following proof, we will make use of the relationship
    \[\ell_2^2(\nu_1+\nu_2,\nu'_1+\nu'_2)=\|\nu_1-\nu'_1\|^2_{\ell_2}+\|\nu_2-\nu'_2\|^2_{\ell_2}+2\langle\nu_1-\nu'_1,\nu_2-\nu'_2\rangle\]
    holding by bilinearity of the inner product.
\end{remark}
\begin{proof}[Proof of Thoerem \ref{thm:double categorical control}]
\textbf{Step 1: Convergence of mean values to $Q^*$}\\
The proof mainly follows \cite{rowland18} and \cite{hasselt2010}. Let the filtration be given by $\cF_t=\sigma(\eta^A_0,\eta^B_0,s_0,a_0,\alpha_0,R_0,S_1,Y_1,\beta^A_1,\beta^B_1\dots,s_t,a_t,\alpha_t),$ where $(Y_{n})_{n\in\N}$ is an iid sequence of \textit{Bernoulli(0.5)} random variables, independent of all other appearing random variables, such that A is updated when $Y_{n+1}=1$. Denote the expected values of the return-distributions by $Q^{A}_t(s,a)=\E_{R\sim\eta_t^A(s,a)}[R]$ and overloading notation, let us further write $\E[\nu]$ for the expected value $\E_{R\sim\nu}[R]$ of a probability distribution $\nu\in\cP(\R)$. We will first consider how the expected values evolve. Due to the symmetry of the updates it is sufficient to show convergence of $Q^A_t$ to $Q^*$. It is implied that $\alpha(s,a)=0$ for $(s,a)\neq(s_t,a_t)$.
Further, define
    \begin{align*}
        X_t(s_t,a_t) &:= Q^A_t(s_t,a_t)-Q^*(s_t,a_t)\\
        F_t(s_t,a_t) &:= \textbf{1}_{Y_{t+1}=1}\Big(R_t + \gamma(\beta^A_{t+1}Q_t^A(S_{t+1},a^*)+(1-\beta^A_{t+1})Q_t^B(S_{t+1},a^*)) -Q^*(s_t,a_t)\Big)\\
        &\mspace{36mu}+ \textbf{1}_{Y_{t+1}=0}X_t(s_t,a_t) \\
        F_t(s,a) &:= 0 \ \text{whenever } (s,a)\neq(s_t,a_t)
    \end{align*}
    with $a^*=\argmax_{a'\in\cA_{S_{t+1}}}Q^A(S_{t+1},a').$ According to [\cite{lyle19} Proposition 1] projection $\Pi_C$ is mean-preserving, i.e $\E[\Pi_C\nu]=\E[\nu]$ for when $\nu$ is a distribution supported within $[\theta_1,\theta_m]$. This is the case for every $\hat\eta_*$ as in Algorithm \ref{alg:CDRL}, which can be seen as following. Assume $\eta_t^A(s_t,a_t),\eta_t^B(s_t,a_t)\in\cF_{C,m}$. Then also 
    \[\nu= \beta^A_{t+1}\eta_t^A(S_{t+1},a^*)+(1-\beta^A_{t+1})\eta_t^B(S_{t+1},a^*))\in\cF_{C,m},\]
    and suppose $\nu=\sum_{i=1}^mp_i\delta_{\theta_i}$ for some $p_i$. Then
    \[\hat\eta_*:=b_{R_t,\gamma}\#\nu = \sum_{i=1}^mp_i\delta_{R_t+\gamma\theta_i}.\]
    But now
    \[\theta_1\leq \frac{R_{min}}{1-\gamma}\leq \frac{R_{min}}{1-\gamma}\leq \theta_m\]
    (Assumption $(ii)$) guarantees that
    \[\theta_1\leq R_t+\gamma\theta_i\leq \theta_m\quad \forall\ i\in\{1,\dots,m\}\]
    and $\hat\eta_*$ is supported within $[\theta_1,\theta_m]$.
    Similarly for a realized transition with $(R_t,S_{t+1})=(r_t,s_{t+1})$, we have for the expected value of the distribution
   \begin{align*}
        &\E\left[b_{r_t, \gamma} \#\Big(\beta^A_{t+1}\eta_t^A(s_{t+1},a^*+(1-\beta^A_{t+1})\eta_t^B(s_{t+1},a^*))\Big)\right]\\
        =&r_t + \gamma(\beta^A_{t+1}Q_t^A(s_{t+1},a^*)+(1-\beta^A_{t+1})Q_t^B(s_{t+1},a^*)).
   \end{align*}
   Hence, the expected values of the return distributions $\eta_t^A$ subtracted by $Q^*$ indeed evolve as 
   \[X_{t+1}(s,a)=(1-\alpha_t(s,a))X_t(s,a)+\alpha_t(s,a)F_t(s,a).\]
   We now proceed similarly as in \cite{hasselt2010} to show that the conditions of Lemma \ref{lem:stochastic approximation lemma} are satisfied. \\
   We first show that $\V[F_t(s,a)|\cF_t]$ is bounded for all $(s,a)\in\cS\times\cA$ and therefore satisfies $\V[F_t(s,a)|\cF_t]\leq K(1+\kappa\|X_t\|_\infty)$ as required. Since the rewards were assumed to be bounded there is an $\bar R>0$ such that $|r|,|\theta_1|,|\theta_m|\leq\bar R\ \forall r\in\cR.$ Hence, we have
   \begin{align*}
   |F_t(s_t,a_t)|\leq&|R_t + \gamma(\beta^A_{t+1}Q_t^A(S_{t+1},a^*)+(1-\beta^A_{t+1})Q_t^B(S_{t+1},a^*))-Q^*(s_t,a_t)|\\
   +&|X_t(s_t,a_t)|\\
   \leq& \bar R + 3\bar R + 2\frac{\bar R}{1-\gamma}.
   \end{align*}
   Next, we need to show that $\|\E[F_t\mid\cF_t]\|_\infty\leq \kappa \|X_t\|_\infty+c_n.$ Let us therefore decompose
 \begin{align*}
       F_t(s_t,a_t)=&\textbf{1}_{Y_{t+1}=1}\Big(F_t^Q(s_t,a_t)+\gamma(1-\beta_{t+1})(Q_t^B(S_{t+1},a^*)-Q^A_t(S_{t+1},a^*))\Big)\\
       +&\textbf{1}_{Y_{t+1}=0}\alpha_t(s_t,a_t)X_t(s_t,a_t)
 \end{align*}
   with $F_t^Q(s_t,a_t):=R_t+\gamma Q^A_t(S_{t+1},a^*)-Q^*(s_t,a_t).$ This yields
   \begin{align*}
       &|\E[\textbf{1}_{Y_{t+1}=1}F_t^Q(s_t,a_t)+\textbf{1}_{Y_{t+1}=0}\alpha_t(s_t,a_t)X_t(s_t,a_t)|\cF_t]|\\
       &=|\frac{1}{2}\E[R_t+\gamma Q^A_t(S_{t+1},a^*)]-Q^*(s_t,a_t)+\frac{1}{2}X_t(s_t,a_t)|\\
       &\leq|T^*Q^A(s_t,a_t)-T^*Q^*(s_t,a_t)|+|\frac{1}{2}X_t(s_t,a_t)|\\
       &\leq\gamma \|Q^A_t-Q^*\|_\infty+\frac{1}{2}\|X_t\|_\infty \\
       &=\underbrace{(\frac{1}{2}\gamma+\frac{1}{2})}_{<1}\|X_t\|_\infty,
   \end{align*}
   since the Bellman optimality operator is a $\gamma$-contraction. Subsequently, it only remains to show that 
   \[c_t:=|\E[\textbf{1}_{Y_{t+1}=1}\gamma(1-\beta^A_{t+1})(Q_t^B(S_{t+1},a^*)-Q^A_t(S_{t+1},a^*)|\cF_t]|\]
   goes to zero almost surely. This is immediate if we verify that
   \[X_t^{BA}(s,a):=Q^B_t(s,a)-Q^A_t(s,a)\]
   goes to zero almost surely for all $(s,a)\in\cS\times\cA$ which will be achieved by another application of Lemma \ref{lem:stochastic approximation lemma}.
   We infer that
   \begin{align*}
       &X_{n+1}^{BA}(s_n,a_n)\\
       =&X_{n}^{BA}(s_n,a_n)+\alpha_n(s_n,a_n)\biggl(\\
       &\textbf{1}_{Y_{n+1}=0}\Bigl(R_n+\gamma\bigl(\beta_{n+1}^BQ^B_n(S_{n+1},b^*)+(1-\beta^B_{n+1})Q^A_n(S_{n+1},b^*)\bigr)-Q^B_n(s_n,a_n)\Bigr) \\
       - &\textbf{1}_{Y_{n+1}=1}\Bigl(R_n+\gamma\bigl(\beta_{n+1}^AQ^A_n(S_{n+1},a^*)+(1-\beta^A_{n+1})Q^B_n(S_{n+1},a^*)\bigr)-Q^A_n(s_n,a_n)\Bigr)\\
       &\biggr)\\
       =&(1-\alpha_n(s_n,a_n))X_{n}^{BA}(s_n,a_n)+\alpha_n(s_n,a_n)\biggl(\\
       &\textbf{1}_{Y_{n+1}=0}\Bigl(R_n+\gamma\bigl(\beta_{n+1}^BQ^B_n(S_{n+1},b^*)+(1-\beta^B_{n+1})Q^A_n(S_{n+1},b^*)\bigr)\Bigr) \\
       - &\textbf{1}_{Y_{n+1}=1}\Bigl(R_n+\gamma\bigl(\beta_{n+1}^AQ^A_n(S_{n+1},a^*)+(1-\beta^A_{n+1})Q^B_n(S_{n+1},a^*)\bigr)\Bigr)\\
       +&\textbf{1}_{Y_{n+1}=1}Q^B_n(s_n,a_n)-\textbf{1}_{Y_{n+1}=0} Q^A_n(s_n,a_n)\biggr)\\
       =&(1-\alpha_n(s_n,a_n))X_{n}^{BA}(s_n,a_n)+\alpha_n(s_n,a_n)\tilde F_n(s_n,a_n),
   \end{align*}
       with 
    \begin{align*}
        \tilde F_n(s_n,a_n)=&\biggl(
       \textbf{1}_{Y_{n+1}=0}\Bigl(R_n+\gamma\bigl(\beta_{n+1}^BQ^B_n(S_{n+1},b^*)+(1-\beta^B_{n+1})Q^A_n(S_{n+1},b^*)\bigr)\Bigr) \\
       - &\textbf{1}_{Y_{n+1}=1}\Bigl(R_n+\gamma\bigl(\beta_{n+1}^AQ^A_n(S_{n+1},a^*)+(1-\beta^A_{n+1})Q^B_n(S_{n+1},a^*)\bigr)\Bigr)\\
       +&\textbf{1}_{Y_{n+1}=1}Q^B_n(s_n,a_n)-\textbf{1}_{Y_{n+1}=0} Q^A_n(s_n,a_n)\biggr).
    \end{align*}
    Now, using that $Q^B_n(s_n,a_n),Q^A_n(s_n,a_n),X_{n}^{BA}(s_n,a_n),\alpha_n(s_n,a_n)$ are $\cF_n$-measurable and $Y_{n+1}$ is independent of $\cF_n$, the conditional expectation satisfies
    \begin{align*}
        |\E[\tilde F_n(s_n,a_n)\mid\cF_n]|=&\frac{1}{2}\gamma|\E[\beta_{n+1}^BQ^B_n(S_{n+1},b^*)+(1-\beta^B_{n+1})Q^A_n(S_{n+1},b^*)\\
        -&\beta_{n+1}^AQ^A_n(S_{n+1},a^*)-(1-\beta^A_{n+1})Q^B_n(S_{n+1},a^*)|\cF_n]|\\
        +&\frac{1}{2}|Q^B_n(s_n,a_n)- Q^A_n(s_n,a_n)|\\
            \leq &\frac{1}{2} \gamma \Bigl(\left|\E[\beta^B_{n+1}(Q^B_n(S_{n+1},b^*)-Q^A_n(S_{n+1},a^*))|\cF_n]\right|\\
        +&\left|\E[(1-\beta^B_{n+1})(Q^A_n(S_{n+1},b^*)-Q^B_n(S_{n+1},a^*))|\cF_n]\right|\\
        +&\left|\E[(\beta_{n+1}^B-\beta_{n+1}^A)Q^A_n(S_{n+1},a^*)|\cF_n]\right|\\
        +&\left|\E[((1-\beta^B_{n+1})-(1-\beta^A_{n+1}))Q^B_n(S_{n+1},a^*)|\cF_n]\right|\Bigr)\\
        +&\frac{1}{2}\|X_n\|_\infty
    \end{align*}
    Now if it holds $\E[Q^B_n(S_{n+1},b^*)|\cF_n]\geq\E[Q^A_n(S_{n+1},a^*)|\cF_n]$, by definition of $a^*$ we have $Q^A_n(S_{n+1},a^*)=\max_{a\in\cA_{S_{n+1}}}Q^A_n(S_{n+1},a)\geq Q^A_n(S_{n+1},b^*)$ and therefore
    \begin{align*}
        \left|\E[Q^B_n(S_{n+1},b^*)-Q^A_n(S_{n+1},a^*)|\cF_n]\right|&=\E[Q^B_n(S_{n+1},b^*)-Q^A_n(S_{n+1},a^*)|\cF_n]\\
        &\leq\E[Q^B_n(S_{n+1},b^*)-Q^A_n(S_{n+1},b^*)|\cF_n]\leq\|X^{BA}_n\|_\infty.
    \end{align*}
    Analogously, if $\E[Q^B_n(S_{n+1},b^*)|\cF_n]<\E[Q^A_n(S_{n+1},a^*)|\cF_n]$, then we have by definition of $b^*$
    \begin{align*}
        \left|\E[Q^B_n(S_{n+1},b^*)-Q^A_n(S_{n+1},a^*)|\cF_n]\right|&=\E[Q^A_n(S_{n+1},a^*)-Q^B_n(S_{n+1},b^*)|\cF_n]\\
        &\leq\E[Q^A_n(S_{n+1},a^*)-Q^B_n(S_{n+1},a^*)|\cF_n]\leq\|X^{BA}_n\|_\infty.
    \end{align*}
    Similarly, by distinguishing cases, one shows that
    \[\left|\E[Q^A_n(S_{n+1},b^*)-Q^B_n(S_{n+1},a^*)|\cF_n]\right|\leq\|X^{BA}_n\|_\infty+\frac{1}{2}\|X^{BA}_n\|.\]
    Combining the above yields
    \begin{align*}
        \left|\E[\tilde F_n(s_n,a_n)\mid\cF_n]\right|&\leq\frac{1}{2}\gamma(\beta^B_{n+1}+(1-\beta^B_{n+1}))\|X^{BA}_n\|_\infty\\        &+\underbrace{\Big|\gamma\E[(\beta_{n+1}^B-\beta_{n+1}^A)\underbrace{Q^A_n(S_{n+1},a^*)}_{<\bar R<\infty}|\cF_n]\Big|+\Big|\gamma\E[((1-\beta^B_{n+1})-(1-\beta^A_{n+1}))\underbrace{Q^B_n(S_{n+1},a^*)}_{<\bar R<\infty}|\cF_n]\Big|}_{:=\tilde c_n  \to 0,\ \text{since } |\beta^A_n-\beta^B_n| \text{ converges to 0 for }n\to\infty\text{ due to }(iv)}.
    \end{align*}
    Hence, we invoke Lemma \ref{lem:stochastic approximation lemma} to obtain convergence of $X^{BA}_t$ and thus with another application of Lemma \ref{lem:stochastic approximation lemma}, $X_t(s,a)$ converges to zero which finally implies $Q^A_t(s,a)$ (and also $Q^B(s,a)$) converges to $Q^*(s,a)$ almost surely for every $(s,a)\in\cS\times\cA.$\\\\
    Since $\cS,\cA$ are finite, for every $\varepsilon>0$, there exists a random variable $N>0$ such that for all $t>N$, we have
    \[\max_{z\in\{A,B\}}\|Q_t^z-Q^*\|_\infty<\varepsilon\quad \text{almost surely.}\]
    \textbf{Step 2: Convergence of return distributions}\\
    Suppose the MDP has a unique optimal policy $\pi^*$. Now following \cite{rowland18}, we take $\varepsilon$ to be half the minimum action gap for the optimal action-value function $Q^*=Q^{\pi^*}$, i.e.
    \[\varepsilon=\frac{1}{2}\min_{s\in\cS}(Q^{\pi^*}(s,\pi^*(s)-\max_{a\neq}Q^{\pi^*}(s,a))\]
    which is greater than zero by assumption $(v).$
    Hence, denoting the action of the deterministic optimal policy in a certain state $s$ by $\pi^*(s)$, we get
    \[\max_aQ^A_t(s,a)=\max_aQ^B_t(s,a)=\pi^*(s)\]
    for all $t>N.$
    For some initial condition $\tilde \eta_0\in\cF_{C,m}^\cS$, let now $\tilde\eta_k$ be the iterates created by a double categorical policy evaluation algorithm for the optimal policy $\pi^*$, i.e.
    \begin{align*}
        \tilde\eta_{k+1}^A(s_k,a_k)=&(1-\textbf{1}_{Y_{k+1}=1}\alpha_k(s_k,a_k))\tilde\eta_{k}(s_k,a_k)\\
+&\textbf{1}_{Y_{k+1}=1}\alpha_k(s_k,a_k)\Pi_C\bigg(b_{R_k,\gamma}\#\Big(\beta^A_{k+1}\tilde\eta^A_k(S_{k+1},\pi^*(S_{k+1}))\\
    +&(1-\beta^A_{k+1})\tilde\eta^B_k(S_{k+1},\pi^*(S_{k+1}))\Big)\bigg)\\
        \tilde\eta_{k+1}^A(s,a)=&\tilde\eta^A_{k}(s,a)\ \text{ for }(s,a)\neq(s_k,a_k).&
    \end{align*}
    and analogously for $\tilde \eta^B.$ Note that the appearing $\Y_k,\alpha_k,\beta^A_k,\beta_k^B$ are chosen to be the same as in the control case above.
    Then $\tilde\eta^A,\tilde\eta^B$ converges almost surely to the unique fixed point $\eta^*_C$ of the projected operator $\Pi_C\cT^{\pi^*}$ with respect to $\bar\ell_2$ by Lemma \ref{lem: double categorical policy evaluation}.
    Similarly to \cite{rowland18}, we now proceed by a coupling argument. 
    Denote by $\pi_k^A,\pi^B_k$ any greedy selection rule with respect to $\eta_k^A$ and $\eta_k^B$ and $A_k=\{\pi_k^A=\pi^B_k=\pi^* \text{ for all }n\geq k\}$. Then $A_k\subseteq A_{k+1}$ and by the above explanation we have $\mathbb P(A_k)\nearrow 1$. Additionally, let $B$ be the event of probability 1 for which the (double) policy evaluation algorithm converges. Now on the event $B\cup A_k$, we have
    \[\bar \ell_2^2(\tilde\eta^A_n,\eta_C^*)\to 0\quad \text{and} \quad \bar \ell_2^2(\tilde\eta^B_n,\eta_C^*)\to 0.\]
    Then by the triangle inequality it suffices to show $\bar\ell_2(\eta^A_n,\tilde\eta^A_n)\to0$ and $\bar\ell_2(\eta^B_n,\tilde\eta^B_n)\to0$ on this event too, since then the Theorem follows by $\mathbb P(B\cup A_k)\nearrow 1$.\\
    To prove this we will again apply Lemma \ref{lem:stochastic approximation lemma}. This time with $d=2\cdot|\cS||\cA|$, where we identify 
    \[X_n := \begin{bmatrix} \ell^2_2(\eta^A_{n},\tilde\eta^A_{n}) \\ \ell^2_2(\eta^B_{n},\tilde\eta^B_{n}) \end{bmatrix}\in \R^{2|\cS||\cA|}.\]
    Additionally, we expand the filtration by $\tilde\cF_n=\sigma(\cF_n,Y_{n+1})$ and define $\tilde\alpha_n^A(s,a)=\alpha_n(s,a)\textbf{1}_{Y_{n+1}=1}$ and $\tilde\alpha_n^B(s,a)=\alpha_n(s,a)\textbf{1}_{Y_{n+1}=0}$. By Lemma \ref{lem:RM} these steps-size sequences still fulfill the Robbins-Monro conditions.\\
    Then, writing
    \begin{align*}
        \nu^A=&\beta^A_{n+1}\eta^A_n(S_{n+1},\pi^*(S_{n+1}))
    +(1-\beta^A_{n+1})\eta^B_n(S_{n+1},\pi^*(S_{n+1}))\\
    \tilde\nu^A=&\beta^A_{n+1}\tilde\eta^A_n(S_{n+1},\pi^*(S_{n+1}))
    +(1-\beta^A_{n+1})\tilde\eta^B_n(S_{n+1},\pi^*(S_{n+1}))
    \end{align*}
    for short, for $n\geq k$, on $A_k$ we have
    \begin{align*}
             &\ell^2_2(\eta^A_{n+1}(s_n,a_n),\tilde\eta^A_{n+1}(s_n,a_n))\\
             =&(1-\tilde\alpha^A_n(s_n,a_n))^2\|\eta^A_{n}(s_n,a_n)-\tilde\eta^A_{n}(s_n,a_n)\|_{\ell_2}^2\\
+&{\tilde\alpha^A_n}(s_n,a_n)^2\|\Pi_C(b_{R_n,\gamma}\#\nu^A)-\Pi_C(b_{R_n,\gamma}\#\tilde\nu^A)\|_{\ell_2}^2\\
    +&(1-\tilde\alpha^A_n(s_n,a_n))\tilde\alpha^A_n(s_n,a_n)2\langle\eta^A_{n}(s_n,a_n)-\tilde\eta^A_{n}(s_n,a_n),\Pi_C(b_{R_n,\gamma}\#\nu^A)-\Pi_C(b_{R_n,\gamma}\#\tilde\nu^A)\rangle_{\ell_2}.
    \end{align*}
    This can be rewritten in terms of Lemma \ref{lem:stochastic approximation lemma} as
    \[X_{n+1}^A(s_n,a_n)= (1-\zeta_n^A(s_n,a_n))X_{n}^A(s_n,a_n)+\zeta_n^A(s_n,a_n)F^A_n(s_n,a_n) \]
    with $ \zeta_n^A(s_n,a_n)=2\tilde\alpha^A_n(s_n,a_n)-\tilde\alpha^A_n(s_n,a_n)^2$ and
    \begin{align*}
    F^A_n(s_n,a_n)=&\frac{1}{\zeta_n^A(s_n,a_n)}({\tilde\alpha^A_n}(s_n,a_n)^2\|\Pi_C(b_{R_n,\gamma}\#\nu^A)-\Pi_C(b_{R_n,\gamma}\#\tilde\nu^A)\|_{\ell_2}^2\\
    +&(1-\tilde\alpha^A_n(s_n,a_n))\tilde\alpha^A_n(s_n,a_n)2\langle\eta^A_{n}(s_n,a_n)-\tilde\eta^A_{n}(s_n,a_n),\\
    &\Pi_C(b_{R_n,\gamma}\#\nu^A)-\Pi_C(b_{R_n,\gamma}\#\tilde\nu^A)\rangle_{\ell_2})
    \end{align*}
    and $F_n^A(s,a)=0$ if $(s,a)\neq(s_n,a_n).$ It is mentioned that $\zeta_n^A(s_n,a_n)>0.$
    Notice that,
    \begin{equation}\label{eq:zeta}
    \begin{aligned}
                \sum_{n=1}^\infty\zeta_n^A(s_n,a_n)&=\sum_{n=1}^\infty (2\tilde\alpha^A_n(s_n,a_n)-\tilde\alpha^A_n(s_n,a_n)^2)=\infty\quad a.s.\\
        \sum_{n=1}^\infty\zeta_n^A(s_n,a_n)^2&=\sum_{n=1}^\infty4 \tilde\alpha^A_n(s_n,a_n)^2 - 4 \tilde\alpha^A_n(s_n,a_n)^3 + \tilde\alpha^A_n(s_n,a_n)^2<\infty \quad a.s.
    \end{aligned}
    \end{equation}
    Finally we have
    \begin{align*}
        |F^A_n(s_n,a_n)|\leq&\frac{1}{\zeta_n^A(s_n,a_n)}({\tilde\alpha^A_n}(s_n,a_n)^2\gamma\bar\ell^2_2(\beta^A_{n+1}\eta^A_n
    +(1-\beta^A_{n+1})\eta^B_n,\beta^A_{n+1}\tilde\eta^A_n
    +(1-\beta^A_{n+1})\tilde\eta^B_n)\\
    +&(1-\tilde\alpha^A_n(s_n,a_n))\tilde\alpha^A_n(s_n,a_n)2\sqrt{\gamma}|\langle\eta^A_{n}-\tilde\eta^A_{n},\\
    &\beta^A_n\eta^A_n
    +(1-\beta^A_n)\eta^B_n-\beta^A_n\tilde\eta^A_n
    -(1-\beta^A_n)\tilde\eta^B_n\rangle_{\bar\ell_2}|)\\
    \leq&\frac{1}{\zeta_n^A(s_n,a_n)}({\tilde\alpha^A_n}(s_n,a_n)^2\gamma\max_{z\in\{A,B\}}\bar\ell^2_2(\eta^z_{n},\tilde\eta^z_{n})\\
    +&(1-\tilde\alpha^A_n(s_n,a_n))\tilde\alpha^A_n(s_n,a_n)2\sqrt{\gamma}\max_{z\in\{A,B\}}\bar\ell^2_2(\eta^z_{n},\tilde\eta^z_{n}))\\
    =&\frac{{\tilde\alpha^A_n}(s_n,a_n)^2\gamma
    +(1-\tilde\alpha^A_n(s_n,a_n))\tilde\alpha^A_n(s_n,a_n)2\sqrt{\gamma}}{2\tilde\alpha^A_n(s_n,a_n)-\tilde\alpha^A_n(s_n,a_n)^2}\max_{z\in\{A,B\}}\bar\ell^2_2(\eta^z_{n},\tilde\eta^z_{n})\\
    \leq&\frac{
    (2\tilde\alpha^A_n(s_n,a_n)-\tilde\alpha^A_n(s_n,a_n)^2)\sqrt{\gamma}}{2\tilde\alpha^A_n(s_n,a_n)-\tilde\alpha^A_n(s_n,a_n)^2}\max_{z\in\{A,B\}}\bar\ell^2_2(\eta^z_{n},\tilde\eta^z_{n})\\
  \leq& \sqrt{\gamma}\max_{z\in\{A,B\}}\bar\ell^2_2(\eta^z_{n},\tilde\eta^z_{n})=\sqrt{\gamma}\|X_n\|_\infty
    \end{align*}
    where we used regularity and 1/2-homogeneity of the $\ell_2$ metric as described in [\cite{drl} Section 4.6] as well as that $\Pi_C$ is a non-expansion in $\ell_2$ and
   \begin{align*}
        &|\langle u,\beta u +(1-\beta) v\rangle|=\beta\langle u,u\rangle+(1-\beta)|\langle u,v\rangle|\leq\beta \max(\|u\|^2,\|v\|^2)+(1-\beta)\|u\|\|v\|\\
        \leq&\max(\|u\|^2,\|v\|^2) 
   \end{align*}
    by the Cauchy-Schwarz inequality. Further, by the above the Variance also fulfills
    \begin{align*}
        \V[F^A_n(s_n,a_n)|\tilde\cF_n]=\E[F^A_n(s_n,a_n)^2|\cF_n]-\E[F^A_n(s_n,a_n)|\tilde\cF_n]^2\\
        \leq 2 (\sqrt{\gamma}\max_{z\in\{A,B\}}\bar\ell^2_2(\eta^z_{n},\tilde\eta^z_{n}))^2\\
        \leq2 \gamma \sup_{\eta,\eta\in\cF_{C,m}^\cS}\bar\ell^4_2(\eta,\eta')<\infty.
    \end{align*}
    Therefore, by Lemma \ref{lem:stochastic approximation lemma} we obtain convergence $\bar\ell_2(\eta^A_n,\tilde\eta^A_n)\to0$ and $\bar\ell_2(\eta^B_n,\tilde\eta^B_n)\to0$ on $A_k$. As already described above, this results in 
    \[\bar\ell_2(\eta^A_n,\eta_C^*)\to0\quad  \text{and}\quad \bar\ell_2(\eta^B_n,\eta_C^*)\to0\quad \text{almost surely.}\]
\end{proof}

\begin{proof}[Proof of Lemma \ref{lem: double categorical policy evaluation}]
Let the filtration be given by $\cF_t=\sigma(\eta^A_0,\eta^B_0,s_0,a_0,\alpha_0,R_0,S_1,Y_1,\beta^A_1,\beta^B_1\dots,s_t,a_t,\alpha_t,Y_{t+1}),$ where $(Y_{n})_{n\in\N}$ is an iid sequence of \textit{Bernoulli(0.5)} random variables, independent of all other appearing random variables, such that A is updated when $Y_{n+1}=1$.
To clarify, abbreviating
\begin{align*}
    \nu^A&=\beta_{t+1}^A\eta^A_{t}(S_{t+1},A_{t+1})+(1-\beta_{t+1}^A)\eta^B_{t}(S_{t+1},A_{t+1})\\
    \nu^B&=\beta_{t+1}^B\eta^B_{t}(S_{t+1},A_{t+1})+(1-\beta_{t+1}^B)\eta^A_{t}(S_{t+1},A_{t+1})\quad \text{where}\\
    A_{t+1}&\sim\pi(\cdot;S_{t+1}),
\end{align*}
we are confronted with the updates
    \begin{align*}
        \eta^A_{t+1}(s,a)&=\eta^A_{t}(s,a)+\alpha_t(s,a)\textbf{1}_{Y_{t+1}=1}(\Pi_C(b_{R_t,\gamma}\#\nu^A)-\eta^A_{t}(s,a))\\
        \eta^B_{t+1}(s,a)&=\eta^B_{t}(s,a)+\alpha_t(s,a)\textbf{1}_{Y_{t+1}=0}(\Pi_C(b_{R_t,\gamma}\#\nu^B)-\eta^B_{t}(s,a)).
    \end{align*}
 As in the proof above, define $\tilde\alpha_n^A(s,a)=\alpha_n(s,a)\textbf{1}_{Y_{n+1}=1}$ and $\tilde\alpha_n^B(s,a)=\alpha_n(s,a)\textbf{1}_{Y_{n+1}=0}$. By Lemma \ref{lem:RM} these steps-size sequences still fulfill the Robbins-Monro conditions. Also note that as in step 2 of the proof of Theorem \ref{thm:double categorical control}, $Y_{t+1}$ is $\cF_{t}$-measurable and hence so is $\tilde\alpha_t^{A/B}$.
In order to align this with Lemma \ref{lem:stochastic approximation lemma}, we rewrite
\begin{align*}
    &X^A_{t+1}(s,a)=\ell_2^2(\eta^A_{t+1}(s,a),\eta_C(s,a))\\
    =&(1-\tilde\alpha^A_t(s,a))^2\|\eta^A_{t}(s,a)-\eta_C(s,a)\|_{\ell_2}^2\\
+&{\tilde\alpha^A_t}(s,a)^2\|\Pi_C(b_{R_t,\gamma}\#\nu^A)-\eta_C(s,a)\|_{\ell_2}^2\\
    +&(1-\tilde\alpha^A_t(s,a))\tilde\alpha^A_t(s,a)2\langle\eta^A_{t}(s,a)-\eta_C(s,a),\Pi_C(b_{R_t,\gamma}\#\nu^A)-\eta_C(s,a)\rangle_{\ell_2}\\
    =&(1-\zeta_t^A(s,a))X_{t}^A(s,a)+\zeta_t^A(s,a)F^A_t(s,a) 
\end{align*}
    with $ \zeta_t^A(s,a)=2\tilde\alpha^A_t(s,a)-\tilde\alpha^A_t(s,a)^2$,
    \[X_t := \begin{bmatrix} \ell^2_2(\eta^A_{t},\eta_C) \\ \ell^2_2(\eta^B_{t},\eta_C) \end{bmatrix}\in \R^{2|\cS||\cA|}\]
    and
    \begin{align*}
    F^A_t(s,a)=&\frac{1}{\zeta_t^A(s,a)}\textbf{1}_{\tilde\alpha^A_t(s,a)>0}({\tilde\alpha^A_t}(s,a)^2\ell_2^2(\Pi_C(b_{R_t,\gamma}\#\nu^A),\eta_C(s,a))\\
    +&(1-\tilde\alpha^A_t(s,a))\tilde\alpha^A_t(s,a)2\langle\eta^A_{t}(s,a)-\eta_C(s,a),\Pi_C(b_{R_t,\gamma}\#\nu^A)-\eta_C(s,a)\rangle_{\ell_2}).
    \end{align*}
    As in Equation (\ref{eq:zeta}), the sequence $ \zeta_t^A(s,a)$ fulfills the Robbins-Monro condition. Additionally, note that there exists $K>0$, such that $\ell_2^2(\Pi_C(b_{R_t,\gamma}\#\nu^A),\eta_C(s,a))<K$ independent of $s,a,t$. Further, observe that
    \[c_t:=\max_{z\in\{A,B\}} \frac{1}{\zeta_t^z(s,a)}\textbf{1}_{\tilde\alpha^z_t(s,a)>0}{\tilde\alpha^z_t}(s,a)^2 K\to 0\ \text{for } t\to\infty\text{ almost surely}.\]
We use that $\Pi_C$ is mean-preserving [\cite{lyle19} Proposition 1] for discrete distributions supported within $[\theta_1,\theta_m]$, which is satisfied by $b_{R_t,\gamma}\#\nu^A$, due to Assumption $(ii)$ and $\nu^A\in\cF_m$. Together with the fact that $\Pi_C\cT^\pi$ is a $\sqrt{\gamma}$-contraction with respect to $\bar\ell_2$  and the Cauchy-Schwarz inequality, we have
\begin{align*}
    &|\E[\langle\eta^A_{t}(s,a)-\eta_C(s,a),\Pi_C(b_{R_t,\gamma}\#\nu^A)-\eta_C(s,a)\rangle_{\ell_2}|\cF_t]|\\
    =&|\langle\eta^A_{t}(s,a)-\eta_C(s,a),\E[\Pi_C(b_{R_t,\gamma}\#\nu^A)|\cF_t]-\eta_C(s,a)\rangle_{\ell_2}|\\
    =&|\langle\eta^A_{t}(s,a)-\eta_C(s,a),\E[b_{R_t,\gamma}\#\nu^A)|\cF_t]-\eta_C(s,a)\rangle_{\ell_2}|\\
    =&|\langle\eta^A_{t}(s,a)-\eta_C(s,a),\Pi_C\cT^\pi(\beta_{t+1}^A\eta^A_{t}+(1-\beta_{t+1}^A)\eta^B_{t})(s,a)-(\Pi_C\cT^\pi\eta_C)(s,a)\rangle_{\ell_2}|\\
    \leq&\sqrt{\gamma}|\langle\eta^A_{t}-\eta_C,(\beta_{t+1}^A\eta^A_{t}+(1-\beta_{t+1}^A)\eta^B_{t})-\eta_C\rangle_{\bar\ell_2}|\\
    \leq&\sqrt{\gamma}(\beta_{t+1}^A\bar\ell_2^2(\eta^A_{t},\eta_C)+(1-\beta_{t+1}^A)|\langle\eta^A_{t}-\eta_C,\eta^B_{t}-\eta_C\rangle_{\bar\ell_2}|)\\
    \leq&\sqrt{\gamma}(\beta_{t+1}^A\max_{z\in\{A,B\}}\bar\ell_2^2(\eta^z_{t},\eta_C)+(1-\beta_{t+1}^A)\|\eta^A_{t}-\eta_C\|_{\bar\ell_2}\|\eta^B_{t}-\eta_C\|_{\bar\ell_2}|)\\
    \leq&\sqrt{\gamma}\max_{z\in\{A,B\}}\bar\ell_2^2(\eta^z_{t},\eta_C)\\
    =&\sqrt{\gamma}\|X_t\|_\infty.
\end{align*}
    In total, this yields
    \begin{align*}
        &|\E[F^A_t(s,a)|\cF_t]|\\
        &\leq \frac{1}{\zeta_t^A(s,a)}\textbf{1}_{\tilde\alpha^A_t(s,a)>0}{\tilde\alpha^A_t}(s,a)^2 K + \frac{1}{\zeta_t^A(s,a)}\textbf{1}_{\tilde\alpha^A_t(s,a)>0}(1-\tilde\alpha^A_t(s,a))\tilde\alpha^A_t(s,a)2\sqrt{\gamma}\|X_t\|_\infty.\\
        &\leq c_t + \sqrt{\gamma}\|X_t\|_\infty.
    \end{align*}    
    Since $\bar\ell_2(\eta,\eta')<K$ for every $\eta,\eta'\in\cF_{C,m}^{\cS\times\cA}$ some $K>0$, the conditional variance $\V[F^A_t|\cF_t]$ can be bounded uniformly in $t$. \\
    Therefore, the requirements of Lemma \ref{lem:stochastic approximation lemma} are fulfilled, and its application yields $X_t^A(s,a)=\ell_2^2(\eta_t^A(s,a),\eta_C(s,a))\to0$ and $X_t^B(s,a)=\ell_2^2(\eta_t^B(s,a),\eta_C(s,a))\to0$. Hence, also $\eta_t^A,\eta_t^B$ converge to $\eta_C$ with respect to $\bar\ell_2$.
\end{proof}

\end{document}